\renewcommand{\cite}{\citep}
\newenvironment{mythm}[2][Theorem]{\begin{trivlist}
\item[\hskip \labelsep {\bfseries #1}\hskip \labelsep {\bfseries #2}]}{\end{trivlist}}
\def\R{{\mathbb{R}}}
\def\pr{{\rm Pr}}
\def\E{{\mathbb E}}
\def\Y{{\mathcal Y}}
\def\Z{{\mathcal Z}}
\def\A{{\mathcal A}}
\def\H{{\mathcal H}}
\def\HH{{\mathcal H}}
\def\F{{\mathcal F}}
\def\G{{\mathcal G}}
\def\D{{\mathcal D}}
\def\I{{\mathcal I}}
\def\N{{\mathcal N}}
\def\ind{ \mathbbm{1}}
\def\sign{\textnormal{sign}}
\def\avg{\textnormal{avg-diam}}
\DeclareMathOperator*{\argmax}{arg\,max}
\definecolor{darkgreen}{rgb}{0,0.5,0}
\definecolor{darkred}{rgb}{0.7,0,0}
\definecolor{teal}{rgb}{0.3,0.8,0.8}
\definecolor{orange}{rgb}{1.0,0.5,0.0}
\definecolor{purple}{rgb}{0.8,0.0,0.8}
\newcommand{\kibitz}[2]{\ifnum\Comments=1{\textcolor{#1}{\textsf{\footnotesize #2}}}\fi}
\newtheorem{thm}{Theorem}
\newtheorem{lemma}[thm]{Lemma}
\newtheorem{cor}[thm]{Corollary}
\newtheorem{defn}[thm]{Definition}
\newtheorem{assump}{Assumption}
\title{Diameter-based Interactive Structure Discovery}
\author[1]{Christopher Tosh\thanks{c.tosh@columbia.edu}}
\author[1]{Daniel Hsu\thanks{djhsu@cs.columbia.edu}}
\affil[1]{Columbia University, New York, NY}
\begin{document}

\maketitle

\begin{abstract}
We introduce \emph{interactive structure discovery}, a generic framework that encompasses many interactive learning settings, including active learning, top-$k$ item identification, interactive drug discovery, and others. We adapt a recently developed active learning algorithm of \citet{TD17} for interactive structure discovery, and show that the new algorithm can be made noise-tolerant and enjoys favorable query complexity bounds.
\end{abstract}
\section{Introduction}

Standard approaches to learning structures from data generally do not incorporate human interaction into the learning process. Typically, a data set is collected and labeled, if appropriate, and an algorithm is run to find the structure that best fits the data. \emph{Interactive structure learning}, by contrast, adaptively solicits feedback from a human, or other information source, during the structure learning process. The hope is that by incorporating interaction into the learning process, we may be able to learn higher quality structures with less data or lower computational costs.

Recently, there has been interest in designing algorithms for interactive structure learning. Some works~\citep{EzK17, TD18} have attacked this problem in broad generality, designing algorithms that are capable of interactively learning generic classes of structures. Others have designed structure-specific interactive learning algorithms in a variety of settings, including flat and hierarchical clustering~\citep{WC00, ABV14, AKB16, VD16}, topic modeling~\citep{HBSS14, LCSBg17}, and matrix completion~\citep{KS14}. In all of these works, the ultimate goal is to find the structure that a user has in mind, and the algorithms are designed around this objective.

However, users of interactive learning algorithms are not always primarily interested in obtaining high-quality estimates of a particular structure. In some settings, especially those where actions are to be taken based on what has been learned, the goal is to glean information on some aspect of a structure. In information retrieval, for example, knowing the correct ranking of a set of items is often less important than getting the ordering of the first few elements correct~\citep{MSE17, SW17}. 


In this work, we introduce \emph{interactive structure discovery}, a general framework that encompasses both traditional interactive structure learning and other scenarios that have objectives which deviate from the structure estimation problem. We also demonstrate that there is a natural, general-purpose algorithm for this setting, and we give guarantees on its consistency and convergence rates, even in the presence of noise. 


\subsection{Paper organization}

In Section~\ref{section: interactive structure discovery}, we introduce the problem of interactive structure discovery, and provide several examples illustrating the breadth of its potential applications. In Section~\ref{section: DBAL}, we introduce an algorithm, a generalization of the \textsc{dbal} algorithm~\citep{TD17}, for the interactive structure discovery problem. In Section~\ref{section: theoretical guarantees}, we show that this algorithm is consistent and enjoys fast rates of convergence under certain conditions. We also demonstrate nearly matching lower bounds. In Section~\ref{section: examples}, we provide concrete, worked examples of these theoretical guarantees. In particular, we illustrate the improvements that interactive structure discovery can offer over other schemes that focus purely on the standard structure estimation problem. We conclude in Section~\ref{section: experiments} with simulations demonstrating that the algorithms discussed here can be practically implemented and perform well on simulated data.

\section{Interactive structure discovery}
\label{section: interactive structure discovery}

{There are a variety of settings in which adaptively solicited interaction has been shown to decrease the statistical or computational resources required for a learning problem.} In active learning, for example, algorithms that are able to adaptively query data points for their labels are able to find low-error classifiers with fewer labels than learning algorithms presented with random labels~\citep{D05, BHW10, H11}. In adaptive matrix completion, learners that adaptively query the entries of some unknown low-rank matrix are able to reconstruct the matrix with fewer revealed entries than can be done with randomly sampled entries~\citep{KS14}. In clustering, soliciting constraints from a user or oracle can improve the quality of the clustering~\citep{VD16} and circumvent computational hardness results~\citep{AKB16}. 

The examples above can be thought of as structure estimation problems -- problems where the learner's objective is to estimate some ground-truth structure. However, there are also learning situations that can benefit from interaction but are not easily framed as structure estimation problems. In the top-$k$ item identification problem, a learner queries the relative preferences of a user over a set of $n$ items with the goal of finding the $k$ most preferred items. While this problem can be solved by estimating a user's entire preference ordering, algorithms designed specifically for the top-$k$ item identification problem can get away with fewer queries~\citep{MSE17}. Another interactive learning situation that is not so cleanly expressed as a structure estimation problem is the drug discovery problem~\citep{BCSVMKWLKS12, YSGELFBBST12}, which is much like the adaptive matrix completion problem except the goal is not to estimate the entire drug-cell interaction matrix, but rather it is to find a drug exhibiting certain properties.

In this section, we formalize the problem of \emph{interactive structure discovery} which generalizes all of the above interactive learning settings into a single framework. Later, we will present a natural algorithm that operates within this general framework.

\subsection{Structure decompositions}

Denote by $\G$ the space of structures under consideration, these could be, for example, binary classifiers, or clusterings of some fixed data set, or low rank $n \times p$ matrices. Following \citet{TD18}, we view each structure in $\G$ as a function from a set of atomic questions $\A$ to a set of responses $\Y$. As the following examples illustrate, this view admits a wide spectrum of admissible structures.

\begin{itemize}
	\item \textbf{Binary classifiers}. When $\G$ is a collection of classifiers, each atom $a \in \A$ corresponds to a data point and $\Y = \{0,1\}$.
	\item \textbf{Clusterings}. If $\G$ is a set of clusterings of a collection of $n$ items, then we may view $g \in \G$ as the function from $\A = {[n] \choose 2}$ to $\Y = \{0, 1\}$, where $g((i,j))$ is 1 if $i,j$ belong to the same cluster in $g$ and 0 otherwise.
	\item \textbf{Binary hierarchical clusterings}. If $\G$ is a set of binary hierarchies over $n$ items, then we may view $g \in \G$ as the function from $\A = {[n] \choose 3}$ to $\Y = \{0, 1, 2\}$, where
\[ g((i,j,k)) = 
\begin{cases} 
0 & \text{ if } i,j \text{ are clustered before } k \text{ in } g \\
1 & \text{ if } i,k \text{ are clustered before } j \text{ in } g \\
2 &  \text{ if } j,k \text{ are clustered before } i \text{ in } g
\end{cases} \]
	\item \textbf{Matrices}. If $\G$ is a set of $n \times p$ matrices, then $\A = [n] \times [p]$ and $\Y = \R$, and $g((i,j))$ is the $(i,j)$-th entry of the matrix corresponding to $g$.  
\end{itemize}

We will assume that there is some distribution $\D$ over $\A$. In the case of classifiers, $\D$ is the data distribution. For clusterings over a fixed collection of items or matrices of a fixed size, a reasonable choice for $\D$ would be the uniform distribution over $\A$.

\subsection{Structure distances}

We are interested in settings where the goal may not be to recover a particular structure but perhaps only to recover some aspect of that structure. We capture this objective in the form of a \emph{structure distance} $d: \G \times \G \rightarrow \R_{\geq 0}$, which we assume to be positive, symmetric, and satisfy $d(g,g) = 0$ for all $g \in \G$. In particular, we do not require this structure distance to satisfy the triangle inequality. If $g^* \in \G$ is a ground-truth structure, then our objective is to find a structure $g \in \G$ such that $d(g, g^*)$ is small. We illustrate the flexibility of this approach with some examples.
\begin{itemize}
	\item \textbf{Low-error classifiers}. If our objective is to find a classifier with low error, then we make take our distance to be
		\[ d(g, g') \ = \ \pr_{a \sim \D}(g(a) \neq g'(a)). \]
		A classifier $g$ satisfying $d(g, g^*) < \epsilon$ will have error less than $\epsilon$. For this reason, this is the standard classification distance used to learn low-error classifiers in active learning. More generally, this is a reasonable notion of distance if our goal is to learn a high quality structure~\citep{TD18}.
	\item \textbf{Fair classifiers}. A recently proposed notion of fairness, called equal opportunity~\citep{HPS16}, attempts to balance the number of true positives between individuals with a certain protected attribute and those without the protected attribute. If our goal is to find a classifier that approximately satisfies this notion of fairness while simultaneously achieving low error, then we may take $ d(g,g')$ to be
	\begin{align*}
	d(g,g') = & \max \{   \pr_{a \sim \D}(g(a) \neq g'(a)),  \\
	 \ & \ \ \lambda |\E_{a \sim \D_0}[g(a) | g'(a) =1]  - \E_{a \sim \D_1}[g(a)| g'(a) =1]|, \\
	 \ &  \ \ \lambda |\E_{a \sim \D_0}[g'(a) | g(a) =1]  - \E_{a \sim \D_1}[g'(a) | g(a) =1]| \}
	\end{align*}
	where $D_p$ denotes the distribution of a point conditioned on it having protected attribute value $p$ and $\lambda > 0$ is some weight of the relative importance of fairness. If we find a $g$ satisfying $d(g, g^*) < \epsilon$, then we know that the error of $g$ is at most $\epsilon$ and we violate equal opportunity by at most $\epsilon/\lambda$.
	\item \textbf{Cluster identification}. In certain clustering situations, there is some particular item of interest $i^*$, and our goal is to find the cluster to which $i^*$ belongs. In this case, we may take $d(g,g')$ to be
\[ d(g,g') \ = \ \max \left\{  \frac{|C(g, i^*) \setminus C(g', i^*)|}{|C(g, i^*)|}, \frac{|C(g', i^*) \setminus C(g, i^*)|}{|C(g', i^*)|} \right\} \]
	where $C(g,i) = \{ j \in [n] \, : \, g((i,j)) = 1 \}$ is the set of items in the same cluster as $i$ under $g$. If we find a $g$ satisfying $d(g, g^*) < \epsilon$, then we know that $C(g,i^*)$ is missing at most an $\epsilon$ fraction of the elements of $C(g^*,i^*)$ and at most an $\epsilon$ fraction of $C(g,i^*)$ is not included in $C(g^*,i^*)$.
	\item \textbf{Column selection}. If our goal is to find the best column of an $n \times p$ matrix as measured by some score function $s: \R^n \rightarrow \R$, then we may define our distance as
	\begin{align*}
	 d(g, g') \ = \ \max \{  s(g(\cdot, j_g)) - s(g(\cdot, j_{g'})),  s(g'(\cdot, j_{g'})) - s(g'(\cdot, j_{g})) \}
	\end{align*}
 	where $g(\cdot, j)$ denotes the $j$th column of $g$ and $j_g = \argmax_{j} s(g(\cdot, j))$. If we find a $g$ satisfying $d(g, g^*) < \epsilon$ and select column $j_g$, then the true score of $j_g$ is at most $\epsilon$ worse then the true score of the best column.
\end{itemize}

As the preceding examples show, the structure distance is a flexible way to encode objectives into the structure discovery problem. Throughout the remainder of the paper, we will assume that we have such a distance $d(\cdot, \cdot)$, that our objective is to find a $g \in \G$ satisfying $d(g,g^*) < \epsilon$ for some $\epsilon >0$, and that we can efficiently compute $d(g,g')$ for any two structures $g, g' \in \G$. We will also assume that $d(g,g') \leq 1$, which can be achieved with an appropriate normalization.

\section{Diameter-based structure discovery}
\label{section: DBAL}

Given a set of structures $\G$ and a suitable distance, how do we find a structure with low distance to the ground truth? One approach, which \citet{TD17} proposed for the realizable binary classification setting, is to try to find a distribution over $\G$ such that structures are close to $g^*$ on average. We take up their approach again here in our more general and potentially noisy setting.

Let $\pi$ be some probability measure over $\G$. Define the \emph{average diameter} of $\pi$ as
\[ \avg(\pi) \ = \ \E_{g, g' \sim \pi}[d(g, g')]. \]
The following result, due to~\citet{TD17}, shows that if one can find a distribution $\pi$ with low average diameter that puts sufficient mass on a target structure $g^*$, then one can readily find a structure with small distance to $g^*$ by random sampling.

\begin{lemma}
\label{lem: average diameter guarantee}
If $g^* \in \G$ and $\pi$ is a distribution over $\G$, then $\E_{g \sim \pi}[d(g, g^*)] \leq {\avg(\pi)}/{\pi(g^*)}.$
\end{lemma}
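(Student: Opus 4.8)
The plan is to bound $\avg(\pi)$ from below by discarding almost all of the mass in the double expectation, keeping only the contribution of pairs whose second coordinate lands on $g^*$. This is legitimate precisely because $d$ is non-negative, so every term we throw away is $\geq 0$.

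Concretely, I would draw $g, g' \sim \pi$ independently and write
\[
\avg(\pi) \;=\; \E_{g,g' \sim \pi}\bigl[d(g,g')\bigr] \;\geq\; \E_{g,g'\sim\pi}\bigl[d(g,g')\,\ind\{g' = g^*\}\bigr],
\]
the inequality being the only nontrivial step, justified by $d \geq 0$. On the event $\{g' = g^*\}$ the integrand equals $d(g,g^*)$, and since $g$ and $g'$ are independent the right-hand side factors as $\pr_{g'\sim\pi}(g' = g^*) \cdot \E_{g\sim\pi}[d(g,g^*)] = \pi(g^*)\,\E_{g\sim\pi}[d(g,g^*)]$. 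Rearranging gives $\E_{g\sim\pi}[d(g,g^*)] \leq \avg(\pi)/\pi(g^*)$, which is the claim. The degenerate case $\pi(g^*) = 0$ makes the right-hand side $+\infty$, so the bound holds vacuously and can be dispatched in one sentence.

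I do not anticipate any real obstacle here: the entire argument is a single application of non-negativity of $d$ together with independence of the two samples from $\pi$. The only point requiring a sentence of care is the measure-theoretic bookkeeping when $\G$ is not discrete — we simply need $\{g^*\}$ to be a measurable set with $\pi$-mass $\pi(g^*)$, which holds for a singleton — but nothing deeper (no structure of $d$ beyond $d \geq 0$, and in particular no triangle inequality) is used.
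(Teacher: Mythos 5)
Your argument is correct and is essentially the same one-line proof used for this lemma (the paper defers to \citet{TD17}, where the bound is obtained exactly by restricting the double expectation defining $\avg(\pi)$ to the event $\{g' = g^*\}$ using $d \geq 0$ and independence). Nothing further is needed.
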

Although Lemma~\ref{lem: average diameter guarantee} was originally stated for the case where $d(\cdot,\cdot)$ is the disagreement probability of two classifiers, it still holds in our setting.

Lemma~\ref{lem: average diameter guarantee} reduces the problem of finding a structure close to $g^*$ to that of finding a distribution $\pi$ with low average diameter, provided we can sample from it. Thus, we are interested in queries whose answers will help us find distributions with low average diameter. This motivates the concept of average splitting. 

For any subset $V \subset \G$, let $\pi|_V$ denote the conditional distribution of $\pi$ restricted to $V$. For a given atom $a \in \A$ and a possible response $y \in \Y$, let $\G_a^y = \{ g \in \G \, : \, g(a) = y \}$ denote the set of structures consistent with $y$ on atom $a$. For any $a \in \A$, we say that \emph{$a$ $\rho$-average splits} $\pi$ if
\begin{equation}
\label{eqn: average splitting definition}
\hspace{-0.08em}\max_{y \in \Y} \pi(\G_a^y)^2 \, \avg( \pi|_{\G_a^y})  \leq  (1-\rho) \, \avg(\pi) 
\end{equation}
We say that $\pi$ is \emph{$(\rho, \tau)$-average splittable} if the probability that a random $a$ drawn from $\D$ $\rho$-average splits $\pi$ is at least $\tau$; and
we say that $\G$ has \emph{average splitting index} $(\rho, \epsilon, \tau)$ if any distribution $\pi$ over $\G$ satisfying $\avg(\pi) > \epsilon$ is $(\rho, \tau)$-average splittable.

Given an efficient sampler for $\pi$, we can estimate all of the relevant quantities in equation~\eqref{eqn: average splitting definition} via Monte Carlo approximations: if $g, g'$ are drawn i.i.d. from $\pi$ then for any $a \in \A$ and $y \in \Y$,
\[ \E[d(g,g') \ind[g(a) = y = g'(a)]]  =  \pi(\G_a^y)^2 \avg(\pi|_{\G_a^y}). \]

\subsection{Finding a good query}
Suppose that we want to choose from a set of atoms the one that provides the largest average split, say $\rho$, of $\pi$. How do we go about doing this? In the case where $\G$ is a binary hypothesis class and $\avg(\pi)$ has a known lower bound $\epsilon$, \citet{TD17} gave an algorithm that can find a query that $O(\rho)$-average splits $\pi$ while sampling $\tilde{O}(1/(\epsilon \rho^2) + 1/\avg(\pi)^2)$\footnote{The $\tilde{O}(\cdot)$ suppresses logarithmic factors in $1/\delta$ and the number of candidate atoms.} structures from $\pi$. 

In Algorithm~\ref{algorithm: select algorithm}, we present an algorithm based on inverse sampling~\citep{H45} that enjoys the same guarantees in a more general setting while sampling fewer structures.

\begin{restatable}{lemma}{SelectLemma}
\label{lem: select lemma}
Pick $\alpha, \delta > 0$. If \textsc{select} is run with atoms $a_1, \ldots, a_m$, one of which $\rho$-average splits $\pi$, then with probability $1-\delta$, \textsc{select} returns a data point that $(1-\alpha)\rho$-average splits $\pi$ while sampling no more than 
\[ \frac{12}{\alpha^2(1-\alpha) \rho \, \avg(\pi)} \log \frac{m + |\Y|}{\delta} \]
pairs of structures in total.
\end{restatable}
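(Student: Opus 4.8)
The plan is to recast the $\rho$-average-splitting condition in a form suited to sequential (inverse) sampling and then run a \emph{racing} argument: the one promised good atom collects the evidence it needs for acceptance quickly, whereas any atom that is not $(1-\alpha)\rho$-splitting can only collect that evidence much more slowly, so with high probability the first atom to be accepted is $(1-\alpha)\rho$-splitting.

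Using the identity $\pi(\G_a^y)^2\,\avg(\pi|_{\G_a^y}) = \E_{g,g'\sim\pi}[d(g,g')\,\ind[g(a)=y=g'(a)]]$ recorded just above, define for each atom $a$ and response $y$ the \emph{deficit} $p_a^y := \avg(\pi) - \pi(\G_a^y)^2\avg(\pi|_{\G_a^y}) = \E_{g,g'\sim\pi}\big[d(g,g')(1-\ind[g(a)=y=g'(a)])\big] \in [0,\avg(\pi)]$. Then $a$ $\rho$-average splits $\pi$ iff $\min_{y}p_a^y \ge \rho\,\avg(\pi)$, and it suffices to return any atom with $\min_y p_a^y \ge (1-\alpha)\rho\,\avg(\pi)$. (If $\avg(\pi)=0$ every atom trivially splits and the claimed bound is infinite, so assume $\avg(\pi)>0$.) The algorithm I have in mind draws i.i.d.\ pairs $(g_t,g_t')\sim\pi\times\pi$ and, for every candidate atom $a_i$ and response $y$, maintains the running deficit $D_i^y(t) = \sum_{s\le t} d(g_s,g_s')\,(1-\ind[g_s(a_i)=y=g_s'(a_i)])$; it \emph{accepts} $a_i$ the first time $\min_y D_i^y(t) \ge c$ for a threshold $c = \Theta\!\big(\alpha^{-2}\log\frac{m+|\Y|}{\delta}\big)$ depending only on $\alpha,\delta,m,|\Y|$, and returns the first accepted atom. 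This is inverse sampling in the sense of \citet{H45}: the number of pairs drawn is not fixed in advance but is the first time some atom has amassed the target deficit. Note that the threshold does not depend on $\rho$ or $\avg(\pi)$, which the algorithm never sees.

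Each increment of $D_i^y(\cdot)$ lies in $[0,1]$ and has mean $p_i^y$; since its second moment is at most its mean, $D_i^y(t)$ concentrates multiplicatively around $t\,p_i^y$ by a Chernoff/Bernstein bound, with two-sided relative-$\gamma$ failure probability $\exp(-\Theta(\gamma^2 t\,p_i^y))$. Fix the deterministic horizon $t_1 := (1+\Theta(\alpha))\,c/(\rho\,\avg(\pi))$. For the promised good atom $a_{i^*}$ we have $\E[D_{i^*}^y(t_1)] = t_1 p_{i^*}^y \ge (1+\Theta(\alpha))c$ for every $y$, so lower-tail Chernoff plus a union bound over its $\le|\Y|$ responses (each failing with probability $\le\delta/(m+|\Y|)$) shows that $a_{i^*}$ is accepted by time $t_1$; hence \textsc{select} stops by $t_1$, and $t_1 = (1+\Theta(\alpha))c/(\rho\,\avg(\pi))$ is, after tracking constants, at most $\tfrac{12}{\alpha^2(1-\alpha)\rho\,\avg(\pi)}\log\tfrac{m+|\Y|}{\delta}$ — the factor $1/(1-\alpha)$ being exactly the slack that makes the next step go through. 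Conversely, any atom $a_i$ that is not $(1-\alpha)\rho$-splitting has a witness response $y_0$ with $p_i^{y_0} < (1-\alpha)\rho\,\avg(\pi)$, so $\E[D_i^{y_0}(t_1)] = t_1 p_i^{y_0} < (1-\Theta(\alpha))c < c$; upper-tail Chernoff shows $D_i^{y_0}(t_1) < c$ with probability $\ge 1-\delta/(m+|\Y|)$, i.e.\ $a_i$ is \emph{not} accepted by $t_1$. A union bound over the $\le m$ bad atoms and the $\le|\Y|$ responses of $a_{i^*}$ — the source of the $\log\frac{m+|\Y|}{\delta}$ in $c$ — yields an event of probability $\ge 1-\delta$ on which the first acceptance occurs at some time $\le t_1$ and the accepted atom is $(1-\alpha)\rho$-splitting, which is the claim.

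The one genuinely delicate point is that acceptance is triggered at a \emph{stopping time}, not at the fixed horizon $t_1$; but by monotonicity of each $D_i^y(\cdot)$, both ``$a_{i^*}$ is accepted by $t_1$'' and ``$a_i$ is not accepted by $t_1$'' are events measurable at the fixed time $t_1$, so the ordinary fixed-time Chernoff bounds suffice (a maximal inequality for the martingales $D_i^y(t)-t\,p_i^y$ could be used instead for a more uniform statement but is not needed). The remaining work is routine: choosing the $\Theta(\alpha)$ slacks and the absolute constant inside $c$ so that $(1+\Theta(\alpha))(1-\alpha) < 1 - \Theta(\alpha)$ (which holds for every $\alpha\in(0,1)$, guaranteeing $t_1$ sits strictly between the two deterministic thresholds) and so that the final count is below $\tfrac{12}{\alpha^2(1-\alpha)\rho\,\avg(\pi)}\log\tfrac{m+|\Y|}{\delta}$; this we do not belabor.
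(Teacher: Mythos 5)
Your proposal is correct and follows essentially the same route as the paper: the same inverse-sampling acceptance rule, the same reduction of the stopping-time events to fixed-horizon sums via monotonicity, multiplicative Chernoff bounds for the good atom's $|\Y|$ responses and a witness response of each bad atom, and the same union bound yielding the $\log\frac{m+|\Y|}{\delta}$ threshold and the $\frac{12}{\alpha^2(1-\alpha)\rho\,\avg(\pi)}$ sample bound. The only difference is cosmetic (you leave the constant-tracking implicit, whereas the paper fixes $N=\frac{6(2+\alpha)}{\alpha^2}\ln\frac{m+|\Y|}{\delta}$ and the horizon $\frac{N}{(1+\alpha/2)(1-\alpha)\rho\,\avg(\pi)}$ explicitly).
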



We defer all of the proofs in the paper to the appendix, but we sketch the main intuition of \textsc{select} here. Say that $g, g' \sim \pi$. The key observation is that each atom $a_i$ has an associated average split $\rho_i$ such that for any response $y$, 
\[  \E[d(g,g')(1 - \ind[g(a_i) = y = g'(a_i)])]  \geq  \rho_i \, \avg(\pi)  \]
and moreover there exists some response $y^*$ such that
\[  \E[d(g,g')(1 - \ind[g(a_i) = y^* = g'(a_i)])]  =  \rho_i \, \avg(\pi) . \]
Suppose that we choose $N$ and draw $g_j, g'_j \sim \pi$ sequentially until a round $K_i$ in which all $y \in \Y$ satisfy 
\[ S^{a_i, y}_{K_i} = \sum_{j=1}^{K_i} d(g_j,g_j')(1 - \ind[g_j(a_i) = y = g_j'(a_i)]) \geq N.  \]
Then one can show that $K_i$ is tightly concentrated around $\frac{N}{ \rho_i \, \avg(\pi)}$~\citep{H45}. Thus, the first atom $a_i$ to satisfy that $S^{a_i, y}_{K} \geq N$ is likely to satisfy that $\rho_i \geq (1-\alpha) \max_{j} \rho_i$ for some constant $\alpha$ and the number of rounds needed for this to happen will satisfy $K \approx \frac{N}{ \rho_i \, \avg(\pi)}$.

\subsection{Noise-tolerant DBAL}
\begin{figure*}[ttt!]
\begin{minipage}[t]{0.46\textwidth}
\begin{algorithm}[H]
 \caption{{\textsc{ndbal}}}
 \label{algorithm: 0-1 loss algorithm}
\begin{algorithmic}
   \STATE {\bfseries Input:} Distribution $\pi$, $\beta > 0$, $\alpha, \delta \in (0,1)$
   \STATE Initialize $\pi_o = \pi$
   \FOR{$t=1,2,\ldots$}
   		\STATE \hspace{-0.5em}Draw $m$ atoms $\mathbf{a} = (a_1, \ldots, a_m)$
		\STATE \hspace{-0.5em}Query $a_t = \textsc{select}(\pi_{t-1}, \mathbf{a}, \alpha, \delta)$ and receive $y_t$ 
		\STATE \hspace{-0.5em}$\pi_{t}(g) \propto \pi_{t-1}(g)\exp \left(-\beta \ind[g(a_t) \neq y_t)] \right)$
	\ENDFOR
	\STATE \textbf{return} Posterior $\pi_t$
\end{algorithmic}
\end{algorithm}
\end{minipage}
\hfill
\begin{minipage}[t]{0.55\textwidth}
\begin{algorithm}[H]
 \caption{{\textsc{select}}}
 \label{algorithm: select algorithm}
\begin{algorithmic}
   \STATE {\bfseries Input:} Distribution $\pi$, atoms $a_1, \ldots, a_m$
   \STATE Set $N = \frac{6(2+\alpha)}{\alpha^2} \ln \frac{m + |\Y|}{\delta}$, $K = 0$, $S^{a_i, y}_0 = 0$ 
   \FOR{$K=1,2,\ldots$}
     \STATE Draw $g, g' \sim \pi$ and compute for all $a_i, y$: \\ \vspace{0.5em}
     $\hspace{1em}S^{a_i, y}_K = S^{a_i, y}_{K-1} + d(g,g')(1 - \ind[g(a_i) = y = g'(a_i)])$
     \STATE \vspace{-0.39em} If $\exists a_i$ s.t. $S^{x_i, y}_K \geq N$ for all $y \in \Y$, \textbf{halt} and \textbf{return} $a_i$.
  \ENDFOR
\end{algorithmic}
\end{algorithm}
\end{minipage}
\end{figure*}

The approach of \citet{TD17} was to maintain a distribution $\pi_t$ over all structures that are consistent with the feedback observed so far. In our setting, this corresponds to the posterior update rule
\begin{equation}
\label{eqn: noiseless update}
\pi_t(g) \ \propto \ \pi_{t-1}(g) \ind[g(a_t) = y_t]
\end{equation}
after querying $a_t$ and receiving response $y_t$. Their algorithm, termed \textsc{dbal} for Diameter-based Active Learning, was shown to have favorable query complexity dependence on the average splitting index in the noiseless and realizable binary classification setting. 

In this work, we want to be able to handle settings where our responses are noisy or inconsistent with a ground-truth structure. Following~\citet{N11}, we consider a `softer' posterior update:
\begin{equation}
\label{eqn: 0-1 loss update}
\pi_t(g) \ \propto \ \pi_{t-1}(g) \exp(- \beta \ind[g(a_t) \neq y_t])
\end{equation}
where $\beta > 0$ is some parameter corresponding roughly to our confidence in the accuracy of the responses. Note that by taking $\beta \rightarrow \infty$, we recover the update in equation~\eqref{eqn: noiseless update}. We call this algorithm \textsc{ndbal} for Noise-tolerant Diameter-based Active Learning. The full algorithm for \textsc{ndbal} is displayed in Algorithm~\ref{algorithm: 0-1 loss algorithm}.

The update in equation~\eqref{eqn: 0-1 loss update} has been shown to enjoy favorable guarantees for active learning strategies that attempt to shrink $\pi$-mass~\citep{N11, TD18}. We will show that it also works well for \textsc{ndbal}.

\section{Theoretical guarantees}
\label{section: theoretical guarantees}

In this section, we establish the statistical consistency of \textsc{ndbal} and study its rate of convergence. To do so, we need to formalize our problem set up. Note that at each time $t$, the random outcomes consist of the atom $a_t$ that we query, as well as the response $y_t$ to $a_t$. Let $\F_t$ denote the sigma-field of all outcomes up to and including time $t$. 

\subsection{Consistency}

We first show that \textsc{ndbal} is consistent, i.e. $ \E_{g \sim \pi_t}[d(g,g^*)] \rightarrow 0$ as $t \rightarrow \infty$ almost surely (a.s.), where $g^* \in \G$ is a ground truth structure. To do so, we need to make a few assumptions on our problem set up. Our first assumption is that $\G$ is finite. This will be relaxed when we study faster rates of convergence.

Our next assumption is that any two structures with positive distance can be distinguished by a random atom with positive probability. 
\begin{assump}
\label{assump: identifiable distance}
For any $g, g' \in \G$ such that $d(g,g') > 0$, we have $ \pr_{a \sim \D}(g(a) \neq g'(a)) > 0$.
\end{assump}
{Note that Assumption~\ref{assump: identifiable distance} is necessary for identifiability: when Assumption~\ref{assump: identifiable distance} does not hold, there exist  structures $g, g'$ with $d(g,g') > 0$ that cannot be distinguished with atomic questions.}

We will also need to make an assumption on the typical responses provided by a user. Let $\eta(y \, | \, a)$ denote the conditional probability of response $y$ to atomic question $a$. We will require that the most likely response to an atomic query is the true response. 

\begin{assump}
\label{assump: bounded noise}
There exist $g^* \in \G$ and $\lambda > 0$ such that $ \eta(g^*(a) \, | \, a) \geq \eta(y \, | \, a) + \lambda $ for any $a \in \A$ and $y \neq g^*(a)$. 
\end{assump}

In the setting where $\G$ is a collection of binary classifiers, Assumption~\ref{assump: bounded noise} is equivalent to Massart's bounded noise condition~\citep{ABHU15}. This noise condition has been previously studied in the active learning literature under the related notion of the splitting index~\citep[Appendix C]{BH12}, albeit with a different active learning algorithm.

Our analysis will focus on the behavior of the potential function ${\avg(\pi_{t})}/{\pi_{t}(g^*)}$. By Lemma~\ref{lem: average diameter guarantee}, whenever this potential function goes to 0, $\E_{g \sim \pi_t}[d(g,g^*)]$ also must go to 0. The following lemma demonstrates that under Assumption~\ref{assump: bounded noise}, a related potential function is guaranteed to decrease in expectation.

\begin{restatable}{lemma}{GeneralKLoss}
\label{lem: general k 0-1 loss decrease}
Pick $k \geq 2$. Suppose Assumption~\ref{assump: bounded noise} holds and $\beta \leq \lambda/(2 + 2k^2)$. If we query an atom $a_t$ that $\rho$-average splits $\pi_{t-1}$, then in expectation over the randomness of the response $y_t$, we have 
\[ \E\left[ \frac{\avg(\pi_{t})}{\pi_{t}(g^*)^k} \, \bigg| \F_{t-1}, a_t \right] = \left(1- \Delta\right) \frac{\avg(\pi_{t-1})}{\pi_{t-1}(g^*)^k}   \]
where $\Delta \geq \rho \lambda \beta/2$.
\end{restatable}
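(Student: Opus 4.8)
The plan is to track how the two quantities $\avg(\pi_t)$ and $\pi_t(g^*)^k$ change when we apply the soft update \eqref{eqn: 0-1 loss update}, conditioned on $\F_{t-1}$ and on the queried atom $a_t$, and then take expectation over the random response $y_t \sim \eta(\cdot \mid a_t)$. Write the (random) reweighting factor as $w(g) = \exp(-\beta \ind[g(a_t)\neq y_t])$, which takes only two values, $1$ if $g$ agrees with $y_t$ on $a_t$ and $e^{-\beta}$ otherwise. Then $\pi_t(g) = \pi_{t-1}(g) w(g)/Z_t$ where $Z_t = \E_{g\sim\pi_{t-1}}[w(g)]$, and consequently
\[
\frac{\avg(\pi_t)}{\pi_t(g^*)^k} \;=\; \frac{\E_{g,g'\sim\pi_{t-1}}[d(g,g')w(g)w(g')]\,/\,Z_t^2}{\big(\pi_{t-1}(g^*)w(g^*)/Z_t\big)^k}
\;=\; Z_t^{\,k-2}\,\frac{\E_{g,g'\sim\pi_{t-1}}[d(g,g')w(g)w(g')]}{\pi_{t-1}(g^*)^k\, w(g^*)^k}.
\]
So the goal reduces to bounding $\E_{y_t}\!\big[Z_t^{\,k-2}\,\E_{g,g'}[d(g,g')w(g)w(g')]\,/\,w(g^*)^k\big]$ by $(1-\Delta)\avg(\pi_{t-1})$.

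Next I would split on whether $y_t = g^*(a_t)$ or not. Abbreviate $p = \eta(g^*(a_t)\mid a_t)$, the probability of the good event; Assumption~\ref{assump: bounded noise} gives $p \geq \eta(y\mid a_t) + \lambda$ for every $y\neq g^*(a_t)$, hence $p \geq 1/|\Y| + \text{something}$, and more usefully $p - (1-p)\geq \lambda$ after summing, actually $p\geq \tfrac12 + \tfrac{\lambda}{2}$ is not quite right — the clean consequence is that the total probability of a ``wrong'' response is $1-p$ and $p$ exceeds each wrong response probability by $\lambda$. On the good event ($y_t=g^*(a_t)$), $w(g^*)=1$, so the $1/w(g^*)^k$ factor is harmless, and $w(g),w(g')\leq 1$ gives $\E[d(g,g')w(g)w(g')]\leq \E[d(g,g')(1-(1-e^{-2\beta})\ind[\text{both agree with }y_t])]$; here I expect to use the $\rho$-average-splitting hypothesis: since $\E[d(g,g')\ind[g(a_t)=y_t=g'(a_t)]] = \pi_{t-1}(\G_{a_t}^{y_t})^2\avg(\pi_{t-1}|_{\G_{a_t}^{y_t}})$ and $a_t$ $\rho$-average splits $\pi_{t-1}$, this ``saved'' mass is at least of order $\rho\,\avg(\pi_{t-1})$ for the maximizing $y$, and one has to be careful whether $y_t=g^*(a_t)$ is that maximizer — but Assumption~\ref{assump: bounded noise} plus the definition of average split should let me lower bound the saved term by a $\rho$-proportional amount whenever the response is $g^*(a_t)$. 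On the bad event, $w(g^*)=e^{-\beta}$ so $1/w(g^*)^k = e^{k\beta}$, which is an inflation factor; but $Z_t\leq 1$ and $Z_t^{k-2}\leq 1$ for $k\geq 2$, and $\E[d(g,g')w(g)w(g')]\leq \avg(\pi_{t-1})$ trivially, so the bad event contributes at most $(1-p)e^{k\beta}\avg(\pi_{t-1})$ — wait, that needs $Z_t^{k-2}$ which is $\leq 1$, good. Combining, I get something like
\[
\E_{y_t}\!\left[\frac{\avg(\pi_t)}{\pi_t(g^*)^k}\,\bigg|\,\F_{t-1},a_t\right] \;\leq\; \Big(p\,(1-c_1\rho) + (1-p)\,e^{k\beta} c_2\Big)\,\frac{\avg(\pi_{t-1})}{\pi_{t-1}(g^*)^k}
\]
for explicit constants, and then the constraint $\beta\leq \lambda/(2+2k^2)$ is exactly what is needed to make $e^{k\beta}$ close enough to $1$ that the $(1-p)$ term, which is controlled via $\lambda$, does not overwhelm the $p\,c_1\rho$ gain; a first-order expansion $e^{k\beta}\approx 1+k\beta$ and $1-p\leq 1-\lambda$-type bounds should yield $\Delta \geq \rho\lambda\beta/2$.

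The main obstacle I anticipate is the bookkeeping on the bad event: the factor $e^{k\beta}$ in $1/w(g^*)^k$ is genuinely working against us, and it is not a priori obvious that summing $(1-p)e^{k\beta}$ over the $|\Y|-1$ wrong responses stays below $p$ by a margin proportional to $\rho\lambda\beta$. This is where the precise form of the hypothesis $\beta \leq \lambda/(2+2k^2)$ must be exploited — presumably via $e^{k\beta} \leq 1 + k\beta + (k\beta)^2 \leq 1 + 2k\beta$ (valid since $k\beta\leq 1$), together with the fact that each wrong-response probability is at most $p-\lambda$, so $\sum_{y\neq g^*(a_t)}\eta(y\mid a_t)(1+2k\beta) = (1-p)(1+2k\beta)$, and then checking $p(1-c_1\rho)+(1-p)(1+2k\beta)\leq 1 - \rho\lambda\beta/2$ using $p\geq\lambda$ or the sharper $p\geq 1-p+\lambda$. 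A secondary subtlety is confirming that the $\rho$-average-split inequality \eqref{eqn: average splitting definition}, which is stated as a $\max$ over $y$, still delivers a usable bound for the \emph{particular} response $y_t=g^*(a_t)$; I would handle this by noting that either $g^*(a_t)$ attains the max (fine), or if not, the mass $\pi_{t-1}(\G_{a_t}^{g^*(a_t)})^2\avg(\pi_{t-1}|_{\cdot})$ is even smaller than $(1-\rho)\avg(\pi_{t-1})$, which only helps since we want that ``unsaved'' quantity small — so in fact \eqref{eqn: average splitting definition} holding for the max implies the analog for $g^*(a_t)$, and the saved term $\avg(\pi_{t-1}) - (\text{that quantity}) \geq \rho\,\avg(\pi_{t-1})$ holds for $y_t = g^*(a_t)$ as well. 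Once these two points are nailed down, the rest is the routine algebra sketched above, and the identity for $\avg(\pi_t)/\pi_t(g^*)^k$ in terms of $Z_t$ makes the $k$-dependence transparent.
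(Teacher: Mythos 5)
Your structural identity $\avg(\pi_t)/\pi_t(g^*)^k = Z_t^{\,k-2}\,\E[d(g,g')w(g)w(g')]/(\pi_{t-1}(g^*)^k w(g^*)^k)$ and your treatment of the good event (including the observation that the max-over-$y$ in the splitting definition implies the same bound for the particular response $g^*(a_t)$) match the paper's argument. The genuine gap is in the bad event. Bounding $\E[d(g,g')w(g)w(g')] \leq \avg(\pi_{t-1})$ and $Z_t^{k-2}\leq 1$ there, so that each wrong response contributes at most $\eta(y\mid a_t)\,e^{k\beta}\,\avg(\pi_{t-1})/\pi_{t-1}(g^*)^k$, destroys the cancellation the lemma depends on: the resulting inflation $(1-p)(e^{k\beta}-1)=\Theta((1-p)k\beta)$ carries no factor of $\rho$, while the only gain your good-event step produces is of order $p\,\rho\,\beta$. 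The final inequality you propose to check, $p(1-c_1\rho)+(1-p)(1+2k\beta)\leq 1-\rho\lambda\beta/2$, is simply false in the regime the lemma must cover: take $|\Y|=2$, $p=(1+\lambda)/2$ with small $\lambda$, two equally weighted structures $g^*,h$ with $d(g^*,h)=1$ separated by $a_t$ (so even $\rho=1$ is available); your bound evaluates to $\tfrac{1+\lambda}{2}e^{-\beta}+\tfrac{1-\lambda}{2}e^{k\beta} \geq 1+\beta\tfrac{k-(k+1)\lambda}{2} > 1$, whereas the true ratio in this example is $\approx 1-\lambda\beta$. The hypothesis $\beta\leq\lambda/(2+2k^2)$ controls $\beta$ relative to $\lambda$ and cannot make an order-$k\beta$ loss smaller than an order-$\rho\beta$ gain.

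What is missing is that on a wrong response $y_i$ the update also \emph{deflates} every structure disagreeing with $y_i$ (in particular all of $\G_{a_t}^{g^*(a_t)}$), so pairs disagreeing with $y_i$ pick up $e^{-2\beta}$ (or $e^{-\beta}$), which cancels the $e^{k\beta}$ from $1/w(g^*)^k$ down to $e^{(k-2)\beta}$ on the bulk; only the $d$-mass on pairs agreeing with $y_i$ suffers the full inflation, and that term enters with probability weight $\eta(y_i\mid a_t)\leq \gamma_2$. The paper therefore never collapses the bad-event numerator: it decomposes $\E[d(g,g')w(g)w(g')]$ over response classes via a matrix $R$ with $R_{ij}\avg(\pi_{t-1})=\sum_{g\in\G_{a_t}^{y_i},\,g'\in\G_{a_t}^{y_j}}\pi_{t-1}(g)\pi_{t-1}(g')d(g,g')$, $R_{ii}\leq 1-\rho$, and then pairs each residual order-$\beta$ inflation term (weight $\gamma_i\leq\gamma_2$) against the corresponding good-event deflation term (weight $\gamma_1\geq\gamma_2+\lambda$); the $\gamma_1-\gamma_2\geq\lambda$ margin applied to the at-least-$\rho$ off-diagonal mass is what produces $\Delta\geq\rho\lambda\beta/2$. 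Your proposal never performs this pairing, so the route as sketched cannot close. (Separately, your good-event display has the indicator on the wrong event: the deflation applies to pairs \emph{not} both agreeing with $y_t$, i.e.\ $w(g)w(g')\leq 1-(1-e^{-\beta})(1-\ind[g(a_t)=y_t=g'(a_t)])$; your surrounding prose shows you intend the correct quantity, so this is a slip rather than the main problem.)
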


Thus, at each at each round, ${\avg(\pi_{t})}/{\pi_{t}(g^*)^k}$ decreases in expectation by a multiplicative factor of $1- \Delta$, for an appropriate choice of $\beta$. However, Lemma~\ref{lem: general k 0-1 loss decrease} does not tell us how $\avg(\pi_t)$ and $\pi_t(g^*)$ behave individually. The following lemma shows that $1/\pi_t(g^*)^k$ is a supermartingale.

\begin{restatable}{lemma}{PiMassSupermartingale}
\label{lem: pi-mass supermartingale}
Pick $k \geq 1$. Suppose Assumption~\ref{assump: bounded noise} holds and $\beta \leq \lambda/k$. Then for any query $a_t$, we have
$ \E\left[ {1}/{\pi_t(g^*)^k} \, | \, \F_{t-1}, a_t \right] \ \leq \ {1}/{\pi_{t-1}(g^*)^k} . $
\end{restatable}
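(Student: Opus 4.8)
The plan is to work directly with the multiplicative update and show that the random factor by which $1/\pi_t(g^*)^k$ changes from round $t-1$ to round $t$ has conditional expectation at most $1$. Write $a = a_t$ and let $Z_t = \E_{g \sim \pi_{t-1}}[\exp(-\beta\ind[g(a) \neq y_t])]$ denote the normalizer of the update~\eqref{eqn: 0-1 loss update}. Then $\pi_t(g^*) = \pi_{t-1}(g^*)\exp(-\beta\ind[g^*(a)\neq y_t])/Z_t$, so
\[ \frac{1}{\pi_t(g^*)^k} = \frac{1}{\pi_{t-1}(g^*)^k}\cdot Z_t^k\exp(k\beta\ind[g^*(a)\neq y_t]), \]
and it suffices to prove $\E\big[Z_t^k\exp(k\beta\ind[g^*(a)\neq y_t]) \,\big|\, \F_{t-1}, a\big] \leq 1$, where, conditioned on $\F_{t-1}$ and $a$, the only randomness is $y_t \sim \eta(\cdot\mid a)$.

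First I would put $Z_t$ in closed form. Letting $q_y := \pi_{t-1}(\G_a^y)$, the sets $\G_a^y$ partition $\G$, so $Z_t = q_{y_t} + (1 - q_{y_t})e^{-\beta}$ is a convex combination of $1$ and $e^{-\beta}$. Since $x \mapsto x^k$ is convex for $k \geq 1$, this gives the key bound $Z_t^k \leq q_{y_t} + (1-q_{y_t})e^{-k\beta}$, which effectively reduces the general-$k$ case to the $k=1$ case with $\beta$ replaced by $k\beta$. Plugging this in, writing $y^* = g^*(a)$ and $p_y = \eta(y\mid a)$, I would bound
\[ \E\big[Z_t^k e^{k\beta\ind[y_t\neq y^*]} \,\big|\, \F_{t-1}, a\big] \leq p_{y^*}\big(q_{y^*} + (1-q_{y^*})e^{-k\beta}\big) + \sum_{y \neq y^*} p_y\big(q_y + (1-q_y)e^{-k\beta}\big)e^{k\beta}, \]
and then simplify the right-hand side using $\sum_y p_y = 1$, $\sum_y q_y = 1$, and $1 - e^{-k\beta} = e^{-k\beta}(e^{k\beta}-1)$. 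After collecting terms, the desired inequality $\le 1$ is equivalent to
\[ \sum_{y \neq y^*} q_y\,\big(p_y - e^{-k\beta}p_{y^*}\big) \leq 0. \]

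Finally, since each $q_y \geq 0$, it is enough to show $e^{-k\beta}p_{y^*} \geq p_y$ for every $y \neq y^*$. Here I would use $e^{-k\beta} \geq 1 - k\beta$ together with $p_{y^*} \leq 1$ to get $e^{-k\beta}p_{y^*} \geq p_{y^*} - k\beta$, and then invoke Assumption~\ref{assump: bounded noise}, which gives $p_{y^*} \geq p_y + \lambda$, and the hypothesis $\beta \leq \lambda/k$, to conclude $p_{y^*} - k\beta \geq p_y + \lambda - k\beta \geq p_y$. (The degenerate case $\pi_{t-1}(g^*) = 0$ is vacuous, and $\pi_t(g^*) > 0$ whenever $\pi_{t-1}(g^*) > 0$ because the update multiplies by a strictly positive factor.)

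The main obstacle I anticipate is not any individual step but identifying the right reduction: the convexity bound $Z_t^k \le q_{y_t} + (1-q_{y_t})e^{-k\beta}$ is what makes the $k$-th power tractable and collapses the problem to the clean linear computation above, whereas attempting to bound $\big(q_{y_t} + (1-q_{y_t})e^{-\beta}\big)^k$ head-on leads nowhere. The algebraic simplification leading to $\sum_{y\neq y^*} q_y(p_y - e^{-k\beta}p_{y^*}) \le 0$ is routine but is the place where a bookkeeping error would most easily slip in, so I would carry it out carefully.
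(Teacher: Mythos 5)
Your proof is correct, and while it starts from the same place as the paper (writing $1/\pi_t(g^*)^k$ in terms of the normalizer $Z_t = q_{y_t} + (1-q_{y_t})e^{-\beta}$ and taking the expectation over $y_t\sim\eta(\cdot\mid a)$), the way you bound the resulting expression is genuinely different. The paper keeps the $k$-th powers $\gamma_1(\pi_1+e^{-\beta}(1-\pi_1))^k + \sum_{i\ge 2}\gamma_i(e^{\beta}\pi_i+1-\pi_i)^k$ and attacks them with the Taylor-type inequalities $1+x\le e^x\le 1+x+x^2$, a second-order expansion in $k\beta$, and the ordering $\gamma_1>\gamma_2\ge\cdots$ to aggregate all ``wrong response'' terms through $\gamma_2$, ending with the bound $1+k\beta(1-\pi_1)(k\beta-\lambda)\le 1$. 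You instead apply Jensen's inequality to the convex combination, $Z_t^k \le q_{y_t}+(1-q_{y_t})e^{-k\beta}$, which collapses the general-$k$ problem to a linear one with $\beta$ replaced by $k\beta$; the exact algebra then reduces the claim to the termwise condition $q_y\bigl(p_y - e^{-k\beta}p_{y^*}\bigr)\le 0$ for each $y\ne y^*$, which follows from $e^{-k\beta}\ge 1-k\beta$, $p_{y^*}\le 1$, and the Massart-type gap $p_{y^*}\ge p_y+\lambda$ together with $k\beta\le\lambda$ (I checked the simplification: the expectation equals $1-\frac{1-e^{-k\beta}}{e^{-k\beta}}\sum_{y\ne y^*}q_y\bigl(e^{-k\beta}p_{y^*}-p_y\bigr)$, so your reduction is exact). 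What your route buys is a shorter, more elementary argument that needs no ordering of the response probabilities, no aggregation of the off-target terms, and no implicit smallness conditions on $\beta$ for the validity of $e^x\le 1+x+x^2$; what the paper's route buys is uniformity of technique, since the same expansion machinery is reused in Lemma~\ref{lem: general k 0-1 loss decrease}, keeping the two supermartingale-type arguments parallel.
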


Lemma~\ref{lem: general k 0-1 loss decrease} also tells us how much ${\avg(\pi_{t})}/{\pi_{t}(g^*)^k}$ decreases in expectation given that we query a point that $\rho$-average splits the current posterior. In order to demonstrate consistency, we need $\rho$ to be lower bounded on average. The following lemma gives such a lower bound for points chosen by \textsc{ndbal}.

\begin{restatable}{lemma}{ExpectedSplittingLowerBound}
\label{lem: expected splitting lower bound}
If Assumption~\ref{assump: identifiable distance} holds and \textsc{ndbal} is run with constants $\alpha, \delta \in (0,1)$, then there is a constant $c >0$, depending on $\alpha,\delta, d(\cdot, \cdot), \G$ and $\D$, such that for every round $t$, \textsc{ndbal} queries a point that $\rho_t$-average split $\pi_t$ satisfying $\E[\rho_t \, | \, \F_{t-1}] \ \geq \ \frac{c}{1 - \log (\avg(\pi_t))}. $
\end{restatable}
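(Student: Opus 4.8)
The plan is to lower-bound, uniformly over all distributions $\pi$ with a given average diameter, the probability that a single atom drawn from $\D$ achieves a reasonable average split, and then to pipe that bound through the guarantee of \textsc{select} (Lemma~\ref{lem: select lemma}). First I would observe that the claimed bound is only non-trivial when $\avg(\pi_t)$ is bounded away from $0$ — when it is small, $1 - \log(\avg(\pi_t))$ is large and the bound becomes weak — so the real content is a quantitative, distribution-free lower bound on the best available split as a function of $\avg(\pi)$. Fix $\pi$ with $\avg(\pi) = v > 0$. For each atom $a$, let $\rho(a)$ be the best split $a$ achieves, i.e. the largest $\rho$ with $\max_{y}\pi(\G_a^y)^2\,\avg(\pi|_{\G_a^y}) \le (1-\rho)\avg(\pi)$. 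I want to show $\E_{a\sim\D}[\rho(a)]$ — or at least the probability that $\rho(a)$ exceeds some threshold — is at least $c'/(1-\log v)$ for a constant $c'$ depending only on $\D$, $d$, and $\G$ (note $\G$ is finite in this part of the paper).

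The key step is the distribution-free bound. Here I would argue as follows. Write $\avg(\pi) = \E_{g,g'\sim\pi}[d(g,g')]$, and for an atom $a$ decompose the pair $(g,g')$ according to whether $g(a) = g'(a)$. Summing \eqref{eqn: average splitting definition}-type quantities over $y$, one gets that $a$ fails to split well exactly when almost all of the $d$-mass of pairs sits on pairs that agree on $a$; averaging over $a\sim\D$, the total contribution of agreeing pairs is $\E_a\sum_y \pi(\G_a^y)^2\avg(\pi|_{\G_a^y}) = \E_{g,g'}[d(g,g')\,\pr_{a\sim\D}(g(a)=g'(a))]$. So $\E_a[\rho(a)]\,\avg(\pi)$ is, up to the max-vs-sum slack over the finitely many response values, comparable to $\E_{g,g'}[d(g,g')\,\pr_{a\sim\D}(g(a)\neq g'(a))]$. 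Now Assumption~\ref{assump: identifiable distance} and finiteness of $\G$ give a constant $p_0 = \min\{\pr_{a\sim\D}(g(a)\neq g'(a)) : d(g,g')>0\} > 0$, so this expectation is at least $p_0\,\avg(\pi)$, giving $\E_a[\rho(a)] \gtrsim p_0/|\Y_{\text{eff}}|$ — a bound with \emph{no} dependence on $v$ at all. The logarithmic factor in the statement is weaker than this, so it is comfortably implied; I suspect the authors phrase it with the $1/(1-\log v)$ factor because their actual \textsc{select}-based argument loses a factor there, or to match the form needed downstream, but a crude bound already suffices. (If the cleaner bound is not in fact available — e.g. if the max over $y$ genuinely costs more than a constant when $|\Y|$ is infinite — then I would instead retreat to: with probability $\tau$ over $a$, $\rho(a)\ge\rho$ for explicit $\rho,\tau$ depending on $p_0$ and the structure of $d$, which is all that is needed.)

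The second step converts "a good atom exists among the $m$ sampled atoms" into "\textsc{select} returns a good atom." Since the $m$ atoms $a_1,\dots,a_m$ in round $t$ are i.i.d.\ from $\D$, with probability at least $1-(1-\tau)^m$ at least one of them $\rho$-average splits $\pi_{t-1}$; choosing $m$ large enough (a constant, since $\tau$ is a constant) makes this probability, say, $\ge 1/2$. On that event, Lemma~\ref{lem: select lemma} guarantees \textsc{select} returns an atom that $(1-\alpha)\rho$-average splits $\pi_{t-1}$ with probability $1-\delta$. Hence $\E[\rho_t\mid\F_{t-1}] \ge \tfrac12(1-\delta)(1-\alpha)\rho =: c$, a constant depending on $\alpha,\delta,d,\G,\D$ only, which dominates $c/(1-\log\avg(\pi_t))$ since the denominator is at least $1$. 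The main obstacle is the first step: pinning down exactly how much the $\max_y$ over responses (versus the sum over $y$) costs, and in the infinite-$\Y$ matrix-style examples making sure the "best split" $\rho(a)$ is controlled — the honest route is probably to prove a weak $(\rho,\epsilon,\tau)$ splitting statement with constants extracted from $p_0$ and a lower bound on positive distances, rather than chasing the tightest constant.
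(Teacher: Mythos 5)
Your proposal is correct, and it takes a genuinely different (and in fact sharper) route than the paper. The paper proves this lemma by a detour through Dasgupta-style splitting: Assumption~\ref{assump: identifiable distance} plus finiteness of $\G$ give the crude constant $p_0=\min\{\pr_{a\sim\D}(g(a)\neq g'(a)):d(g,g')>0\}>0$, Markov's inequality on the number of split edges yields a splitting index $((1-\gamma)p_0,\epsilon,\gamma p_0)$, and then the splitting-to-average-splitting conversion (Theorem~\ref{thm: splitting implies average splitting}) is invoked; that conversion is precisely where the $\lceil\log 1/\epsilon\rceil$ loss, i.e.\ the $1/(1-\log\avg(\pi_t))$ factor in the statement, comes from. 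You instead work directly with average splitting: since $\ind[g(a)=y=g'(a)]\leq\ind[g(a)=g'(a)]$ for every $y$, the best split of an atom satisfies $\rho(a)\geq \E_{g,g'}[d(g,g')\ind[g(a)\neq g'(a)]]/\avg(\pi)$, so $\E_{a\sim\D}[\rho(a)]\geq p_0$ with no dependence on $\avg(\pi)$ at all --- note your worry about a ``max-vs-sum'' loss over $\Y$ is moot, because the inequality you need goes in the favorable direction, so no factor of $|\Y|$ (or of anything else) is lost. Since $1-\log\avg(\pi_t)\geq 1$, your constant bound implies the paper's, so your argument proves a strictly stronger statement using the same constant $p_0$; the logarithmic factor in the paper is an artifact of routing through the edge-sequence splitting index (which is the tool that matters for the non-trivial, infinite-class bounds elsewhere in the paper, but is overkill here). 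Two small repairs to make your write-up complete: (i) to feed the hypothesis of Lemma~\ref{lem: select lemma} you need a $(\rho,\tau)$-splittability statement, not just an expectation bound, so add the reverse-Markov step (using $\rho(a)\leq 1$) to get, e.g., $\pr_a(\rho(a)\geq p_0/2)\geq p_0/2$ --- the same trick the paper uses at the splitting-index level; and (ii) you should not ``choose $m$ large enough,'' since $m$ is whatever \textsc{ndbal} uses --- instead use $1-(1-\tau)^{m}\geq\tau$ for any $m\geq 1$ and absorb $\tau$ into the constant $c$, together with $\rho_t\geq 0$ on the failure events, giving $\E[\rho_t\mid\F_{t-1}]\geq \tau(1-\delta)(1-\alpha)\,p_0/2$.
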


In the appendix, we show how the above results imply consistency for \textsc{ndbal}.

\begin{restatable}{thm}{DBALConsistency}
\label{thm: DBAL consistency}
If Assumptions~\ref{assump: identifiable distance} and~\ref{assump: bounded noise} hold, $\beta \leq \lambda/10$, and $\pi_o(g^*) > 0$, then 
$\E_{g\sim\pi_t}[d(g,g^*)] \rightarrow 0$ with probability one.
\end{restatable}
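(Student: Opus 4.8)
The plan is to analyze the potential $\Phi_t := \avg(\pi_t)/\pi_t(g^*)^2$. It is well defined and, for each fixed $t$, bounded, since $\pi_o(g^*) > 0$ forces $\pi_t(g^*) \ge \pi_o(g^*) e^{-\beta t} > 0$ (the normalizer in the update~\eqref{eqn: 0-1 loss update} is at most $1$). Because $\pi_t(g^*) \le 1$ we have $\avg(\pi_t)/\pi_t(g^*) \le \Phi_t$, so by Lemma~\ref{lem: average diameter guarantee} it suffices to prove two facts: (i) $\avg(\pi_t) \to 0$ a.s., and (ii) $\liminf_t \pi_t(g^*) > 0$ a.s. Together these give $\E_{g\sim\pi_t}[d(g,g^*)] \le \avg(\pi_t)/\pi_t(g^*) \to 0$ a.s.

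For (i), apply Lemma~\ref{lem: general k 0-1 loss decrease} with $k = 2$, which is legitimate since $\beta \le \lambda/10 = \lambda/(2 + 2\cdot 2^2)$. As $\Delta \ge \rho\lambda\beta/2 \ge 0$, the lemma shows $\Phi_t$ is a nonnegative supermartingale w.r.t.\ $\{\F_t\}$. Writing $\rho_t$ for the average split of the queried atom $a_t$ relative to the current posterior $\pi_{t-1}$, and $A_t := \Phi_{t-1} - \E[\Phi_t \mid \F_{t-1}]$ for the compensator increment, taking expectation over $a_t$ given $\F_{t-1}$ gives
\[ A_t \ = \ \Phi_{t-1}\,\E[\Delta \mid \F_{t-1}] \ \ge \ \tfrac{\lambda\beta}{2}\,\Phi_{t-1}\,\E[\rho_t \mid \F_{t-1}] \ \ge \ \tfrac{c\lambda\beta}{2}\,\frac{\Phi_{t-1}}{1 - \log \avg(\pi_{t-1})}, \]
where the last inequality is Lemma~\ref{lem: expected splitting lower bound} (using Assumption~\ref{assump: identifiable distance}). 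Since $\pi_{t-1}(g^*) \le 1$ we have $\Phi_{t-1} \ge \avg(\pi_{t-1})$, so $A_t \ge \tfrac{c\lambda\beta}{2}\, h(\avg(\pi_{t-1}))$ with $h(x) := x/(1 - \log x)$. The compensator of a nonnegative supermartingale is a.s.\ summable (Doob decomposition $\Phi_t = M_t - \sum_{s\le t} A_s$ with $M_t = \Phi_t + \sum_{s\le t}A_s$ a nonnegative martingale, hence a.s.\ convergent, and $\Phi_t$ a.s.\ convergent), so $\sum_t h(\avg(\pi_{t-1})) < \infty$ a.s.; thus $h(\avg(\pi_{t-1})) \to 0$, and since $h$ is continuous and positive on $(0,1]$ with $h(x) \to 0$ as $x \to 0^+$, we conclude $\avg(\pi_t) \to 0$ a.s.

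For (ii), apply Lemma~\ref{lem: pi-mass supermartingale} with $k = 1$ (valid since $\beta \le \lambda/10 \le \lambda$): $1/\pi_t(g^*)$ is a nonnegative supermartingale, bounded for each $t$ by $e^{\beta t}/\pi_o(g^*)$, hence a.s.\ convergent to a finite limit, so $\pi_t(g^*)$ converges a.s.\ to a strictly positive limit and in particular $\liminf_t \pi_t(g^*) > 0$ a.s. Combining (i) and (ii) as above finishes the proof. I expect the main obstacle to be the chaining in step (i): the split lower bound of Lemma~\ref{lem: expected splitting lower bound} degrades like $1/(1-\log\avg(\pi_{t-1}))$, so one must argue carefully that summability of the compensator still forces $\avg(\pi_t) \to 0$ and not merely that it is bounded — this is why we pass through the monotone auxiliary function $h$ — and it is precisely step (ii), keeping $\pi_t(g^*)$ bounded away from $0$, that ensures smallness of $\avg(\pi_t)$ (equivalently $\Phi_t$) translates into smallness of $\avg(\pi_t)/\pi_t(g^*)$.
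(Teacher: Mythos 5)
Your proposal is correct and follows essentially the same route as the paper: the same potential $\avg(\pi_t)/\pi_t(g^*)^2$, the same three ingredients (Lemma~\ref{lem: general k 0-1 loss decrease} with $k=2$, Lemma~\ref{lem: pi-mass supermartingale}, Lemma~\ref{lem: expected splitting lower bound}), and supermartingale convergence. The only difference is bookkeeping: the paper telescopes expectations to get $\sum_t \E[\avg(\pi_t)^2/\pi_t(g^*)^2]<\infty$ and invokes Fatou to conclude $\avg(\pi_t)/\pi_t(g^*)\to 0$ directly, whereas you extract a.s.\ summability of the Doob compensator and combine $\avg(\pi_t)\to 0$ with $\liminf_t \pi_t(g^*)>0$ (the latter being implicit in the paper's step $Y_t\to Y$ with $Y\geq 1$ finite); both are sound.
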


\subsection{Convergence rates}

We now turn to the setting where there is some fixed error threshold $\epsilon > 0$, and our goal is to find a distribution $\pi_t$ satisfying $\E_{g \sim \pi_t}[d(g,g^*)] \leq \epsilon$. The following theorem gives a bound on the resources that \textsc{ndbal} uses to find such a distribution.

\begin{restatable}{thm}{DBALGuarantees}
\label{thm: 0-1 loss DBAL guarantees}
Let $\epsilon, \delta  > 0$ and $\epsilon_o = \epsilon \delta \pi(g^*)/4$. If Assumption~\ref{assump: bounded noise} holds, $\G$ has average splitting index $(\rho, \epsilon_o, \tau)$ and \textsc{ndbal} is run with $\beta \leq \lambda/10$ and $\alpha=1/2$, then with probability $1-\delta$, \textsc{ndbal} encounters a distribution $\pi_t$ satisfying $\E_{g \sim \pi_t}[d(g,g^*)] \leq \epsilon$ while the resources used satisfy:
\begin{itemize}
	\item[(a)] $T \leq \frac{2}{\rho \lambda \beta (1-\beta)} \max \left( \ln \frac{1}{\epsilon \pi(g^*)^2}, \frac{2e^{2\beta}}{\rho \lambda \beta (1-\beta)} \ln \frac{1}{\delta} \right)$ rounds, with one query per round,
	\item[(b)] $m_t \leq \frac{1}{\tau} \log \frac{4t(t+1)}{\delta}$ atoms drawn per round, and
	\item[(c)] $n_t \leq O \left( \frac{1}{\rho \epsilon_o} \log \frac{(m_t + |\Y|)t(t+1)}{\delta} \right)$ structures sampled per round.
\end{itemize}
\end{restatable}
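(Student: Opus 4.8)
The plan is to reduce the theorem to driving the single potential $\Psi_t := \avg(\pi_t)/\pi_t(g^*)^2$ below $\epsilon$. The exponent $2$ is chosen precisely so that Lemma~\ref{lem: general k 0-1 loss decrease} and Lemma~\ref{lem: pi-mass supermartingale} both apply with $k=2$ -- this is exactly why the theorem imposes $\beta\le\lambda/10=\lambda/(2+2\cdot2^2)$. Once $\Psi_t\le\epsilon$, Lemma~\ref{lem: average diameter guarantee} together with $\pi_t(g^*)\le1$ gives $\E_{g\sim\pi_t}[d(g,g^*)]\le\avg(\pi_t)/\pi_t(g^*)=\pi_t(g^*)\Psi_t\le\Psi_t\le\epsilon$, which is the desired conclusion; so it suffices to bound the first round at which this happens.

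First I would record two uniform-in-$t$ ``good events.'' (i) By Lemma~\ref{lem: pi-mass supermartingale} with $k=1$, $(1/\pi_t(g^*))_t$ is a nonnegative supermartingale, so Doob's maximal inequality gives that with probability at least $1-\delta/4$ one has $\pi_t(g^*)>\delta\pi(g^*)/4$ for every $t$; the constant in $\epsilon_o=\epsilon\delta\pi(g^*)/4$ is calibrated so that, on this event, $\avg(\pi_t)\le\epsilon_o$ already implies $\E_{g\sim\pi_t}[d(g,g^*)]\le\epsilon_o/(\delta\pi(g^*)/4)=\epsilon$. (ii) By the average-splitting-index hypothesis, whenever $\avg(\pi_{t-1})>\epsilon_o$ a random atom $\rho$-average splits $\pi_{t-1}$ with probability at least $\tau$, so drawing $m_t=\frac1\tau\log\frac{4t(t+1)}\delta$ atoms fails to produce such an atom with probability at most $(1-\tau)^{m_t}\le e^{-\tau m_t}=\frac\delta{4t(t+1)}$ -- this proves (b) -- and, conditioned on success, Lemma~\ref{lem: select lemma} run with $\alpha=1/2$ and confidence $\frac\delta{4t(t+1)}$ guarantees, except with probability $\frac\delta{4t(t+1)}$, that \textsc{select} returns an atom that $\rho/2$-average splits $\pi_{t-1}$ while sampling at most $\frac{12}{(1/2)^2(1/2)\rho\,\avg(\pi_{t-1})}\log\frac{4(m_t+|\Y|)t(t+1)}\delta\le\frac{96}{\rho\epsilon_o}\log\frac{4(m_t+|\Y|)t(t+1)}\delta=O\!\big(\frac1{\rho\epsilon_o}\log\frac{(m_t+|\Y|)t(t+1)}\delta\big)$ pairs, hence twice that many structures -- this proves (c). A union bound over $t$ makes all of these events hold simultaneously with probability at least $1-\delta/2$. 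Put $T:=\min\{t:\avg(\pi_t)\le\epsilon_o\text{ or }\Psi_t\le\epsilon\}$; on the good events $\avg(\pi_{t-1})>\epsilon_o$ for every $t\le T$, so the splitting index and step (ii) are in force at each such round, and at round $T$ itself $\E_{g\sim\pi_t}[d(g,g^*)]\le\epsilon$ by the above.

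Next I would analyse the drift of $\Psi_t$. For $t\le T$ the queried atom $\rho/2$-average splits $\pi_{t-1}$, so Lemma~\ref{lem: general k 0-1 loss decrease} with $k=2$ gives $\E[\Psi_t\mid\F_{t-1},a_t]=(1-\Delta_t)\Psi_{t-1}$ with $\Delta_t$ of order $\rho\lambda\beta$ (the sharper constant for $k=2$ proved in the appendix contributes the $(1-\beta)$ factor). Moreover the soft update multiplies each weight $\pi_{t-1}(g)$ by a factor in $[e^{-\beta},1]$ before renormalizing, and the normalizer cancels in the ratio $\avg(\pi_t)/\pi_t(g^*)^2$, leaving the deterministic bound $\Psi_t\le e^{2\beta}\Psi_{t-1}$ for every outcome; in particular $\Psi_{t\wedge T}\ge e^{-2\beta}\epsilon_o$ throughout, since $\Psi_{t-1}>\epsilon$ for $t\le T$ and a single step cannot undershoot below $e^{-2\beta}$ of the previous value. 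With $\Psi_0\le\avg(\pi_0)/\pi_0(g^*)^2\le1/\pi(g^*)^2$, applying a standard concentration bound for multiplicative-drift processes to the stopped, rescaled supermartingale $(1-\Delta)^{-(t\wedge T)}\Psi_{t\wedge T}\ind[\text{good through }t\wedge T]$ then yields, with probability at least $1-\delta/4$,
\[
T\ \le\ \frac{2}{\rho\lambda\beta(1-\beta)}\max\!\left(\ln\frac1{\epsilon\pi(g^*)^2},\ \frac{2e^{2\beta}}{\rho\lambda\beta(1-\beta)}\ln\frac1\delta\right),
\]
the first term counting the $(1-\Delta)$-contractions needed to bring $\Psi$ from $1/\pi(g^*)^2$ down past $\epsilon$, and the second being the price the concentration pays for the possibly-failing good events and the expansion factor $e^{2\beta}$. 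A union bound over the three failure events ($\delta/2+\delta/4+\delta/4$) completes the proof, since bounds (b) and (c) were established along the way.

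The step I expect to be the main obstacle is the last one: the multiplicative-drift inequality for $\Psi_t$ holds only on the high-probability ``good query'' events, which are not themselves stopping times, so the concentration argument must be carried out while absorbing a per-round slack yet still producing a clean bound; it is precisely this, together with the deterministic expansion bound $\Psi_t\le e^{2\beta}\Psi_{t-1}$, that forces the $e^{2\beta}$ factor and the maximum-of-two-terms (rather than sum) shape. A secondary subtlety is the calibration of $\epsilon_o=\epsilon\delta\pi(g^*)/4$: it must simultaneously be small enough that $\avg(\pi_{t-1})>\epsilon_o$ persists up to $T$ -- needed both to invoke the splitting index and to control $n_t$ via Lemma~\ref{lem: select lemma} -- and large enough that $\avg(\pi_t)\le\epsilon_o$ already certifies $\E_{g\sim\pi_t}[d(g,g^*)]\le\epsilon$ through the lower bound on $\pi_t(g^*)$; this balance is the origin of the $\delta\pi(g^*)$ scaling of the target error.
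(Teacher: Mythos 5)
Your proposal is correct and follows essentially the same route as the paper: the potential $\avg(\pi_t)/\pi_t(g^*)^2$, a supermartingale (Doob/Ville) lower bound on $\pi_t(g^*)$ costing $\delta/4$, per-round union bounds for the splitting atom and for \textsc{select} giving (b) and (c), and then a concentration argument on the per-round multiplicative decrements bounded via $e^{-2\beta}\pi_{t-1}(g)\le\pi_t(g)\le e^{2\beta}\pi_{t-1}(g)$ -- the paper just instantiates your "standard multiplicative-drift concentration" explicitly as Azuma--Hoeffding on the increments $\Delta_t\in[1-e^{2\beta},1]$ followed by a contradiction. Your calibration of $\epsilon_o$ via the $k=1$ supermartingale and the stopping rule $\avg(\pi_t)\le\epsilon_o$ is a minor (and if anything cleaner) variant of the paper's bookkeeping, not a different approach.
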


\noindent While Theorem~\ref{thm: 0-1 loss DBAL guarantees} does provide rates of convergence, it has several issues.
\begin{itemize}
	\item[(i)] The number of structures sampled in each round is polynomial in $1/\pi(g^*)$, which can be large. 
	\item[(ii)] Theorem~\ref{thm: 0-1 loss DBAL guarantees} only guarantees that some posterior we encounter will satisfy $\avg(\pi_t)/\pi_t(g^*)^2 < \epsilon$; in particular, it does not tell us how to detect \emph{which} posterior satisfies this property.
	\item[(iii)] The average splitting index $(\rho, \epsilon_o, \tau)$ depends on $\pi(g^*)$. In settings where the average splitting index has been bounded~\citep{D05, TD17}, $\rho$ and $\tau$ depend on $\epsilon_o$, implying that the query complexity and the number of atoms drawn per round grow as $\pi(g^*)$ shrinks.
\end{itemize}
Without any further assumptions, issues (i) and (iii) are unavoidable even in the noiseless setting. To see why, consider a setting in which our prior only puts mass on two structures $g$ and $g^*$ where $d(g,g^*) \approx 1$. If structures are only accessed via a sampling oracle, detecting that there are two structures with positive probability mass requires $\Omega(1/\pi(g^*))$ samples. Moreover, in this scenario we have $ \E_{g' \sim \pi}[d(g', g^*)] > \epsilon$ whenever $\avg(\pi)/\pi(g^*) > \epsilon/2$.
Thus, with no further assumptions, we need to incur computational and data complexity costs that depend on $\pi(g^*)$. 

\subsection{Faster convergence rates}

As discussed above, when $g^*$ is completely independent of our prior $\pi$, {\sc ndbal} incurs high computational and data complexity costs. We show that this is avoided under the following Bayesian assumption on $g^*$.

\begin{assump}
\label{assump: Bayesian}
There exists a $\lambda \geq 1$ and distribution $\nu$ over $\G$ such that the true structure $g^*$ is drawn from $\nu$ and $1/\lambda \leq \nu(g)/\pi(g) \leq \lambda$ for every $g \in \G$.
\end{assump}

Assumption~\ref{assump: Bayesian} is a slight relaxation of the traditional Bayesian assumption. Here we do not require $g^*$ to be drawn from $\pi$ itself, but rather only that it is drawn from some distribution that is close to $\pi$. 

For ease of presentation, we also assume that we are in the completely noiseless setting. In the appendix, we show that we there is a certain amount of noise that we can tolerate and still get very fast rates of convergence. Formally, we make the following assumption.
\begin{assump}
\label{assump: noiseless}
There is a $ g^* \in \G$ such that $\eta(g^*(a) \, | \, a) = 1$.
\end{assump}
\noindent With Assumption~\ref{assump: noiseless}, we will run {\sc ndbal} with $\beta = \infty$ and get the posterior update in equation~\eqref{eqn: noiseless update}.

Together, Assumptions~\ref{assump: Bayesian} and~\ref{assump: noiseless} immediately add more structure to our setting. In particular, if we have query/response pairs $(a_1, y_1), \ldots, (a_t, y_t)$, then the true posterior takes the form
\[ \nu_t(g) \ \propto \  \nu(g) \ind[g(a_i) = y_i \text{ for } i=1,\ldots,t] . \]
Without access to $\nu$, there is no way to compute $\nu_t$ directly; however, we may still hope that a random draw from our distribution $\pi_t$ is close to a random draw from $\nu_t$, i.e. that the quantity
\[ D(\pi_t, \nu_t) \ = \ \E_{g \sim \pi_t, g^* \sim \nu_t}[d(g, g^*)] \] 
is small. Thus, our new objective is to find a distribution $\pi_t$ satisfying $D(\pi_t, \nu_t) \leq \epsilon$. Given this new objective, we relax the requirement that $\G$ is finite. Instead, we assume that $\G$ has bounded \emph{graph dimension}~\citep{B89}, a multiclass generalization of the VC dimension.
\begin{defn}
Let $S = \{a_1, \ldots, a_m \}$ be a set of atomic questions. We say $\G$ \emph{shatters} $S$ if there exists $f \, : \, S \rightarrow \Y$ such that for all $T \subset S$, there exists $g_T \in \G$ such that $g_T(x) = f(x)$ when $x \in T$ and $g_T(x) \neq f(x)$ when $x \in S \setminus T$. The \emph{graph dimension} of $\G$ is the size of the largest $S$ such that $\G$ shatters $S$.
\end{defn}

Finally, we need to decide when to stop making queries. As discussed in the previous section, one of the shortcomings of Theorem~\ref{thm: 0-1 loss DBAL guarantees} is that it gives no guidance on when we have found a good distribution $\pi_t$. To address this, we use the stopping rule suggested by \citet{TD17}: estimate $\avg(\pi_t)$ by sampling $\tilde{O}(\lambda^2/\epsilon)$ pairs of structures at the beginning of each round and stop if this estimate is below $3\epsilon/(4\lambda^2)$. Given this modification, we can improve the guarantees of \textsc{ndbal}.

\begin{restatable}{thm}{NoiselessBayesianConvergence}
\label{thm: noiseless Bayesian convergence}
Suppose $\G$ has average splitting index $(\rho, \epsilon/(2\lambda^2), \tau)$ and graph dimension $d_G$. If Assumptions~\ref{assump: Bayesian} and~\ref{assump: noiseless} hold, then with probability $1- \delta$, modified {\sc ndbal} terminates with a distribution $\pi_t$ satisfying $D(\pi_t, \nu_t) \leq \epsilon$ while using the following resources:
\begin{itemize}
	\item[(a)] $T \leq {O}\left(\frac{d_G}{\rho} \left( \log \frac{|\Y| \lambda}{\epsilon \tau \delta} + \log^2 \frac{d_G}{\rho} \right) \right)$ rounds with one query per round,
	\item[(b)] $m_t \leq O \left(\frac{1}{\tau} \log \frac{t}{\delta} \right)$ atoms drawn per round, and
	\item[(c)] $n_t \leq O \left(\left(\frac{\lambda^2}{\epsilon \rho} \right) \log \frac{(m_t + |\Y|)t}{\delta} \right)$ structures sampled per round.
\end{itemize}
\end{restatable}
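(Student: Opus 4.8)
The plan is to reduce the theorem to an analysis of two coupled potentials along the version space, after conditioning on the realized target. By Assumption~\ref{assump: noiseless} the algorithm is run with $\beta=\infty$, so the update is the hard one in~\eqref{eqn: noiseless update} and, conditioning on $g^*\sim\nu$, every response is fixed as $y_t=g^*(a_t)$; the version space after $t$ rounds is $V_t=\{g\in\G: g(a_i)=g^*(a_i),\ i\le t\}$, and $\pi_t$ (resp.\ $\nu_t$) is $\pi$ (resp.\ $\nu$) conditioned on $V_t$. I would first record the elementary consequences of Assumption~\ref{assump: Bayesian}: since $\nu_t(g)/\pi_t(g)=(\nu(g)/\pi(g))\cdot(\pi(V_t)/\nu(V_t))$ and both factors lie in $[\lambda^{-1},\lambda]$, we get $\nu_t(g)/\pi_t(g)\in[\lambda^{-2},\lambda^2]$ on $V_t$, and therefore $D(\pi_t,\nu_t)\le\lambda^2\,\avg(\pi_t)$. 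This reduces the target ``$D(\pi_t,\nu_t)\le\epsilon$'' to ``$\avg(\pi_t)\le\epsilon/\lambda^2$''. Next I would dispatch correctness of the stopping rule: the estimate $\widehat{\avg}(\pi_t)$ is the mean of $\Otilde(\lambda^2/\epsilon)$ i.i.d.\ samples of $d(g,g')\in[0,1]$, so a \emph{multiplicative} Chernoff bound (it must be multiplicative, not additive, which is exactly what makes $\Otilde(\lambda^2/\epsilon)$ rather than $\Otilde(\lambda^4/\epsilon^2)$ samples suffice), with per-round slack $\delta_t=\delta/\poly(t)$ summed to at most $\delta/4$, guarantees simultaneously over all rounds that $\widehat{\avg}(\pi_t)\ge 3\epsilon/(4\lambda^2)$ whenever $\avg(\pi_t)>\epsilon/\lambda^2$ and $\widehat{\avg}(\pi_t)<3\epsilon/(4\lambda^2)$ whenever $\avg(\pi_t)\le\epsilon/(2\lambda^2)$. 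The first half gives correctness: whenever the algorithm stops, $\avg(\pi_t)\le\epsilon/\lambda^2$, hence $D(\pi_t,\nu_t)\le\epsilon$.

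The engine of the rates is a one-step inequality for the pair of potentials $\Phi_t:=\log(1/\avg(\pi_t))$ and $\Psi_t:=\log(1/\pi(V_t))$. Suppose round $t$ is reached without stopping, so $\avg(\pi_{t-1})>\epsilon/(2\lambda^2)$; then $\pi_{t-1}$ is $(\rho,\tau)$-average splittable by the splitting-index hypothesis, so drawing $m_t=O(\tau^{-1}\log(t/\delta))$ atoms yields one that $\rho$-average splits $\pi_{t-1}$ with probability at least $1-\delta_t$ (this gives part (b)), and then \textsc{select} with $\alpha=\tfrac12$ returns $a_t$ that $(\rho/2)$-average splits $\pi_{t-1}$ while sampling $O\!\big((\lambda^2/(\rho\epsilon))\log((m_t+|\Y|)t/\delta)\big)$ pairs by Lemma~\ref{lem: select lemma}, since $\avg(\pi_{t-1})>\epsilon/(2\lambda^2)$; combined with the $\Otilde(\lambda^2/\epsilon)$ stopping-rule pairs this is part (c). Write $p:=\pi_{t-1}(\G_{a_t}^{y_t})=\pi(V_t)/\pi(V_{t-1})$, so that $\Psi_t-\Psi_{t-1}=\log(1/p)$. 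The split property at the realized response $y_t$ reads $p^2\,\avg(\pi_t)\le(1-\rho/2)\,\avg(\pi_{t-1})$, i.e.\ in log form $\Phi_t\ge\Phi_{t-1}+\rho/2-2(\Psi_t-\Psi_{t-1})$; and since $p^2\avg(\pi_t)\le\avg(\pi_{t-1})$ holds unconditionally, $\Phi_t\ge\Phi_{t-1}-2(\Psi_t-\Psi_{t-1})$ even on the rare rounds where \textsc{select} fails. On the good event that \textsc{select} succeeds every round, telescoping and using $\Phi_0\ge 0$, $\Psi_0=0$ gives $\Phi_T\ge(\rho/2)\,T-2\Psi_T$.

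The remaining, and I expect hardest, step is to control $\Psi_T=\log(1/\pi(V_T))$ uniformly; this is the one place the infinitude of $\G$ must be tamed, and it is where the graph dimension is used. The $T$ queried atoms partition $\G$ into at most $\Pi(T):=(eT|\Y|/d_G)^{d_G}$ cells, by a Sauer--Shelah-type bound for graph dimension $d_G$, and $V_T$ is precisely the cell containing $g^*$. Because $\nu(C)\le\lambda\pi(C)$ for every cell $C$, we have $\Pr_{g^*\sim\nu}[\Psi_T>s]=\nu\big(\bigcup_{C:\pi(C)<e^{-s}}C\big)\le\lambda\,\Pi(T)\,e^{-s}$, so with probability at least $1-\delta/4$, $\Psi_T\le B:=\log(4\lambda\Pi(T)/\delta)=O\!\big(d_G\log(T|\Y|/d_G)+\log(\lambda/\delta)\big)$. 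The main obstacle here is really just making this cell-counting argument clean — in particular confirming the multiclass Sauer bound applies to graph dimension and that only queried (not merely drawn) atoms matter for $V_T$ — since the passage from $\nu$ to $\pi$ and the Markov/union step are routine.

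Finally I would close the loop. Suppose the algorithm has not stopped by round $T$; then $\avg(\pi_{t-1})>\epsilon/(2\lambda^2)$ at every round $t\le T$ (otherwise the stopping rule would have fired), so on the good event the telescoped inequality gives $\Phi_T\ge(\rho/2)T-2B$. Choosing $T=\Theta\!\big(\rho^{-1}(B+\log(\lambda/\epsilon))\big)$ forces $\Phi_T>\log(\lambda^2/\epsilon)$, i.e.\ $\avg(\pi_T)<\epsilon/(2\lambda^2)$, which contradicts non-termination; hence the algorithm stops within $T$ rounds, and by correctness with output $\pi_t$ satisfying $D(\pi_t,\nu_t)\le\epsilon$. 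Resolving the implicit $\log T$ on the right-hand side of $T=\Theta(\rho^{-1}B+\cdots)$ and folding in the $\tau$, $|\Y|$, and $\delta$ dependencies coming out of $m_t$, Lemma~\ref{lem: select lemma}, and the union bounds yields $T=O\!\big(\tfrac{d_G}{\rho}(\log\tfrac{|\Y|\lambda}{\epsilon\tau\delta}+\log^2\tfrac{d_G}{\rho})\big)$ as in part (a), the extra $\log^2\tfrac{d_G}{\rho}$ absorbing the slack from this substitution. A union bound over the three failure events --- \textsc{select} over all rounds, the $\Psi_T$ tail, and the stopping-rule estimates --- of total probability below $\delta$ finishes the proof.
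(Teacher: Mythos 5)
Your overall architecture is the same as the paper's: reduce $D(\pi_t,\nu_t)\le\epsilon$ to $\avg(\pi_t)\le\epsilon/\lambda^2$ via Assumption~\ref{assump: Bayesian}, verify the stopping rule as in Lemma~\ref{lem: stopping criterion}, get per-round resources (b) and (c) from splittability plus Lemma~\ref{lem: select lemma}, drive the potential $\pi(V_t)^2\avg(\pi_t)\le e^{-\rho t/2}$, and close by lower-bounding $\pi(V_T)$ with a graph-dimension (Sauer-type) argument. However, the step you yourself flag as the hardest one contains a genuine gap, and your proposed resolution goes the wrong way. You bound $\Psi_T=\log(1/\pi(V_T))$ by applying the cell-counting argument to the partition of $\G$ induced by the $T$ \emph{queried} atoms, claiming $\Pr_{g^*\sim\nu}[\Psi_T>s]\le\lambda\,\Pi(T)\,e^{-s}$ with $\Pi(T)=(eT|\Y|/d_G)^{d_G}$. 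This is not valid: the queried atoms are chosen adaptively as a function of the observed responses, which in the noiseless setting are a function of $g^*$, so the partition is itself correlated with $g^*$ and you cannot union bound over its cells as if they were fixed. The claim is in fact false in general: with threshold classifiers on a line ($d_G=1$, $\nu=\pi$ uniform) an adaptive binary-search querying strategy yields $\nu(V_T)=2^{-T}$ deterministically, i.e.\ $\Psi_T=\Theta(T)$, whereas your bound would force $\Psi_T=O(\log T)$ with high probability. Adaptive querying can realize exponentially many distinct version spaces even when a fixed set of $T$ atoms induces only polynomially many cells — that is precisely why the version space can shrink fast, and why an unconditional upper bound on $\Psi_T$ needs a different argument.

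The correct fix — and the one the paper uses — is the opposite of your parenthetical hope that ``only queried (not merely drawn) atoms matter'': define $V_{T}^{*}=\{g:\,g(a)=g^*(a)\text{ for \emph{all} } m^{(T)}=\sum_{t\le T}m_t \text{ drawn atoms}\}$. The drawn atoms are i.i.d.\ from $\D$ and independent of $g^*$, so Lemma~\ref{lem: Bayesian posterior lower bound} (via the multiclass Sauer bound, Lemma~\ref{lem: generalization of Sauer's lemma}) applies to the partition they induce, and since every query is one of the drawn atoms, $V_{T}^{*}\subseteq V_T$, hence $\pi(V_T)\ge\pi(V_{T}^{*})\ge\nu(V_{T}^{*})/\lambda$. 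This gives $\Psi_T\le O\bigl(d_G\log(m^{(T)}|\Y|/d_G)+\log(T/\delta)+\log\lambda\bigr)$ with $m^{(T)}\le(T/\tau)\log(T(T+1)/\delta)$, which is exactly where the $\log(1/\tau)$ and the extra $\log^2$ terms in part (a) come from; with your (incorrect) $\Pi(T)$ the bound in (a) would also come out with the wrong dependence. Once this substitution is made, the rest of your argument (telescoping $\Phi_T\ge(\rho/2)T-2\Psi_T$ on the good event and resolving the implicit $\log T$) matches the paper's proof.
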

\noindent In the appendix, we also consider the noisy setting.

\subsection{Lower bounds}

The results above demonstrate that the average splitting index provides upper bounds on the resource complexity of \textsc{ndbal} in this generic interactive structure discovery setting. 
The following theorem shows that, in fact, some dependence on the average splitting index is inevitable for \emph{any} learner in this setting.

\begin{restatable}{thm}{AverageSplittingLowerBound}
\label{thm: average splitting lower bound}
Fix $\G$, $\D$ and $d(\cdot, \cdot)$. If $\G$ does not have average splitting index $(\frac{\rho}{4 \lceil \log 1/\epsilon \rceil}, 2\epsilon, \tau)$ for some $\rho,\epsilon \in (0,1)$ and $\tau \in (0,1/2)$, then any interactive learning strategy which with probability $> 3/4$ over the random sampling from $\D$ finds a structure $g \in \G$ within distance $\epsilon/2$ of any target in $\G$ must draw at least $1/\tau$ atoms from $\D$ or must make at least $1/\rho$ queries.
\end{restatable}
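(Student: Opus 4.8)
The plan is to run an adversarial (minimax) argument in the style of Dasgupta's splitting-index lower bound, using the average diameter as the progress measure. Write $\rho' = \rho/(4\lceil\log 1/\epsilon\rceil)$. By hypothesis there is a distribution $\pi$ over $\G$ with $\avg(\pi) > 2\epsilon$ that is not $(\rho',\tau)$-average splittable, i.e.\ the set $B \subseteq \A$ of atoms that $\rho'$-average split $\pi$ satisfies $\D(B) < \tau$. Expose the prior $\pi$ to the learner and draw the target $g^* \sim \pi$; since a success probability $> 3/4$ against every fixed target implies a success probability $> 3/4$ against a $\pi$-random target, it suffices to exhibit a failure under the assumption that the learner draws fewer than $1/\tau$ atoms and issues fewer than $1/\rho$ queries.

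The first step eliminates the atom sampling. Atoms are drawn i.i.d.\ from $\D$ with $\D(B) < \tau$, so when fewer than $1/\tau$ of them are drawn, the event $E$ that none falls in $B$ has probability at least $(1-\tau)^{1/\tau} \geq 1/4$ (using $\tau < 1/2$). Conditioned on $E$, every atom the learner can possibly query fails to $\rho'$-average split $\pi$, and since $\Pr(E)$ is an absolute constant it is enough to force a failure on $E$.

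The heart of the argument is to show that, on $E$ and with fewer than $1/\rho$ queries, the posterior $\pi_t = \pi|_{V_t}$ (with $V_t = \{g : g(a_s) = y_s \text{ for } s \le t\}$ the surviving version space) still has $\avg(\pi_t) > \epsilon$. The naive per-round estimate — that a query failing to $\rho'$-average split $\pi_{t-1}$, answered with the response maximizing $\pi_{t-1}(\G_{a_t}^y)^2\,\avg(\pi_{t-1}|_{\G_{a_t}^y})$, can only shrink $\avg$ by a factor $1-\rho'$ — does not apply, because failing to $\rho'$-split the base distribution $\pi$ does not imply failing to $\rho'$-split the conditional distribution $\pi_{t-1}$. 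Closing this gap is exactly what the factor $4\lceil\log 1/\epsilon\rceil$ buys: following Dasgupta, I would introduce an amortized potential that sums, over the $\lceil\log 1/\epsilon\rceil$ dyadic scales of the average diameter, the mass of ``far pairs'' at each scale still consistent with the responses, let the adversary answer each query so as to minimize the drop in this aggregate, and show that an atom failing to $\rho'$-average split $\pi$ can shrink the aggregate by at most a $1-\Theta(\rho)$ factor per round; after fewer than $1/\rho$ rounds a constant fraction survives, which forces $\avg(\pi_T) > \epsilon$. I expect this multi-scale bookkeeping — in particular, verifying that ``not $\rho'$-average splittable at the base distribution'' propagates through the dyadic decomposition to the conditionals encountered along the way — to be the main obstacle.

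Finally I would convert $\avg(\pi_T) > \epsilon$ into a failure probability. Since $d \le 1$, $\avg(\pi_T) \le \tfrac{\epsilon}{2} + \Pr_{g,g'\sim\pi_T}[d(g,g') > \tfrac{\epsilon}{2}]$, so two fresh draws from the surviving posterior are more than $\epsilon/2$-apart with probability bounded below; since the learner's reported structure is confined to $V_T$ (it can only report a structure consistent with the observed responses, reached through the sampler for $\pi$), a short two-point argument over $V_T$ shows the learner is within $\epsilon/2$ of $g^* \sim \pi_T$ with probability bounded away from $1$. This contradicts the assumed success probability on the constant-probability event $E$, so the learner must after all draw at least $1/\tau$ atoms or make at least $1/\rho$ queries.
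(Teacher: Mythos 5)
Your first step (fewer than $1/\tau$ atoms means that with probability at least $(1-\tau)^{1/\tau}\geq 1/4$ no drawn atom splits well) matches the paper, but the heart of your argument is missing, and you say so yourself: the ``multi-scale bookkeeping'' that is supposed to show that atoms failing to $\rho'$-average split the prior $\pi$ cannot shrink the posterior average diameter too fast over $<1/\rho$ rounds is exactly the step you do not carry out, and it is doubtful it can be carried out in that direction. The known dyadic-scale argument (Lemma~3 of \citet{TD17}, restated here as Theorem~\ref{thm: splitting implies average splitting}) converts \emph{edge} splitting into \emph{average} splitting of a distribution; it does not let you propagate ``$\pi$ is not $(\rho',\tau)$-average splittable'' into control of the conditionals $\pi_{t-1}$ encountered along an interaction. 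The paper sidesteps this entirely by taking the contrapositive of Theorem~\ref{thm: splitting implies average splitting}: failure of the average splitting index at $(\rho/(4\lceil\log 1/\epsilon\rceil),2\epsilon,\tau)$ implies failure of the \emph{ordinary} splitting index at $(\rho,\epsilon,\tau)$, which hands you a finite edge set $E$ of pairs at distance $>\epsilon$ such that fewer than a $\tau$-fraction of atoms $\rho$-split $E$. The adversary then answers each query with the response retaining the most edges of $E$; since no available atom $\rho$-splits $E$, each query eliminates fewer than $\rho|E|$ edges \emph{of the original set}, so the accounting is additive and after $k<1/\rho$ queries some edge survives -- no statement about splitting the evolving posterior is ever needed. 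This is the structural reason the theorem is stated with the $4\lceil\log 1/\epsilon\rceil$ degradation and the doubled diameter threshold $2\epsilon$: they are inherited from the splitting-to-average-splitting reduction, not from a potential-function analysis of the learner's posteriors.

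Your endgame also does not close as written. From $\avg(\pi_T)>\epsilon$ and $d\leq 1$ you only get $\pr_{g,g'\sim\pi_T}[d(g,g')>\epsilon/2]>\epsilon/2$, so the failure probability you extract is $\Omega(\epsilon)$ (further multiplied by $\pr(E)\geq 1/4$), which does not contradict a success guarantee of $3/4$; and the claim that the learner's output is confined to $V_T$ is unjustified, since nothing forces the learner to report a consistent structure. The edge-based argument avoids both problems: a specific surviving edge $(g,g')$ with $d(g,g')>\epsilon$ has both endpoints consistent with every response given, the adversary takes the target to be one of these two endpoints, and the learner's output distribution is the same under either choice, so it cannot be within $\epsilon/2$ of the target with probability $>3/4$ on the constant-probability event that no good atom was drawn.
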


The proof of Theorem~\ref{thm: average splitting lower bound} is similar to the one by~\citet{D05} for lower bounding active learning, but adjusted to our more general setting. 


\section{Illustrative examples}
\label{section: examples}

In this section, we look at two specific structure learning settings. The first setting is the problem of learning a ranking over objects with features, where we provide bounds on the average splitting index. Combined with the results from Section~\ref{section: theoretical guarantees}, this gives us bounds on the performance of \textsc{ndbal}.

The second setting is the problem of clustering the real line into $k$ intervals. Here we demonstrate that the choice of structure distance can greatly influence the number of queries needed. In particular, when the structure distance only concerns a constant number of clusters, the label complexity of interactive structure discovery can be far smaller than when a more generic distance depending on the whole structure is used. 

\subsection{Feature-based rankings}

In feature-based ranking, we have distribution $\mu$ over objects, each with corresponding feature vector $x \in \R^d$. A ranking corresponds to a weight vector $w \in \G = S^{d-1}$ (the unit sphere), where $w$ ranks $x$ over $y$ if and only if $\langle w, x \rangle > \langle w, y \rangle$, in which case we write $w(x,y) = 1$ and 0 otherwise.

A natural ranking distance here is the following generalization of the Kendall tau distance:
\[ d_r(w,w') = \pr_{x,y \sim \mu}(w(x,y) \neq w'(x,y)). \]
The following theorem bounds the average splitting index when $\mu$ is spherically symmetric.
\begin{restatable}{thm}{RankingIndex}
\label{thm: ranking index}
Suppose $\mu$ is spherically symmetric. Under distance $d_r(\cdot,\cdot)$, $\G$ has average splitting index $(\frac{1}{16 \lceil \log(2/\epsilon) \rceil}, \epsilon, c \epsilon)$ for some absolute constant $c > 0$.
\end{restatable}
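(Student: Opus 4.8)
The plan is to show that for any distribution $\pi$ over $\G = S^{d-1}$ with $\avg(\pi) > \epsilon$, a randomly drawn pair of objects $(x,y)$ constitutes a $\rho$-average split with probability at least $c\epsilon$, where $\rho = \frac{1}{16\lceil \log(2/\epsilon)\rceil}$. The key reduction is that querying the pair $(x,y)$ asks "does $w$ rank $x$ over $y$?", which is exactly asking on which side of the hyperplane $\{w : \langle w, x - y\rangle = 0\}$ the ranking $w$ lies. So this is structurally identical to the halfspace-through-the-origin active learning problem, and we should be able to invoke the splitting-index machinery for homogeneous linear separators that \citet{D05} developed, adapted to the \emph{average} splitting index as in \citet{TD17}.

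Concretely, I would proceed as follows. First, rewrite the condition: a direction $u = (x-y)/\|x-y\|$ is a $\rho$-average split of $\pi$ iff $\max_{b\in\{0,1\}} \pi(\G_u^b)^2\,\avg(\pi|_{\G_u^b}) \le (1-\rho)\avg(\pi)$, where $\G_u^b$ is a hemisphere. Since $\mu$ is spherically symmetric, the direction $u$ induced by a random pair $(x,y)\sim\mu^2$ is itself spherically symmetric (its distribution on $S^{d-1}$ is rotation-invariant), so it suffices to show that a \emph{uniformly random} direction on the sphere $\rho$-average-splits $\pi$ with probability $\ge c\epsilon$. Second, I would appeal to (the average-splitting analogue of) Dasgupta's result that for homogeneous halfspaces under a uniform query direction, the splitting index at scale $\epsilon$ is $\Omega(1/\log(1/\epsilon))$ with constant probability — more precisely, that the class of hemispheres has average splitting index $(\Omega(1/\log(1/\epsilon)), \epsilon, \Omega(\epsilon))$. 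The $\lceil\log(2/\epsilon)\rceil$ factor in the statement and the constant $1/16$ strongly signal that the argument tracks Dasgupta's dyadic-scale decomposition: partition the "budget" $\avg(\pi)$ into $O(\log(1/\epsilon))$ scales, argue that at some scale a random hyperplane cuts a constant fraction of the mass-weighted-diameter with probability $\Omega(\epsilon)$, and lose a $1/\log(1/\epsilon)$ factor in $\rho$ by union-bounding over scales.

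The main obstacle — and the step I would spend the most care on — is bridging from the \emph{combinatorial} splitting index (which counts pairs of structures that a hyperplane separates) to the \emph{average} splitting index defined in~\eqref{eqn: average splitting definition}, which involves the quadratic-in-mass quantity $\pi(\G_u^b)^2\,\avg(\pi|_{\G_u^b})$ and the distance $d_r$ rather than mere counting. The device for this, following \citet{TD17}, is the identity $\E[d(g,g')\ind[g(a)=y=g'(a)]] = \pi(\G_a^y)^2\avg(\pi|_{\G_a^y})$: so $u$ $\rho$-average-splits $\pi$ exactly when, for $g,g'\sim\pi$ independent, $\E[d_r(g,g')(1-\ind[\text{$g,g'$ on same side of }u])] \ge \rho\,\avg(\pi)$ for the worse side too. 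Now $d_r(g,g') = \pr_{x',y'}(g(x',y')\neq g'(x',y'))$ is itself the $\mu^2$-probability that the random hyperplane normal to $(x'-y')$ separates $g$ and $g'$ on the sphere; so $\avg(\pi)$ equals the probability, over an independent pair $g,g'\sim\pi$ and an independent random direction, that the direction separates them. This "distance = separation probability" self-similarity is what lets the spherical geometry do all the work: it reduces everything to estimating, for random pairs of points on $S^{d-1}$ weighted by $\pi$, the probability that a random great-sphere hyperplane separates them — which is proportional to their geodesic angle — and then running Dasgupta's scale argument on these angles. I would verify the constants ($1/16$, the factor $c$) at the end by carefully bookkeeping the three places losses occur: the "worse of the two sides" factor of $2$, the dyadic-scale union bound giving $1/\log(2/\epsilon)$, and a constant-fraction cut probability from a random hyperplane.
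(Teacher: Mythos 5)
Your proposal takes essentially the same route as the paper: reduce to homogeneous halfspaces via the observations that querying the pair $(x,y)$ is the hyperplane with normal $x-y$, that $x-y$ is spherically symmetric, and that $d_r(w,w')$ equals the angle between $w$ and $w'$ divided by $\pi$, then invoke Dasgupta's splitting analysis together with the splitting-to-average-splitting conversion of \citet{TD17}. The only difference is bookkeeping: the paper first establishes the combinatorial splitting index $(1/4,\epsilon,c_o\pi\epsilon/2)$ by replaying Dasgupta's random two-dimensional projection argument and then applies Theorem~\ref{thm: splitting implies average splitting} as a black box, which is exactly where the $1/16$ and the $\lceil \log(2/\epsilon)\rceil$ arise, so the combinatorial-to-average bridge you single out as the main obstacle (and partly attribute to the halfspace analysis itself) is handled by that generic conversion rather than by a bespoke dyadic argument on $\avg(\pi)$.
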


Combining Theorem~\ref{thm: ranking index} with Theorems~\ref{thm: 0-1 loss DBAL guarantees} and~\ref{thm: noiseless Bayesian convergence}, the label complexity of \textsc{ndbal} in this setting grows poly-logarithmically in $1/\epsilon$. 

\subsection{Clustering on the line} 

Consider the problem of clustering the real line into $k$ intervals where there is some interval $\I$ that we know should be clustered together under the ground truth clustering, and our goal is to identify the other points on the line that should be clustered with $\I$. 

Say there is some measure $\mu$ over the real line, and let $\G_{k,\I}$ denote the set of clusterings of the real line into $\leq k$ intervals such that $\I$ is contained completely in one of these intervals. Note that a clustering $g \in \G_{k,\I}$ can be described by $k-1$ reals $a_1 \leq a_2 \leq \cdots \leq a_{k-1}$. 

The atomic questions consist of pairs of points $(x,y)$, where $g(x,y) = 1$ if they belong to the same cluster and 0 otherwise. A natural distribution $\D$ over atomic questions is the product distribution $\mu \otimes \mu$, and a natural clustering distance is given by
\[ d_c(g,g') = \pr_{x,y \sim \mu}(g(x,y) \neq g'(x,y)). \]
However, if our goal is to identify the cluster that $\I$ belongs to, then a more intuitive clustering distance to use is given by
\[ d_\I(g,g') = \pr_{x \sim \mu}(g(x,\I) \neq g'(x,\I)) \]
where $g(x,\I) = g(x,z)$ for all $z \in \I$. 

Given these two notions of clustering distance, as well as our underlying goal of identifying the cluster that $\I$ belongs to, we ask whether there is a query complexity improvement in using an interactive structure discover algorithm such as \textsc{ndbal} with distance $d_\I(\cdot,\cdot)$ as opposed to just learning with the standard clustering distance $d_c(\cdot,\cdot)$. Informally, we show the following.

\begin{thm}[Informal statement]
\label{thm: clustering separation}
There are settings in which learning under distance $d_c(\cdot,\cdot)$ with any interactive learning algorithm requires exponentially more queries than learning under $d_\I(\cdot,\cdot)$ with \textsc{ndbal}.
\end{thm}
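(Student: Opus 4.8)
The plan is to exhibit one concrete setting and prove two complementary bounds in it. Take $\mu$ to be uniform on $[0,1]$, let $\I=[0,1/16]$ be the leftmost interval, and set $k-1=\Theta(1/\epsilon)$. I would show (a) that modified \textsc{ndbal} run under $d_\I$ reaches accuracy $\epsilon$ using only $\poly\log(1/\epsilon)$ queries, independent of $k$, and (b) that \emph{any} interactive algorithm needs $\Omega(1/\epsilon)$ queries to reach $d_c$-accuracy small enough to be useful, so learning under $d_c$ costs exponentially (in $\log(1/\epsilon)$) more.

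For the upper bound, the key structural fact is that since $\I$ is leftmost, $d_\I(g,g')$ depends only on the right endpoint $r$ of the cluster containing $\I$ --- concretely $d_\I(g,g')=\mu(\{x: r(g)\wedge r(g')<x\le r(g)\vee r(g')\})$ --- and an atom $(x,z)$ with $z\in\I$ returns exactly $\ind[r\ge x]$, a threshold query on $r$. I would therefore run \textsc{ndbal} with a prior $\pi$ supported on the low-complexity subclass $\G'\subseteq\G_{k,\I}$ of two-interval clusterings $\{(-\infty,r],(r,\infty)\}$ with $r\ge\sup\I$, discretized to an $O(\epsilon)$-grid: $\G'$ has graph dimension $O(1)$, contains a zero-$d_\I$-distance representative $g_0$ of every target $g^*\in\G_{k,\I}$, has $\pi(g_0)=\Omega(\epsilon^2)$, and --- by the ``find a good threshold'' argument underlying Theorem~\ref{thm: ranking index} --- has average splitting index $\bigl(\Omega(1/\log(1/\epsilon')),\epsilon',\Omega(\epsilon')\bigr)$ under $d_\I$ at every scale $\epsilon'$. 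One then checks that every atom \textsc{select} is willing to query (every atom that $\rho$-average splits the current posterior over $\G'$ w.r.t. $d_\I$) is either a query with an endpoint in $\I$ or a short query near the believed location of $r$; on every such atom the true target $g^*$ and its representative $g_0$ give identical answers, so \textsc{ndbal} is effectively operating realizably over $\G'$. Plugging $\pi(g_0)$ and the splitting index into Theorem~\ref{thm: 0-1 loss DBAL guarantees} (with the stopping rule preceding Theorem~\ref{thm: noiseless Bayesian convergence}) gives $O(\log^2(1/\epsilon))$ rounds, hence that many queries.

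For the lower bound, I would build a packing: partition $[1/8,1]$ into $n=k-1=\Theta(1/\epsilon)$ consecutive blocks of equal $\mu$-measure $w=\Theta(\epsilon)$, put cut $a_i$ in block $i$, and allow $a_i$ to sit at either endpoint of its block; all $2^n$ resulting clusterings lie in $\G_{k,\I}$ with $\I\subseteq(-\infty,a_1]$. A short calculation shows that $d_c$ between two such clusterings is $\Theta(w^2)$ times the Hamming distance of their left/right patterns --- mismatched cuts affect disjoint sets of local pairs, each mismatch contributing $\Theta(w^2)$ --- so with $w=\Theta(\epsilon)$ chosen appropriately and the Gilbert--Varshamov bound we get $2^{\Omega(1/\epsilon)}$ clusterings pairwise $d_c$-separated by at least $2\epsilon$. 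Since $d_c$ is an $L^1$-type pseudometric it obeys the triangle inequality, so any structure within $d_c$-distance $\epsilon/2$ of the target pins the target down within this family; as each atomic query has a binary answer, an algorithm making $q$ queries has at most $2^q$ transcripts and cannot succeed with probability $>3/4$ on all $2^{\Omega(1/\epsilon)}$ targets unless $q=\Omega(1/\epsilon)$. Since $d_\I(g,g')\le d_c(g,g')/\mu(\I)=16\,d_c(g,g')$ --- which is exactly why ``learn the whole clustering accurately'' is a natural but wasteful route to the $\I$-cluster goal --- reaching the goal through $d_c$ forces $\Omega(1/\epsilon)$ queries, versus $\poly\log(1/\epsilon)$ for \textsc{ndbal} under $d_\I$.

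The main obstacle is the reduction in the upper bound: arguing rigorously that \textsc{ndbal} run with the $\G'$-supported prior never spends queries on the irrelevant far-away cuts, so that its complexity scales with the $O(1)$ complexity of $\G'$ rather than the $\Theta(1/\epsilon)$ complexity of $\G_{k,\I}$. This amounts to proving that any atom which meaningfully average-splits a posterior over $\G'$ (with respect to $d_\I$) is one on which the true target and its $\G'$-representative agree, so the realizable analysis applies; establishing the $\Omega(\epsilon)$ lower bound on the splitting probability $\tau$ for threshold queries against an arbitrary posterior over $r$ is the remaining piece of real work.
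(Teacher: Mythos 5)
Your lower bound takes a genuinely different route from the paper's. The paper invokes the ``star-shaped'' corollary from \citet{D05} (Lemma~\ref{lem: star-shaped bound}): with $k=\Theta(1/\sqrt{\epsilon})$ evenly spaced cuts and one extra cut inserted between $a_i$ and $a_{i+1}$ to form $g_i$, the $g_i$'s each differ from $g_o$ on disjoint sets of pairs, so $N-1 = \Theta(1/\sqrt{\epsilon})$ queries are forced. Your Gilbert--Varshamov packing uses $k=\Theta(1/\epsilon)$ cuts, each nudged left or right, and an information-theoretic transcript-counting argument; the per-mismatch $d_c$-contribution of $\Theta(w^2)$ when clusters themselves have width $\Theta(w)$ is the right calculation. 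This gives a nominally stronger $\Omega(1/\epsilon)$ lower bound at the cost of a larger $k$, and both suffice for the informal ``exponential separation'' claim.

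The upper bound, however, has a real gap, and you correctly flag it yourself as ``the main obstacle.'' You propose running \textsc{ndbal} with a prior $\pi$ supported only on the two-interval subclass $\G'$, while the true target $g^*$ is a $k$-interval clustering lying in $\G_{k,\I}\setminus\G'$. With the hard posterior update $\pi_t(g)\propto\pi_{t-1}(g)\ind[g(a_t)=y_t]$, a single query $(x,y)$ with both $x,y>r(g^*)$ but with a cut of $g^*$ between them returns $y_t = 0$, whereas every $g\in\G'$ (whose only cut is a single threshold $r$) that survives the filter must have $x\le r<y$. If the posterior mass on that region is small or the response is incompatible with the remaining support, the posterior can collapse or concentrate on the wrong threshold. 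To rule this out you would need to prove that \textsc{select}, which is allowed to pick any atom that $\rho$-average-splits the current posterior over $\G'$ under $d_\I$, never selects such a query; but early on the posterior over $r$ is diffuse, so atoms with both endpoints above $r(g^*)$ can split it just as well as atoms straddling $r(g^*)$. You have not shown this, and it does not look true in general. The paper sidesteps the entire issue: it runs \textsc{ndbal} with a uniform prior over a polynomially-sized subset $\G\subseteq\G_{k+2,\I}$ that actually contains the target (so $\pi(g^*)=\Omega(1/\mathrm{poly}(k))$), invokes Lemma~\ref{lemma: cluster identification index} for the $d_\I$-average-splitting index of the full class, and feeds these directly into Theorem~\ref{thm: 0-1 loss DBAL guarantees}. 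No realizability reduction is needed. Two smaller issues in your sketch: on a discrete $O(\epsilon)$-grid the nearest $\G'$-representative $g_0$ does not generally satisfy $d_\I(g_0,g^*)=0$, only $d_\I(g_0,g^*)=O(\epsilon)$, so the claim that $\G'$ ``contains a zero-$d_\I$-distance representative of every target'' is slightly off; and $\pi(g_0)=\Omega(\epsilon^2)$ is unexplained --- a uniform prior on an $O(\epsilon)$-grid gives $\Omega(\epsilon)$.
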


To prove Theorem~\ref{thm: clustering separation}, we derive the following bound on the average splitting index under distance $d_\I(\cdot,\cdot)$.

\begin{restatable}{lemma}{ClusterIdentificationIndex}
\label{lemma: cluster identification index}
Let $\mu(\I) = \alpha$. Under distance $d_\I(\cdot,\cdot)$, $\G_{k,\I}$ has average splitting index $(\frac{1}{16 \lceil \log(2/\epsilon) \rceil}, \epsilon, \frac{\epsilon \alpha}{2})$.
\end{restatable}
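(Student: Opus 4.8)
The plan is to reduce the claim to a one‑dimensional averaging argument, exactly as in the proof of Theorem~\ref{thm: ranking index}, exploiting the fact that under $d_\I(\cdot,\cdot)$ only the boundaries of the cluster containing $\I$ matter. Write $\I=[\ell_0,r_0]$ and let $F$ be the CDF of $\mu$. Every $g\in\G_{k,\I}$ has a cluster containing $\I$ of the form $[L_g,R_g]$ with $L_g\le\ell_0\le r_0\le R_g$, and $g(x,z)=\ind[x\in[L_g,R_g]]$ for all $z\in\I$. Since $[L_g,R_g]$ and $[L_{g'},R_{g'}]$ both contain $\I$, their symmetric difference is the disjoint union of a ``left piece'' $[\min(L_g,L_{g'}),\max(L_g,L_{g'}))$ and a ``right piece'' $(\min(R_g,R_{g'}),\max(R_g,R_{g'})]$, so (taking $\mu$ atomless for clarity; atoms need only a routine modification)
\[ d_\I(g,g') \;=\; d_L(g,g') + d_R(g,g'), \qquad d_L(g,g'):=|F(L_g)-F(L_{g'})|,\quad d_R(g,g'):=|F(R_g)-F(R_{g'})|. \]
Hence for any distribution $\pi$ over $\G_{k,\I}$ we have $\avg(\pi)=\Delta_L+\Delta_R$ with $\Delta_L:=\E_{g,g'\sim\pi}[d_L]$ and $\Delta_R:=\E_{g,g'\sim\pi}[d_R]$; in particular $\max(\Delta_L,\Delta_R)\ge\avg(\pi)/2>\epsilon/2$, and by the left/right symmetry I may assume $\Delta_L>\epsilon/2$.

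I would then restrict attention to atoms $(x,y)$ with $y\in\I$, which occur with probability $\mu(\I)=\alpha$ under $\D=\mu\otimes\mu$, independently of $x$. For such an atom and any $x\le\ell_0$ the response is $g(x,y)=\ind[L_g\le x]$, i.e.\ a threshold at level $v:=F(x)$ on the pushforward $\pi_L$ of $\pi$ along $g\mapsto F(L_g)$; and since $\pi_L$ is supported on $[0,F(\ell_0)]$, every threshold that could possibly split $\pi_L$ arises this way. Using the identity $\E_{g,g'\sim\pi}[d(g,g')\ind[g(a)=y=g'(a)]]=\pi(\G_a^y)^2\,\avg(\pi|_{\G_a^y})$ from Section~\ref{section: DBAL} together with $d_\I=d_L+d_R$, one checks that the two $d_R$‑contributions to the split induced by such an atom sum to at most $\Delta_R$ (the split conditions on an event whose two halves are disjoint, so the indicators sum to at most $1$). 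It follows that whenever $v$ is a threshold that $\rho'$‑average‑splits $\pi_L$ under $d_L$ with $\rho'=\rho\,\avg(\pi)/\Delta_L\le 2\rho$, the atom $(x,y)$ $\rho$‑average‑splits $\pi$ under $d_\I$. So it suffices to show that with probability at least $\epsilon/2$ over $v=F(x)$, $x\sim\mu$ (equivalently, over $v$ uniform on $[0,1]$ in the atomless case), $v$ is a $2\rho$‑average‑split of $\pi_L$; multiplying by the probability $\alpha$ of the independent event $y\in\I$ then gives $\tau\ge\alpha\cdot(\epsilon/2)=\epsilon\alpha/2$.

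The remaining one‑dimensional statement is the core lemma behind Theorem~\ref{thm: ranking index}: for any distribution $\nu$ on $[0,1]$ with $\E_{u,u'\sim\nu}|u-u'|\ge\epsilon/2$, a uniformly random threshold $v$ is a $\tfrac{1}{8\lceil\log(2/\epsilon)\rceil}$‑average‑split of $\nu$ with probability at least $\epsilon/2$. I would prove it (or quote it from \citet{D05, TD17}) by a dyadic argument: writing $\avg(\nu)=\int_0^1 2p(t)(1-p(t))\,dt$ with $p(t)=\nu([0,t])$, the set of thresholds that leave a $(1-\rho)$ fraction of $\avg(\nu)$ on one side is an interval at the top (resp.\ bottom) of the support, so the good thresholds form an interval; I would bound its length below by partitioning $\avg(\nu)$ over the $O(\log(1/\epsilon))$ relevant dyadic scales, pigeonholing to a scale carrying an $\Omega(1/\log(1/\epsilon))$ fraction of the diameter, and checking that the ``central'' thresholds at that scale span length $\Omega(\epsilon)$ while losing only a $1-\Omega(1/\log(1/\epsilon))$ fraction of the diameter to either side.

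The main obstacle is exactly this one‑dimensional core lemma, and in particular making the pigeonhole‑over‑scales step quantitatively tight enough to yield the $1/\log(1/\epsilon)$ in $\rho$ and the $\epsilon$ in $\tau$ simultaneously: a single threshold can only annihilate the diameter contributed by scales finer than where it sits, so when the diameter is spread evenly over $\sim\log(1/\epsilon)$ scales no threshold beats a $\sim 1/\log(1/\epsilon)$‑split, and one must confirm that the heavy scale's interval of central thresholds really has length $\Omega(\epsilon)$. Atoms of $\mu$ (where $F$ is discontinuous and the threshold ``at'' an atom must be treated directly) are a secondary nuisance; everything else — the $(L,R)$ summary, the decomposition $d_\I=d_L+d_R$, and converting a $2\rho$‑split of $\pi_L$ into a $\rho$‑split of $\pi$ — is routine.
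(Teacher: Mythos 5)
Your decomposition $d_\I = d_L + d_R$ and the reduction to a one-dimensional threshold problem is a genuinely different route from the paper's, and the reduction itself is carried out correctly: the translation showing that a $\rho\,\avg(\pi)/\Delta_L$-average-split of $\pi_L$ under $d_L$ yields a $\rho$-average-split of $\pi$ under $d_\I$ (because the $d_R$-contributions across both response values sum to at most $\Delta_R$) is right, as is the factor $\alpha$ from requiring $y\in\I$. The paper, by contrast, never touches average splitting directly for this class. It works entirely in terms of Dasgupta's \emph{original} splitting index over finite edge sequences: every edge $(g_i,g'_i)$ with $d_\I(g_i,g'_i)\geq\epsilon$ is ``left-leaning'' ($\mu(\ell_i,\ell'_i)\geq\epsilon/2$) or ``right-leaning'' ($\mu(u_i,u'_i)\geq\epsilon/2$) — exactly your $d_L/d_R$ dichotomy, but applied to an arbitrary finite edge set instead of a distribution — then WLOG at least half are right-leaning, orders them by $u_1\leq\cdots\leq u_m$, and shows a single pair $(x,y)$ with $x\in\I$ and $y$ in a window of $\mu$-mass $\epsilon/2$ just to the right of $u_{m/2}$ eliminates a quarter of all edges. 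This gives splitting index roughly $(1/4,\epsilon,\alpha\epsilon)$, and Theorem~\ref{thm: splitting implies average splitting} does the rest. All the log-scale pigeonholing lives inside that black-box theorem.

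The gap in your version is precisely where you flag it: you state, but do not prove, the one-dimensional core lemma that a uniformly random threshold $\Omega(1/\log(1/\epsilon))$-average-splits any distribution on $[0,1]$ of average diameter $\geq\epsilon/2$ with probability $\Omega(\epsilon)$. This is not a minor routine step; it is where all the real work of the lemma lives. Notice that your sketch for proving it — a dyadic pigeonhole over $O(\log(1/\epsilon))$ scales — is essentially a re-derivation of Theorem~\ref{thm: splitting implies average splitting} specialized to thresholds, so you are not actually bypassing that machinery, you have merely pushed it into a sub-lemma you then leave open. The cleaner move, and the one the paper makes, is to get the Dasgupta splitting index for the full class $\G_{k,\I}$ directly (where the median-ordering trick makes the combinatorics trivial) and invoke the already-proved conversion theorem once. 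If you want to rescue your route, the shortest path is to observe that thresholds over $[0,1]$ have Dasgupta splitting index $(1/2,\epsilon',\epsilon')$ (this is essentially Theorem~10 of \citet{D05}) and then apply Theorem~\ref{thm: splitting implies average splitting} to the one-dimensional class — but at that point you have done strictly more bookkeeping than the paper's proof for no gain.
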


\section{Simulations}
\label{section: experiments}

\begin{figure*}
	\begin{center}
		\includegraphics[width=0.75\textwidth]{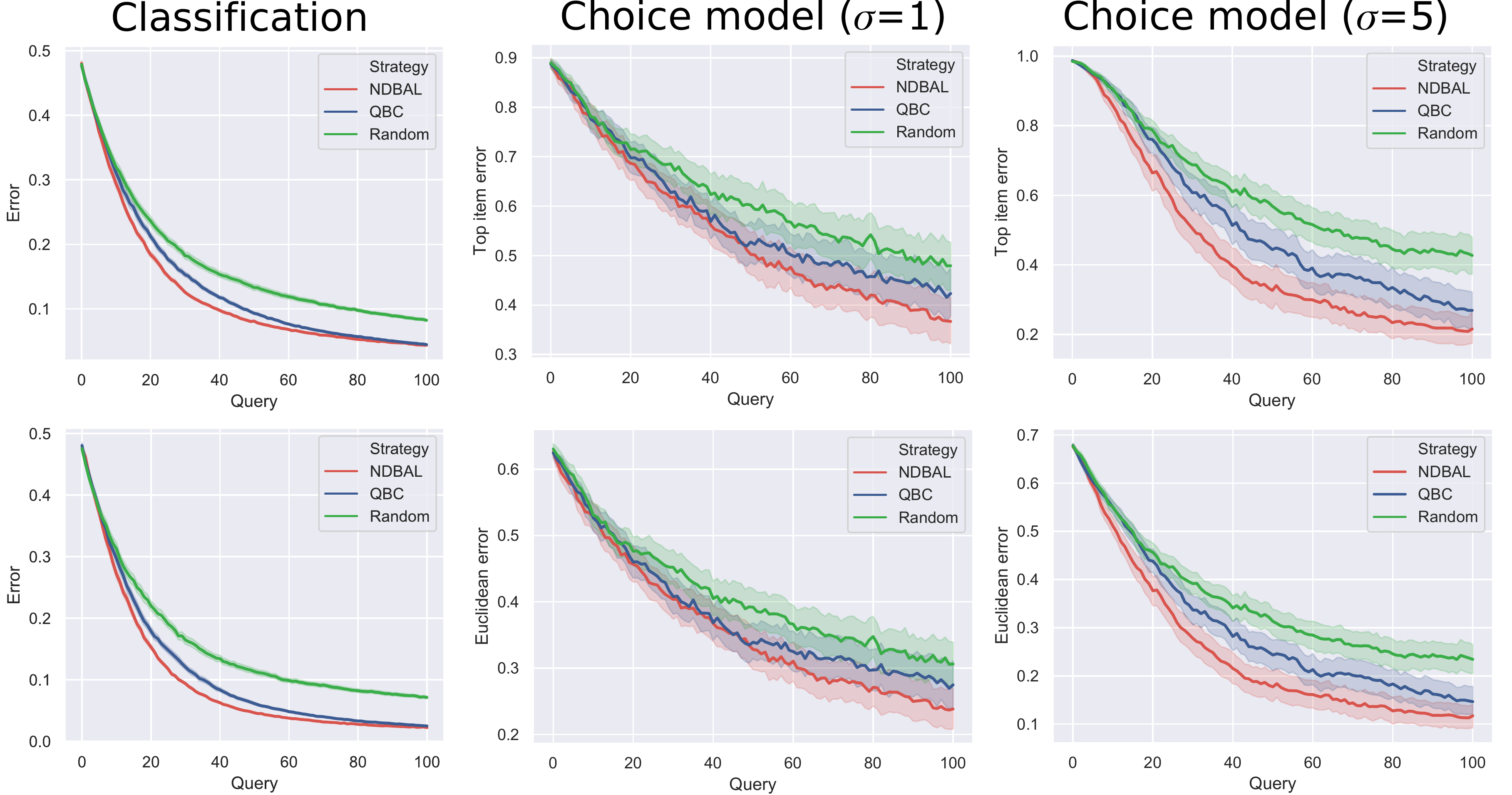}
	\end{center}
	\caption{\emph{Left}: Logistic noise simulations with $d=10$. [\emph{Top to bottom}: $\sigma = $ 5, 10]. \emph{Center and right}: Logit choice model experiments with $\sigma=$ 1, 5. [\emph{Top}: Top-item error. \emph{Bottom}: distance to best item error.]}
	\label{fig: all experiments}
\end{figure*}

We now turn to experimentally evaluating \textsc{ndbal} in two settings: linear classifiers and logit choice models. Before doing so, we discuss a modification to \textsc{ndbal} that allows it to be run in practice.

\paragraph{General-loss \textsc{NDBAL}}
While the posterior update in Equation~\eqref{eqn: 0-1 loss update} enjoys nice theoretical properties, it results in a posterior distribution
that may be intractable to sample from. Thus, we consider a more general update:
\vspace{-0.15em}
\begin{equation}
\label{eqn: general loss update}
\pi_t(g) \ \propto \ \pi_{t-1}(g) \exp(- \beta \ell(g(a_t), y_t))
\end{equation}
where $\ell(\cdot, \cdot)$ is some loss function. When the prior distribution $\pi$ is log-concave, the loss function is convex, and $\G$ is convex, this results in a posterior distribution that is log-concave, and thus efficiently samplable~\citep{LV07}. Moreover, this update was shown to enjoy nice consistency properties for interactive learning strategies that query high variance atoms~\citep{TD18}.

To formalize this setting, let $\Y$ denote the space of answers to atomic questions $\A$, and let $\Z \subset \R^d$ denote some prediction space for structures in $\G$. We view each structure in $\G$ as a function from $\A$ to $\Z$, and we suffer loss $\ell(z,y)$ for predicting $z$ given answer $y$. 

Given this setup, we consider selecting queries $a \in \A$ that approximately minimize
\begin{equation}
\label{eqn: general loss query}
\max_{y \in \Y} \sum_{g, g'} \pi_t(g) \pi_t(g') d(g, g') e^{-\beta( \ell(g(a), y) + \ell(g'(a), y))}.
\end{equation}
When $\ell(\cdot, \cdot)$ is the 0-1 loss and $\beta \rightarrow \infty$, the above corresponds to selecting queries that maximize average splitting. When $\Y$ is finite, we can still use \textsc{select} to choose our query. However, we found that simply drawing a sequence of structure pairs and choosing the query that empirically minimizes equation~\eqref{eqn: general loss query} performed well enough.

\paragraph{Linear classifier simulations}
We consider the problem of learning linear classifiers where the data is distributed uniformly over the unit sphere $\mathcal{S}^{d-1}$. In this setting, there is a target classifier $w^* \in \R^d$, and the goal is to find a vector $w \in \R^d$ minimizing
\[ d(w, w^*) \ = \ \pr_{x\sim \text{unif}(\mathcal{S}^{d-1})}( \sign(\langle w, x \rangle)  \neq   \sign(\langle w^*, x \rangle)) \]
We ran experiments on actively learning such a classifier under the logistic noise model where
$ w^* \sim \N(0, \sigma^2 I_d)$ and $\pr(y \, | \, x, w^*) = \left(1 +e^{- y \langle w^*, x \rangle} \right)^{-1}.$

Figure~\ref{fig: all experiments} shows the performance of \textsc{ndbal} run with the logistic loss against two baselines: random sampling and \textsc{qbc}~\citep{FSST97, TD18}--an active learner that repeatedly samples an atom and two structures and queries the atom if the two structures disagree on it.

\paragraph{Logit choice simulations}
In the logit choice model~\citep{T09}, there is a fixed set of $n$ items, represented as $x_1, \ldots, x_n \in \R^d$, and there is some consumer whose preferences over the items can be captured by a vector $w^* \in \R^d$, such that the consumer prefers item $i$ over item $j$ if and only if $\langle w^*, x_i \rangle > \langle w^*, x_j \rangle$. When presented with a pair of items $(i,j)$, the consumer chooses item $i$ with probability ${1}/({1 + e^{- \langle w^*, x_i - x_j \rangle}})$.

We performed simulations in an interactive setting in which pairs of items are adaptively presented to the consumer. We considered two objectives.
\begin{itemize}
	\item[(i)] Best item identification: identifying $x_{i_{w^*}}$ where $i_{w} = \argmax_i \langle w, x_i \rangle$ is the top item under $w$.
	\item[(ii)] Approximate best item identification: finding an item $j$ such that $\| x_j - x_{i_{w^*}}\|$ is small.
\end{itemize}
We generated $w^* \sim \N(0, \sigma^2 I_d)$ and drew $x_1, \ldots, x_n$ uniformly from $S^{d-1}$. To run \textsc{ndbal}, we used $d(w, w') \ = \ \|x_{i_w} - x_{i_{w'}} \|$ as our  structure distance. The results are displayed in Figure~\ref{fig: all experiments}. 

\paragraph{Experimental summary.} In the appendix, we provide more settings of parameters as well as more information on our experimental setup. Across all our experiments, we found that \textsc{ndbal} generally outperformed \textsc{qbc} and \textsc{random} on the metrics we tested.


\subsubsection*{Acknowledgements}

The authors thank the anonymous reviewers for suggestions that improved the paper. They also acknowledge the NSF for support under grant CCF-1740833. CT also thanks Wesley Tansey for useful conversations that helped inspire this work.

\bibliography{../references}

\newpage

\appendix
\onecolumn

\section{Experiments continued}

In this section, we discuss our experimental setup more thoroughly and present more results. Each plot depicts $\geq 50$ independent simulations, and the error bands depict 68\% bootstrap confidence intervals. For the \textsc{ndbal} query selection algorithm, we used the heuristic suggested in Section~\ref{section: experiments}: we sampled $m=500$ candidate atoms from $\D$ and $n=300$ pairs of structures from $\pi_t$ and chose the atom that empirically minimized equation~\eqref{eqn: general loss query}. 

\subsection{Models, sampling, and evaluation}

In our experiments, we used the posterior update in equation~\eqref{eqn: general loss update} with $\ell(z,y)$ as the logistic loss, i.e.
\[ \ell(z,y) = \log\left(1 + e^{-zy} \right).\] 
In this setting, it is not possible to express $\pi_t$ in closed form. However, we can still approximately sample from $\pi_t$ using the Metropolis-adjusted Langevin Algorithm (MALA)~\citep{DCWY18}. If we let 
\[ f(w) \ = \ - \sum_{i=1}^t \beta \ell(\langle w, x_i \rangle, y_i) - \frac{1}{2\sigma^2} \|w\|^2 \]
then MALA is a Markov chain in which we maintain a vector $W_t \in \R^d$ and transition to $W_{t+1}$ according to the following process.
\begin{enumerate}
	\item[(i)] Sample $V \sim \N(W_t - \eta \nabla f(W_t), 2 \eta I_d)$.
	\item[(ii)] Calculate $\alpha = \min \left\{1, \exp\left( f(W_t) - f(V) + \frac{1}{4\eta} \left(\|V - W_T + \eta \nabla f(W_t)  \|^2 - \| W_t - V + \eta \nabla f(V) \|^2  \right)  \right) \right\}$.
	\item[(iii)] With probability $\alpha$, $W_{t+1} = V$. Otherwise, set $W_{t+1} = W_t$.
\end{enumerate}
The only hyper-parameter that needs to be set is $\eta > 0$. This parameter should be carefully chosen: if $\eta$ is too large then the walk may never accept the proposed state, and if $\eta$ is too small then the walk may not move far enough to get to a large probability region. The best choice of $\eta$ ultimately depends on the distribution we are sampling from, and unfortunately for us, our distributions are changing. Our fix is to adjust $\eta$ on the fly so that the average number of times that step (iii) rejects is not too close to 0 or to 1. A reasonable rejection rate is about 0.4~\citep{RR98}.

Finally, in all of our evaluations we recorded an approximation of the \emph{average} error of the posterior distribution $\pi_t$. This consists of sampling structures $g_1,\ldots, g_n \sim \pi_t$ and calculating
\[ \widehat{\text{error}}(\pi_t) \ = \ \frac{1}{n} \sum_{i=1}^n d(g_i, g^*) \]
where $d(\cdot, \cdot)$ is the distance function for the task at hand. In our experiments, this distance takes the following forms.
\begin{itemize}
	\item Classification error: $d(w, w') = \pr_{x\sim \text{unif}(\mathcal{S}^{d-1})}( \sign(\langle w, x \rangle)  \neq   \sign(\langle w^*, x \rangle)) =  \frac{1}{\pi} \arccos\left( \frac{\langle w, w' \rangle}{\|w\|\|w'\|} \right)$.
	\item Best item identification: $d(w, w') = \ind[i_{w} \neq i_{w'}]$.
	\item Approximate best item identification: $d(w, w') = \|x_{i_w} - x_{i_{w'}} \|$.
\end{itemize}
In the above, $i_{w} = \argmax_i \langle w, x_i \rangle$ is the top item under $w$ in the choice model setting. We used $n=300$ in our experiments.

\subsection{Classification experiments}
\begin{figure}
	\begin{center}
		\includegraphics[width=.9\textwidth]{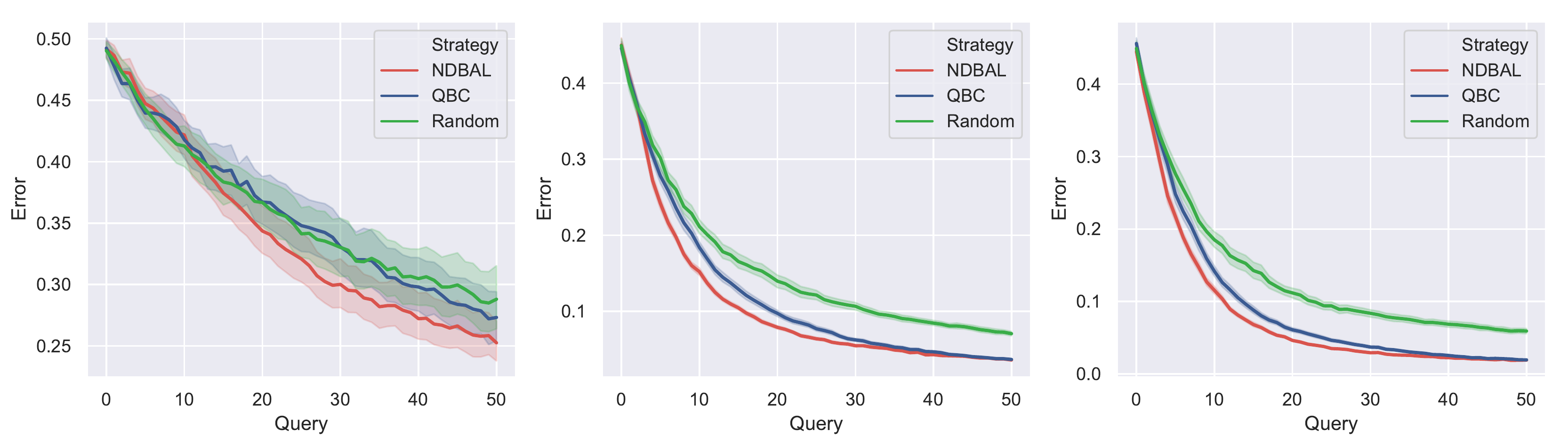} \\
		\includegraphics[width=.9\textwidth]{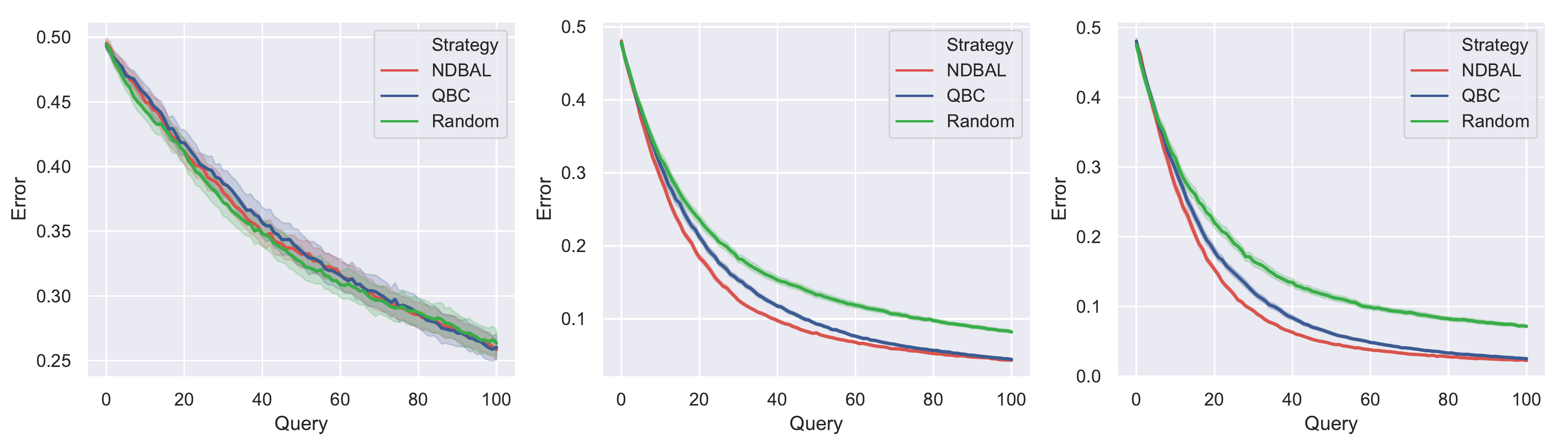} \\
	\end{center}
	\caption{Logistic noise experiments. \emph{Top to bottom}: $d=5, 10$. \emph{Left to right}: $\sigma = 1$, 5, 10.}
	\label{fig: appendix logistic classification experiments}
\end{figure}

In Figure~\ref{fig: appendix logistic classification experiments}, we have classification experiments under logistic noise across different dimensions $d$ and standard deviations $\sigma$. In all of the experiments, we used the logistic loss update on the posterior with $\beta=1$ and a prior distribution of $\N(0, \sigma^2 I_d)$.

\subsection{Logit choice model experiments}
In Figure~\ref{fig: appendix logit choice experiments}, we have logit choice model experiments across different dimensions $d$, numbers of items $n$, and standard deviations $\sigma$. In all of the experiments, we used the logistic loss update on the posterior with $\beta=1$ and a prior distribution of $\N(0, \sigma^2 I_d)$.
 
\begin{figure}
	\includegraphics[width=0.5\textwidth]{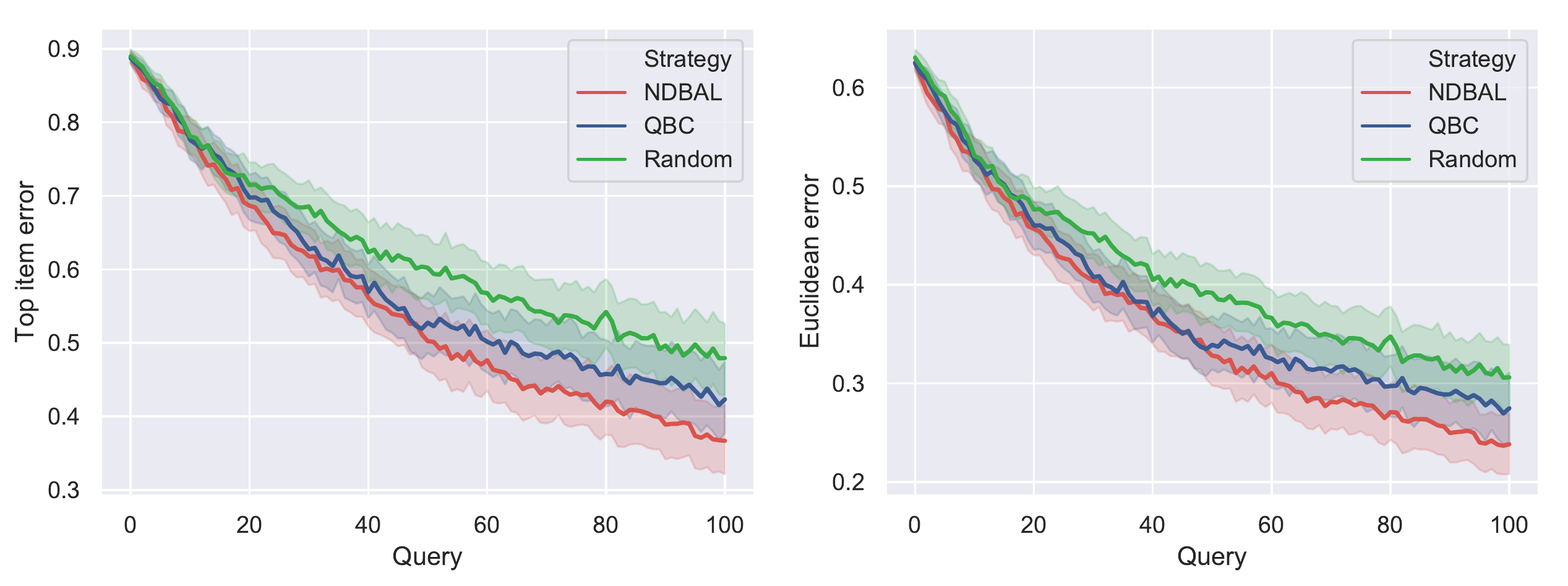} \hspace{1em}
	\includegraphics[width=0.5\textwidth]{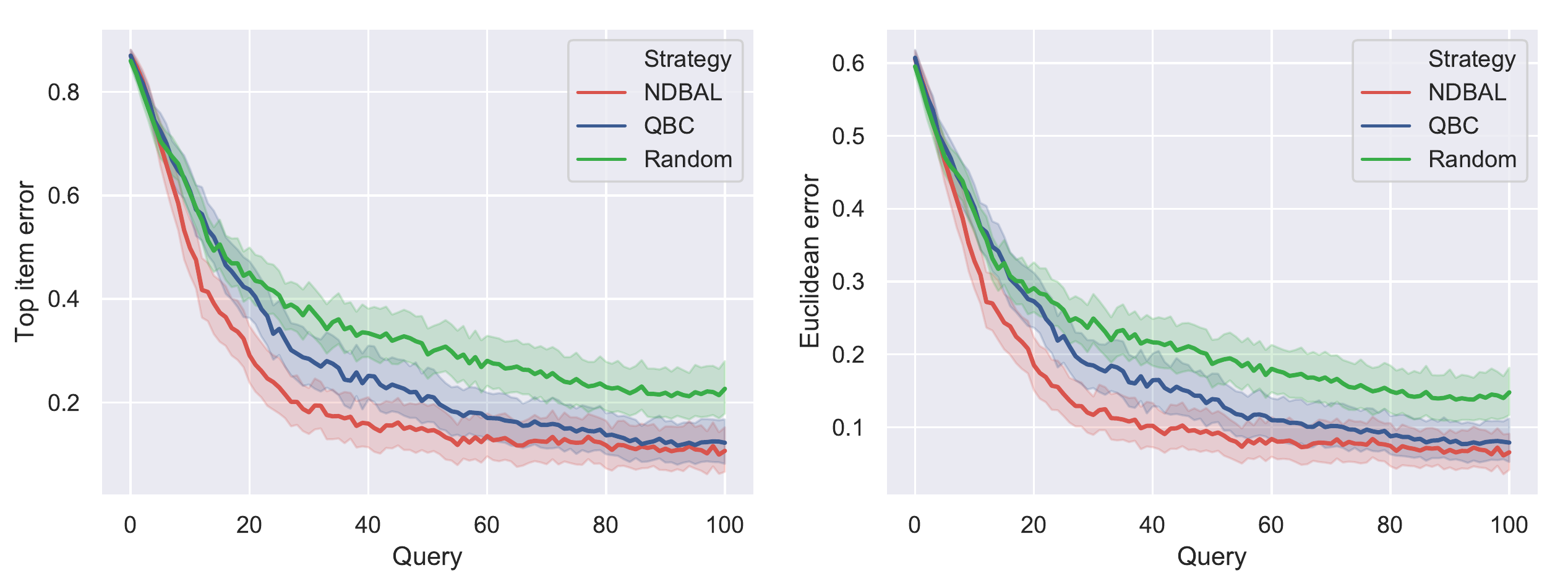} \\
	\includegraphics[width=0.5\textwidth]{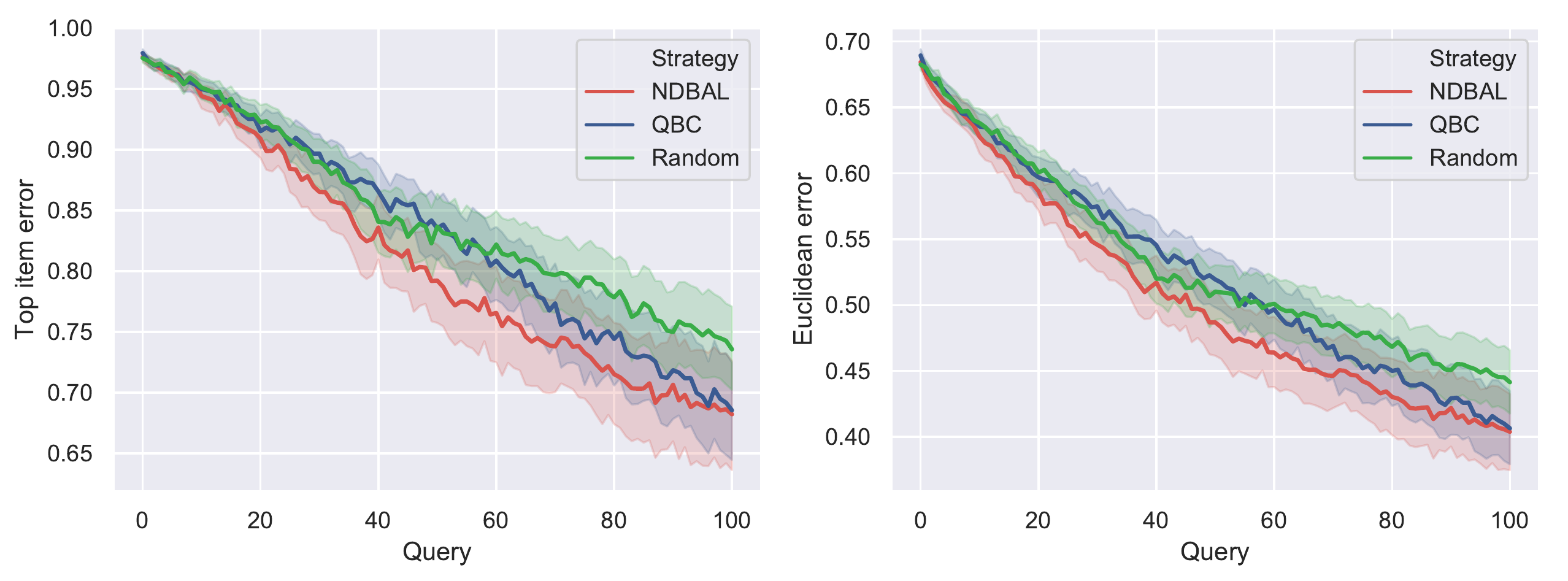} \hspace{1em}
	\includegraphics[width=0.5\textwidth]{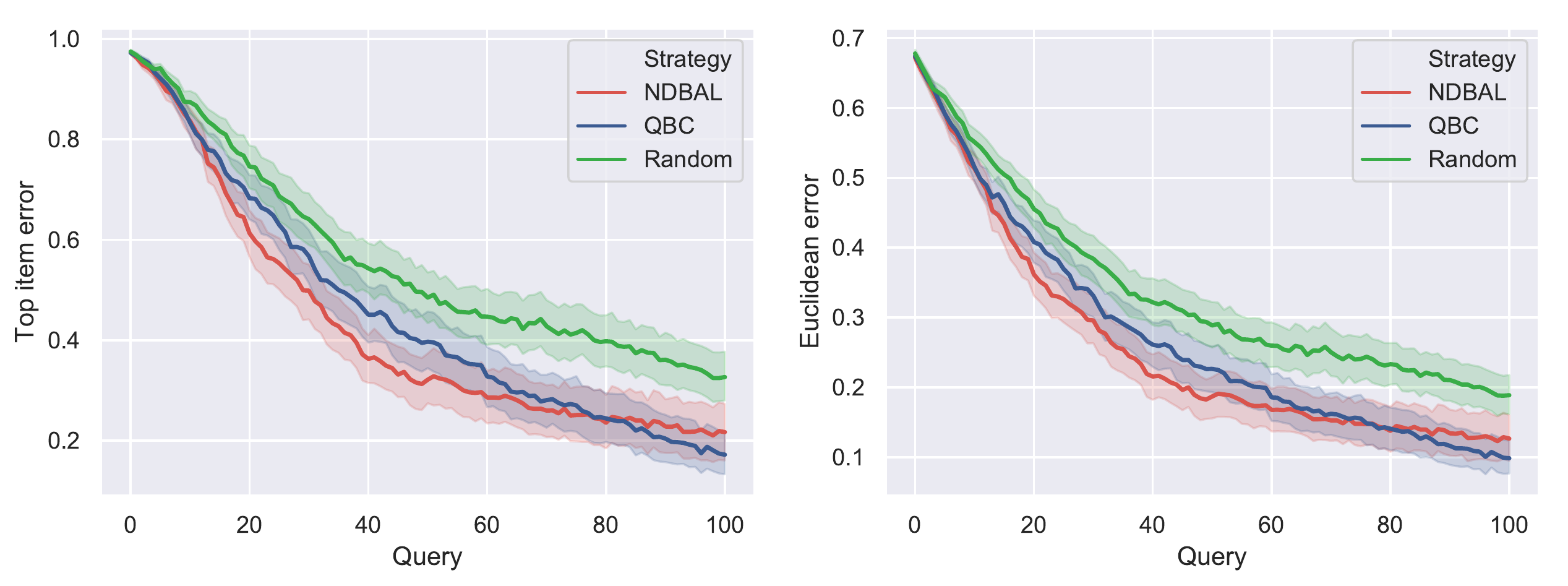} \\
	\includegraphics[width=0.5\textwidth]{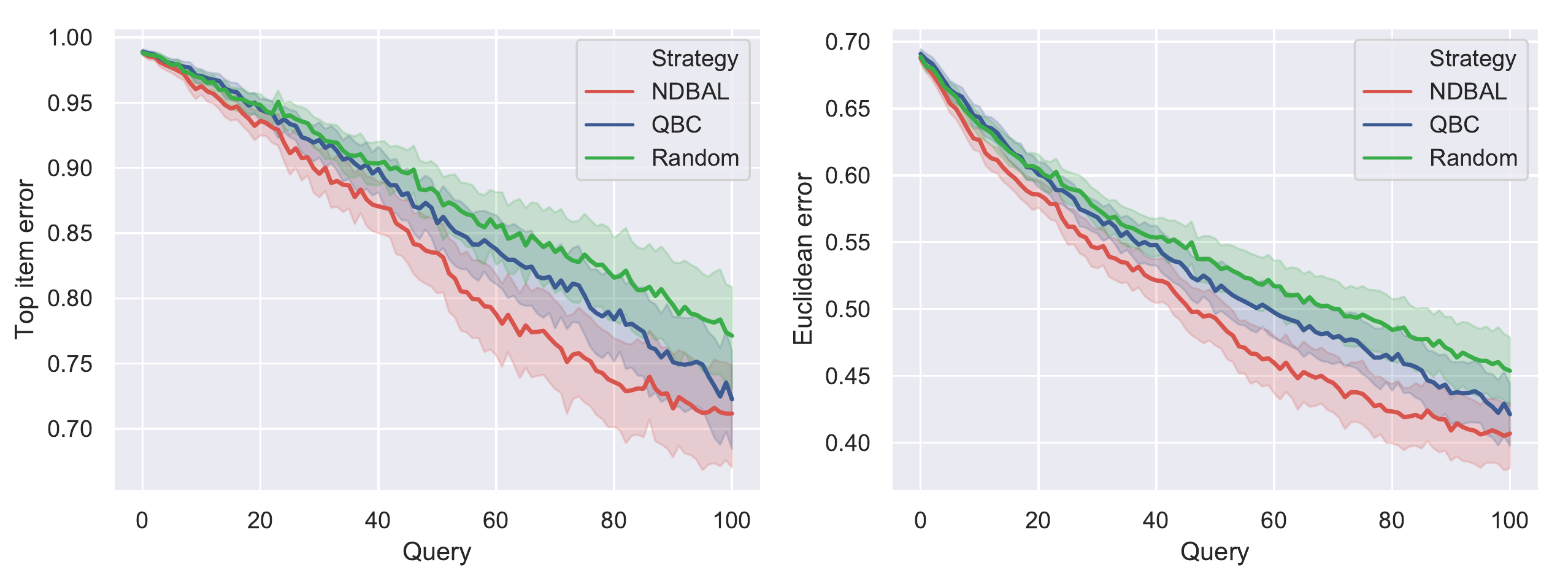} \hspace{1em}
	\includegraphics[width=0.5\textwidth]{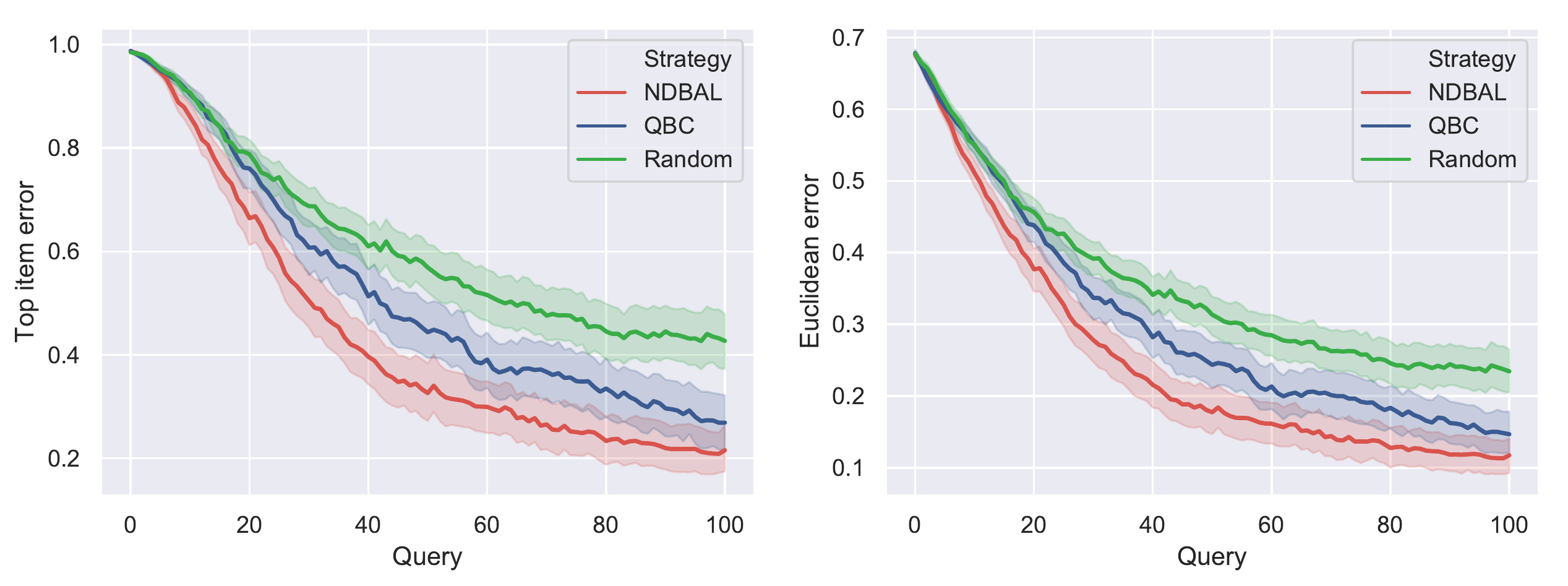}
	\caption{Logit choice model experiments with $d=10$. \emph{Top to bottom}: $n=10, 50, 100$. \emph{Left to right}: $\sigma = 1$, 5.}
	\label{fig: appendix logit choice experiments}
\end{figure}

\section{Dasgupta's splitting index}

We will make use of the original splitting index of \citet{D05} and its multiclass extension by \citet{BH12}. Let $E = ((g_1, g'_1), \ldots, (g_n, g'_n))$ be a sequence of structure pairs. We say that an atom $a$ $\rho$-splits $E$ if
\[ \max_y |E_a^y| \ \leq \ (1-\rho) |E|. \]
$\G$ has splitting index $(\rho, \epsilon, \tau)$ if for any edge sequence $E$ such that $d(g, g') > \epsilon$ for all $(g,g') \in E$, we have
\[ \pr_{a \sim \D}(a \; \rho \text{-splits } E) \geq \tau. \]

The following theorem, which we will use heavily, demonstrates that the average splitting index can be bounded by the splitting index. It is analogous to Lemma~3 of~\citet{TD17}.
\begin{thm}
\label{thm: splitting implies average splitting}
Fix $\G$, $\D$, and $\pi$. If $\G$ has splitting index index $(\rho, \epsilon, \tau)$ then it has average splitting index $(\frac{\rho}{4 \lceil \log_2 1/\epsilon \rceil}, 2\epsilon, \tau  )$.
\end{thm}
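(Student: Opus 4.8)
The plan is to follow the argument of the analogous Lemma~3 of \citet{TD17}: fix an arbitrary $\pi$ over $\G$ with $\avg(\pi) > 2\epsilon$, and exhibit, for at least a $\tau$-fraction of atoms $a \sim \D$, a lower bound (uniform in $y$) on the amount of average diameter ``killed'' by querying $a$. Writing $Q = \pi \times \pi$ and using the identity $\pi(\G_a^y)^2\avg(\pi|_{\G_a^y}) = \E_{(g,g')\sim Q}[d(g,g')\ind[g(a)=y=g'(a)]]$ from the excerpt, this reduces to showing that for a $\tau$-fraction of $a$,
\[ \min_{y\in\Y}\ \E_{(g,g')\sim Q}\big[d(g,g')\,\ind[\lnot(g(a)=y=g'(a))]\big] \ \ge\ \frac{\rho}{4\lceil\log_2 1/\epsilon\rceil}\,\avg(\pi) , \]
since the left side equals $\avg(\pi) - \max_y \pi(\G_a^y)^2\avg(\pi|_{\G_a^y})$.

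First I would decompose the average diameter dyadically. Partition $(\epsilon,1]$ into $L := \lceil\log_2 1/\epsilon\rceil$ intervals, each spanning a factor of at most $2$. Pairs with $d(g,g') \le \epsilon$ contribute less than $\avg(\pi)/2$ to $\E_Q[d] = \avg(\pi)$ (since $\avg(\pi) > 2\epsilon$), so the pairs with $d > \epsilon$ contribute more than $\avg(\pi)/2$, and by pigeonhole some level $B := \{(g,g') : d(g,g') \in I_{\ell^*}\}$ has $\E_Q[d\,\ind[(g,g')\in B]] \ge \avg(\pi)/(2L)$; because $d \le 2\delta^*$ on $B$ (with $\delta^*$ the left endpoint of $I_{\ell^*}$), this yields $\delta^*\,Q(B) \ge \avg(\pi)/(4L)$. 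Next I would invoke the splitting-index hypothesis on $B$: every pair in $B$ has distance $>\epsilon$, so with probability at least $\tau$ over $a \sim \D$ the atom $\rho$-splits $B$, i.e. $Q(\{(g,g')\in B : g(a)=y=g'(a)\}) \le (1-\rho)Q(B)$ for every $y$, equivalently $Q(\{(g,g')\in B : \lnot(g(a)=y=g'(a))\}) \ge \rho\,Q(B)$. Finally, for such an $a$ and any $y$, bounding the expectation below by the contribution of $B$ alone and using $d > \delta^*$ there gives
\[ \E_{(g,g')\sim Q}\big[d(g,g')\,\ind[\lnot(g(a)=y=g'(a))]\big] \ \ge\ \delta^*\rho\,Q(B) \ =\ \rho\cdot\delta^* Q(B) \ \ge\ \frac{\rho\,\avg(\pi)}{4L}, \]
which is exactly the target display. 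As this holds for every $\pi$ with $\avg(\pi) > 2\epsilon$, $\G$ has average splitting index $(\rho/(4L), 2\epsilon, \tau)$.

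The combinatorial heart -- the dyadic pigeonhole plus the observation that on a single dyadic scale a $\rho$-split of the edge set destroys a $\rho$-proportion of the diameter -- is routine arithmetic. The one genuinely delicate point is applying the splitting-index definition, which is stated for finite edge \emph{sequences}, to the continuous edge \emph{measure} $Q|_B$. I would handle this by drawing i.i.d.\ edge samples from $Q|_B$, applying the finite-sequence guarantee to each sample, and passing to the limit via a reverse-Fatou argument; this needs $\{a : a\ \rho\text{-splits } E\}$ to be measurable in $a$ and (harmlessly) $\A$ to be finite or countable. This transfer is where essentially all of the technical care goes, and it is exactly the step handled implicitly in \citet{TD17}.
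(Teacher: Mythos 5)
Your reconstruction is correct and is essentially the argument the paper relies on: the paper gives no standalone proof, deferring to Lemma~3 of \citet{TD17}, whose proof is exactly your dyadic bucketing of pairs by distance, pigeonhole to a heavy scale, and application of the splitting hypothesis to that scale, yielding the same constants $\rho/(4\lceil\log_2 1/\epsilon\rceil)$ and threshold $2\epsilon$. The finite-sequence-to-edge-measure transfer you flag is likewise glossed over in the cited proof; your sampling-plus-limit fix works, and a Fubini argument over the product of the edge-sampling measure and $\D$ removes even the countability assumption on $\A$ (relevant since the paper's ranking and clustering examples use uncountable $\A$).
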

From the proof of Lemma~3 by~\citet{TD17}, it is easy to see that so long as $d(\cdot,\cdot)$ is symmetric and takes values in $[0,1]$, the same arguments imply Theorem~\ref{thm: splitting implies average splitting}. 

\section{Proofs from Section~\ref{section: DBAL}}

\subsection{Proof of Lemma~\ref{lem: select lemma}}

To prove Lemma~\ref{lem: select lemma}, we will appeal to the following multiplicative Chernoff-Hoeffding bound~\citep{AV77}.

\begin{lemma}
\label{lem: multiplicative Chernoff bound lemma}
Let $X_1, \ldots, X_n$ be i.i.d. random variables taking values in $[0,1]$ and let $X = \sum X_i$ and $\mu = \E[X]$. Then for $0 < \beta < 1$,
\begin{itemize}
	\item[(i)] $\pr(X \leq (1 - \beta)\mu) \leq \exp\left(	- \frac{\beta^2 \mu}{2} \right)$ and
	\item[(ii)] $\pr(X \geq (1 + \beta)\mu) \leq \exp\left(	- \frac{\beta^2 \mu}{3} \right)$.
\end{itemize}
\end{lemma}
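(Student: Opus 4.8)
The plan is to invoke the classical exponential-moment (Chernoff) method for each tail, followed by two elementary one-variable inequalities to put the bounds in the stated form; as this is the Angluin--Valiant bound, I will only outline the argument. For the upper tail (ii), I would apply Markov's inequality to $e^{tX}$ with $t>0$: $\pr(X \ge (1+\beta)\mu) \le e^{-t(1+\beta)\mu}\,\E[e^{tX}]$. By independence $\E[e^{tX}] = \prod_{i} \E[e^{tX_i}]$; since each $X_i \in [0,1]$ and $x \mapsto e^{tx}$ is convex, $e^{tX_i} \le 1 + (e^t - 1)X_i$ pointwise on $[0,1]$, so taking expectations and using $1 + u \le e^u$ gives $\E[e^{tX_i}] \le \exp((e^t - 1)\E[X_i])$, hence $\E[e^{tX}] \le \exp((e^t-1)\mu)$. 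Minimizing the exponent $e^t - 1 - t(1+\beta)$ over $t > 0$ (at $t = \ln(1+\beta)$) yields
\[ \pr\big(X \ge (1+\beta)\mu\big) \ \le \ \left( \frac{e^{\beta}}{(1+\beta)^{1+\beta}} \right)^{\!\mu}. \]

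The lower tail (i) is symmetric: apply Markov's inequality to $e^{-tX}$ with $t > 0$, use $e^{-tX_i} \le 1 + (e^{-t}-1)X_i$ on $[0,1]$ to get $\E[e^{-tX}] \le \exp((e^{-t}-1)\mu)$, and minimize $e^{-t} - 1 + t(1-\beta)$ over $t>0$ (at $t = -\ln(1-\beta)$) to obtain
\[ \pr\big(X \le (1-\beta)\mu\big) \ \le \ \left( \frac{e^{-\beta}}{(1-\beta)^{1-\beta}} \right)^{\!\mu}. \]

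It then remains to verify, for $\beta \in (0,1)$, the numerical inequalities $e^{\beta}/(1+\beta)^{1+\beta} \le e^{-\beta^2/3}$ and $e^{-\beta}/(1-\beta)^{1-\beta} \le e^{-\beta^2/2}$; raising each to the power $\mu$ and combining with the two displays above gives (ii) and (i). Taking logarithms, these reduce to $(1+\beta)\ln(1+\beta) - \beta \ge \beta^2/3$ and $(1-\beta)\ln(1-\beta) + \beta \ge \beta^2/2$ on $(0,1)$. Each is handled by an elementary calculus argument: the difference of the two sides, together with its first derivative, vanishes at $\beta = 0$, and a short sign analysis on $(0,1)$ (for the lower-tail inequality the relevant second derivative is manifestly nonnegative; for the upper-tail inequality one checks that the first derivative is positive at its single interior critical point and at $\beta = 1$) shows the difference stays nonnegative. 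I expect this elementary-calculus verification to be the only nontrivial point --- everything preceding it is the standard Chernoff computation --- so it is where the (small amount of) real care will be needed.
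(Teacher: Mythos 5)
Your proposal is correct. The paper does not prove this lemma at all---it is quoted as a known multiplicative Chernoff--Hoeffding bound with a citation to Angluin--Valiant---so the standard exponential-moment argument you outline is exactly the intended source of the result, and every step you describe goes through for $[0,1]$-valued (not just Bernoulli) variables thanks to the chord bound $e^{tX_i}\le 1+(e^t-1)X_i$. The two scalar inequalities you defer to calculus do hold on $(0,1)$: for the lower tail, $g(\beta)=(1-\beta)\ln(1-\beta)+\beta-\beta^2/2$ has $g(0)=g'(0)=0$ and $g''(\beta)=\beta/(1-\beta)\ge 0$, so $g\ge 0$; for the upper tail, $f(\beta)=(1+\beta)\ln(1+\beta)-\beta-\beta^2/3$ has $f(0)=f'(0)=0$, and since $f''(\beta)=\tfrac{1}{1+\beta}-\tfrac23$ changes sign once (at $\beta=1/2$), $f'$ increases then decreases on $(0,1)$; together with $f'(1)=\ln 2-\tfrac23>0$ this forces $f'\ge 0$, hence $f\ge 0$. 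That last chain of reasoning is the clean way to phrase the ``sign analysis'' you gesture at (checking $f'$ at the interior critical point alone is not quite the right logic---what matters is $f'(0)=0$, the single sign change of $f''$, and $f'(1)>0$), but this is a presentational point, not a gap.
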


The key observation in proving Lemma~\ref{lem: select lemma} is that if $a$ $\rho$-average splits $\pi$, then for all $y \in \Y$ we have
\[ \avg(\pi) - \pi(\G_a^y)^2\avg(\pi|_{\G_a^y}) \ \geq \ \rho \, \avg(\pi). \]
On the other hand, if $a$ does not  $\rho$-average split $\pi$, then there is some $y \in \Y$ such that
\[ \avg(\pi) - \pi(\G_a^y)^2\avg(\pi|_{\G_a^y}) \ < \ \rho \, \avg(\pi). \]
Moreover, if $g, g' \sim \pi$, then 
\[ \E[d(g,g')(1 - \ind[g(a) = y = h'(a)])] \ = \ \avg(\pi) - \pi(\G_a^y)^2\avg(\pi|_{\G_a^y}). \]
Using these facts, along with Lemma~\ref{lem: multiplicative Chernoff bound lemma}, we have the following result.

\SelectLemma*
\begin{proof}
Define $K_N^{a,y} = \inf \{ K \, : \,  S_K^{a,y} \geq N \}$. Recalling that $S^{a,y}_k = \sum_{i=1}^k d(g_i, g'_i) (1 - \ind[g_i(a) = y = g'_i(a)])$, we have the following relationship between $K_N^{a,y}$ and $S^{a,y}_k$.
\begin{align*}
\pr(K^{a,y}_N \leq k) \ &= \ \pr(S^{a,y}_{k_o} \geq N \text{ for some } k_o \leq k) \ \leq \  \pr(S^{a,y}_{k} \geq N) \\
\pr(K^{a,y}_N > k) \ &= \ \pr(S^{a,y}_{k_o} < N \text{ for all } k_o \leq k) \ = \ \pr(S^{a,y}_{k} < N)
\end{align*}

Now let $a^*$ be the atom that $\rho$-average splits $\pi$. Then for all $y \in \Y$, we have
\[ \pr\left(K_N^{a^*, y} > \frac{N}{(1+\epsilon/2)(1-\epsilon)  \rho \, \avg(\pi)} \right) \ \leq \ \exp\left(- \frac{N \epsilon^2(1+\epsilon)^2}{8( 1 - \epsilon(1+\epsilon)/2)}  \right). \]
On the other hand we know for any data point $a$ that does not $(1-\epsilon)\rho$-average split $\pi$, there is some $y \in \Y$ such that
\[ \pr\left(K_N^{a, y} \leq \frac{N}{(1+\epsilon/2)(1-\epsilon)  \rho \, \avg(\pi)} \right) \ \leq \ \exp\left(- \frac{N \epsilon^2}{12( 1 - \epsilon/2)}  \right). \]
Taking a union bound over $\Y$ and all the $a$'s, we have 
\[ \pr\left(\text{we choose } a_i \text{ that does not } (1-\epsilon)\rho \text{-average split } \pi \right) \ \leq \ | \Y | \exp\left(-\frac{N\epsilon^2}{4(2-\epsilon)} \right) + m \exp\left(-\frac{N \epsilon^2}{6(2+\epsilon)}   \right). \]
By our choice of $N$, this is less than $\delta$.
\end{proof}

\section{Proofs from Section~\ref{section: theoretical guarantees}}

\subsection{Proof of Lemma~\ref{lem: general k 0-1 loss decrease}}
\GeneralKLoss*
\begin{proof}
To simplify notation, take $\pi = \pi_{t-1}$. Suppose that we query $a \in \A$. Enumerate the potential responses as $\Y = \{ y_1, y_2, \ldots, y_m \}$. The definition of average splitting implies that there exists a symmetric matrix $R \in [0,1]^{m \times m}$ satisfying 
\begin{itemize}
	\item $R_{ii} \leq 1 - \rho$ for all $i$,
	\item $\sum_{i,j} R_{ij} = 1$, and
	\item $R_{ij} \, \avg(\pi) =  \sum_{g \in \G_a^{y_i}, g' \in \G_a^{y_j}} \pi(g) \pi(g') d(g,g')$.
\end{itemize}
Let us assume w.l.o.g. that $g^*(a) = y_1$. Define the quantity 
\[ Q_a^{i} \ := \ \pi(G_a^{y_i}) + e^{-\beta} \sum_{j \neq i} \pi(G_a^{y_j}) \ = \  \pi(G_a^{y_i}) + e^{-\beta} (1 - \pi(G_a^{y_i})) \ \leq \ 1. \]

We now derive the form of $\avg(\pi_t)$. In the event that $y_t = i$, we have 
\begin{align*}
\avg(\pi_t) &= \sum_{h,h' \in \H} \pi_t(h) \pi_t(h') d(h, h') \\ 
&= \left(\frac{1}{Q_a^i} \right)^2  \left( \sum_{g, g' \in \G_a^{y_i}} \pi(g) \pi(g') d(g, g') + 2 e^{-\beta} \sum_{j\neq i}  \sum_{g \in \G_a^{y_1}, g' \in \G_a^{y_j}} \pi(g) \pi(g') d(g, g') \right. \\
& \hspace{6em} \left. + e^{-2\beta} \sum_{j \neq i,  k \neq i} \sum_{g \in \G_a^{y_j}, g' \in \G_a^{y_k}} \pi(g) \pi(g') d(g, g') \right) \\ \displaybreak[3]
&= \left(\frac{1}{Q_a^i} \right)^2 \left(R_{ii} + 2e^{-\beta} \sum_{j \neq i} R_{ij} + e^{-2\beta} \sum_{j\neq i, k \neq i} R_{jk} \right) \avg(\pi) \\
&=  \left(\frac{1}{Q_a^i} \right)^2 \left(R_{ii} + 2e^{-\beta} \sum_{j \neq i} R_{ij} + e^{-2\beta} \left(1 - R_{ii} - 2 \sum_{j \neq i} R_{ij} \right) \right) \avg(\pi) \\
&= \left(\frac{1}{Q_a^i} \right)^2 \left(e^{-2\beta} + (1-e^{-2\beta}) R_{ii} + 2(e^{-\beta} - e^{-2\beta}) \sum_{j \neq i} R_{ij} \right) \avg(\pi) .
\end{align*}
We can also derive the form of $\frac{1}{\pi_t(g^*)^k}$:
\[ \frac{1}{\pi_t(g^*)^k} 
\ = \ 
\begin{cases}
\left(\frac{Q_a^1}{\pi(g^*)}\right)^k &\text{ if } y_t = y_1 \\
 \left(\frac{Q_a^i}{e^{-\beta} \pi(g^*)}\right)^k &\text{ if } y_t = y_i \neq y_1
\end{cases} 
 \]
Define \[ \Delta_t \ := \ \frac{\pi(g^*)^k}{\avg(\pi)} \cdot \E\left[ \frac{\avg(\pi_{t})}{\pi_{t}(g^*)^k} \right] . \]
If we take $\eta(y_i | a) = \gamma_i$ and assume w.l.o.g. that $\gamma_1 > \gamma_2 \geq \gamma_3 \geq \cdots$, then
\begin{align*}
\Delta_t &= \gamma_1 (Q_a^1)^{k-2} \left(e^{-2\beta} + (1-e^{-2\beta}) R_{11} + 2(e^{-\beta} - e^{-2\beta}) \sum_{j \neq 1} R_{1j} \right) \\
& \hspace{3em }+ \sum_{i \geq 2} \gamma_i (Q_a^1)^{k-2} e^{k\beta} \left(e^{-2\beta} + (1-e^{-2\beta}) R_{ii} + 2(e^{-\beta} - e^{-2\beta}) \sum_{j \neq i} R_{ij} \right) \\
&\leq  (1-\gamma_1)e^{(k-2)\beta}  + \gamma_1\left(e^{-2\beta} + (1-e^{-2\beta}) R_{11} + 2(e^{-\beta} - e^{-2\beta}) \sum_{j \neq 1} R_{1j} \right) \\
& \hspace{3em }+\gamma_2 \left((e^{k\beta}-e^{(k-2)\beta}) \sum_{i\geq 2} R_{ii} + 2(e^{(k-1)\beta} - e^{(k-2)\beta}) \sum_{i\geq 2} \sum_{j \neq i} R_{ij} \right) \\
&\leq  (1-\gamma_1)e^{(k-2)\beta} + \gamma_1(1-e^{-2\beta}) R_{11} + \gamma_2 (e^{k\beta}-e^{(k-2)\beta}) \sum_{i\geq 2} R_{ii}\\
& \hspace{3em } + \left( \gamma_1(e^{-\beta} - e^{-2\beta}) + \gamma_2(e^{(k-1)\beta}-e^{(k-2)\beta}) \right)\left( 1 - \sum_{i\geq 1}R_{ii} \right)
\end{align*}
Using the inequalities $1+x \leq e^x \leq 1 + x + x^2$ for $|x| \leq 1$ and Assumption~\ref{assump: bounded noise}, we can verify that the following inequalities hold for our choice of $\beta$:
\begin{gather*}
\gamma_2  (e^{k\beta}-e^{(k-2)\beta}) \ \leq \ \gamma_1(e^{-\beta} - e^{-2\beta}) + \gamma_2(e^{(k-1)\beta}-e^{(k-2)\beta}) \ \leq \ \gamma_1(1-e^{-2\beta}) \label{eqn: inequality 1}\\
(1-\gamma_1)e^{(k-2)\beta}  + \gamma_1(1-e^{-2\beta}) \ \leq \ 1 \label{eqn: inequality 2} \\
\gamma_1(1 - e^{-\beta}) + \gamma_2(e^{(k-1)\beta}-e^{(k-2)\beta}) \ \leq \ -\beta \lambda/2 \label{eqn: inequality 3}
\end{gather*}
Using our restrictions on the structure of $R$, the above inequalities imply
\begin{align*}
\Delta_t \ &\leq \ (1-\gamma_1)e^{(k-2)\beta} + (1- \rho)\gamma_1(1-e^{-2\beta}) + \rho \left( \gamma_1(e^{-\beta} - e^{-2\beta}) + \gamma_2(e^{(k-1)\beta}-e^{(k-2)\beta}) \right) \\
\ &= \  (1-\gamma_1)e^{(k-2)\beta}  + \gamma_1(1-e^{-2\beta}) + \rho  \left( \gamma_1(1 - e^{-\beta}) + \gamma_2(e^{(k-1)\beta}-e^{(k-2)\beta}) \right) \\
\ &\leq \ 1 + \rho  \left( \gamma_1(1 - e^{-\beta}) + \gamma_2(e^{(k-1)\beta}-e^{(k-2)\beta}) \right) \\
\ &\leq \ 1 - \rho \lambda \beta/2 . \qedhere
\end{align*}
\end{proof}

\subsection{Proof of Lemma~\ref{lem: pi-mass supermartingale}}
\PiMassSupermartingale*
\begin{proof}
Suppose we query $a$ at step $t$. Denote by $\gamma_i = \eta(y_i \, | \, a)$ and $\pi_i = \pi_{t-1}(\G_a^{y_i})$, and assume w.l.o.g that $g^*(a) = y_1$ and $\gamma_1 > \gamma_2 \geq \gamma_3 \geq \cdots$. Then we have
\begin{align*}
\E\left[ \frac{1}{\pi_t(g^*)^k} \ | \ \pi_{t-1}(g^*) \right] &= \frac{\gamma_1(\pi_1 + e^{-\beta}(1-\pi_1))^k}{\pi_{t-1}(g^*)^k} + \sum_{i \geq 2} \frac{\gamma_i (e^{\beta} \pi_i + 1-\pi_i)^k}{\pi_{t-1}(g^*)^k} \\
&= \frac{1}{\pi_{t-1}(g^*)^k} \left(\gamma_1(\pi_1 + e^{-\beta}(1-\pi_1))^k + \sum_{i\geq 2} \gamma_i  (e^{\beta} \pi_i + 1-\pi_i)^k  \right) 
\end{align*}
Denote the term in parenthesis by $\Delta_t$. Using the inequalities $1+x \leq e^x \leq 1 + x + x^2$ for $|x| \leq 1$, for our choice of $\beta$ we have
\begin{align*}
\Delta_t \ &\leq \ \gamma_1(\pi_1 + (1-\beta + \beta^2)(1-\pi_1))^k + \sum_{i\geq 2} \gamma_i  ((1+\beta + \beta^2) \pi_i + 1-\pi_i)^k \\
\ &= \ \gamma_1(1 - \beta(1-\beta)(1-\pi_1))^k + \sum_{i\geq 2} \gamma_i  (1 + \pi_i\beta(1+\beta))^k \\
\ &\leq \ \gamma_1 \exp(- k \beta(1-\beta)(1-\pi_1)) + \sum_{i\geq 2} \gamma_i  \exp(k \pi_i\beta(1+\beta)) \\
\ &\leq \  \gamma_1 (1 - k \beta(1-\beta)(1-\pi_1) + (k \beta(1-\beta)(1-\pi_1))^2) + \sum_{i\geq 2} \gamma_i  (1 + k \pi_i\beta(1+\beta)+ (k \pi_i\beta(1+\beta))^2) \\
\ &= \ 1 + k \beta \left( (1+\beta)\sum_{i \geq 2} \gamma_i \pi_i - \gamma_1 (1-\beta)(1-\pi_1)  \right) + k^2 \beta^2 \left(  (1+\beta)^2\sum_{i \geq 2} \gamma_i \pi_i^2 + \gamma_1 (1-\beta)^2(1-\pi_1)^2  \right) \\
\ &\leq \ 1 + k \beta (1-\pi_1) \left( \gamma_2 (1+\beta) - \gamma_1 (1-\beta) \right) + k^2 \beta^2 (1-\pi_1)^2 \left( \gamma_2 (1+\beta)^2 + \gamma_1 (1-\beta)^2  \right) \\
\ &= \ 1 + k \beta (1-\pi_1) \left( \beta \left( \gamma_1 + \gamma_2 \right)\left(1 + k(1 - \pi_1) + \beta^2 k(1-\pi_1)  \right) - (\gamma_1 - \gamma_2) (1 + 2\beta^2 k (1-\pi_1)   \right) \\
\ &\leq \ 1 + k \beta (1-\pi_1) \left( \beta k  - \lambda  \right) \ \leq \ 1. \qedhere
\end{align*}
\end{proof}

\subsection{Proof of Lemma~\ref{lem: expected splitting lower bound}}
Recall our definitions of the splitting index. Let $E = ((g_1, g'_1), \ldots, (g_n, g'_n))$ be a sequence of structure pairs. We say that an atom $a$ $\rho$-splits $E$ if
\[ \max_y |E_a^y| \ \leq \ (1-\rho) |E|. \]
$\G$ has splitting index $(\rho, \epsilon, \tau)$ if for any edge sequence $E$ such that $d(g, g') > \epsilon$ for all $(g,g') \in E$, we have
\[ \pr_{a \sim \D}(a \; \rho \text{-splits } E) \geq \tau. \]

\begin{lemma}
\label{lem: coarse splitting bounds}
Pick $\gamma, \epsilon > 0$. If $\G$ is finite and Assumption~\ref{assump: identifiable distance} holds, then there exists a constant $p > 0$ such that $\G$ has splitting index $((1-\gamma)p, \epsilon, \gamma p)$
\end{lemma}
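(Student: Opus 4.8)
\textbf{Proof proposal for Lemma~\ref{lem: coarse splitting bounds}.}

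The plan is to show that a single positive constant $p$, derived from the finitely many structure pairs, simultaneously witnesses both the split quality $(1-\gamma)p$ and the split probability $\gamma p$. First I would reduce to the finitely many ``relevant'' pairs: since $\G$ is finite, the set of ordered pairs $(g,g') \in \G \times \G$ with $d(g,g') > \epsilon$ is finite, and by Assumption~\ref{assump: identifiable distance} (applied via $d(g,g')>\epsilon>0 \Rightarrow d(g,g')>0$) each such pair has $\pr_{a\sim\D}(g(a)\neq g'(a)) > 0$. Let $p_0 = \min \{ \pr_{a\sim\D}(g(a)\neq g'(a)) : d(g,g') > \epsilon \}$, a minimum over a finite nonempty set (if the set is empty the claim is vacuous, as any $\rho,\tau$ work), so $p_0 > 0$.

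Next I would pick an atom that ``covers'' many pairs of a given edge sequence. Fix any edge sequence $E = ((g_1,g_1'),\ldots,(g_n,g_n'))$ with $d(g_i,g_i') > \epsilon$ for all $i$. For each atom $a$, let $c(a) = |\{ i : g_i(a) \neq g_i'(a) \}|$ be the number of pairs in $E$ that $a$ distinguishes. By linearity of expectation and the definition of $p_0$, $\E_{a \sim \D}[c(a)] = \sum_{i=1}^n \pr_{a\sim\D}(g_i(a)\neq g_i'(a)) \geq n p_0$. Hence, by Markov's inequality applied to the nonnegative random variable $c(a)$ (or rather to $n - c(a)$, which is also nonnegative), the probability that $c(a) \geq \gamma p_0 n$ is at least some constant; more carefully, writing $X = c(a)/n \in [0,1]$ with $\E[X] \geq p_0$, we get $\pr(X \geq \gamma p_0) \geq \frac{\E[X] - \gamma p_0}{1 - \gamma p_0} \geq \frac{p_0 - \gamma p_0}{1} = (1-\gamma)p_0$. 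So I would set $p := (1-\gamma) p_0$ — wait, this needs adjustment so that both coordinates come out as advertised; the cleanest fix is to set $p = p_0$ directly and absorb the $(1-\gamma)$ into the split-quality parameter, which is exactly what the statement $((1-\gamma)p, \epsilon, \gamma p)$ asks for.

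The final step is the translation from ``an atom distinguishing a $\gamma p_0$-fraction of pairs'' to ``an atom $\rho$-splitting $E$ with $\rho = (1-\gamma)p_0$''. Here I would observe that if $a$ distinguishes at least a $\beta$-fraction of the $n$ pairs, then for every response $y \in \Y$, the set $E_a^y$ (pairs on which both coordinates answer $y$ to $a$) is disjoint from the set of distinguished pairs, so $|E_a^y| \leq (1-\beta)n = (1-\beta)|E|$, i.e. $a$ $\beta$-splits $E$ — wait, that gives $\rho = \beta = \gamma p_0$, not $(1-\gamma)p_0$; I need to recheck which quantity plays which role. The correct bookkeeping: set $p = p_0$. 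The event ``$a$ distinguishes at least a $(1-\gamma)p_0$-fraction of pairs'' has probability, by the Markov-type bound above with $\gamma p_0$ replaced by $(1-\gamma)p_0$, at least $\frac{p_0 - (1-\gamma)p_0}{1 - (1-\gamma)p_0} \geq \gamma p_0 \cdot \frac{1}{1} = \gamma p_0$ (using $1 - (1-\gamma)p_0 \leq 1$), and on that event $a$ $(1-\gamma)p_0$-splits $E$. Thus $\G$ has splitting index $((1-\gamma)p, \epsilon, \gamma p)$ with $p = p_0 > 0$, as claimed.

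I expect the main obstacle to be purely a matter of careful constant-chasing in that last Markov-inequality step: one must be vigilant that the ``fraction distinguished'' is large enough to force every $|E_a^y|$ below $(1-\rho)|E|$ simultaneously (this part is automatic since the distinguished set is disjoint from $E_a^y$ for all $y$ at once), and that the probability bound and the split-quality bound are assigned to the right slots of the triple $(\rho,\epsilon,\tau) = ((1-\gamma)p, \epsilon, \gamma p)$. No concentration inequality beyond Markov is needed, and finiteness of $\G$ is used only to guarantee $p_0$ is attained and strictly positive.
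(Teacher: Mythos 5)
Your proposal is correct and follows essentially the same route as the paper: use finiteness of $\G$ plus Assumption~\ref{assump: identifiable distance} to get a uniform lower bound $p$ on $\pr_{a\sim\D}(g(a)\neq g'(a))$ over the relevant pairs, bound the expected number of edges a random atom distinguishes by $p|E|$ via linearity, and apply a reverse Markov argument to conclude that with probability at least $\gamma p$ the atom distinguishes (hence $\rho$-splits, since each $E_a^y$ avoids all distinguished edges) at least a $(1-\gamma)p$ fraction of the edges. The only cosmetic difference is that you take the minimum of the disagreement probabilities over pairs with $d(g,g')>\epsilon$ rather than over all pairs with $d(g,g')>0$ as the paper does; both yield a valid $p>0$ for the lemma as stated.
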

\begin{proof}
Given Assumption~\ref{assump: identifiable distance} and the finiteness of $\G$, we know that there is some $p >0$ such that for any $g,g' \in \G$ satisfying $d(g,g') > 0$, we have $ \pr_{a \sim \D}(g(a) \neq g'(a)) \geq p $. Now suppose that we have a collection of edges $E \subset {\G \choose 2}$ such that $d(g,g') > \epsilon$ for all $(g,g') \in E$. A random atom $a \sim \D$ will split some random number $Z$ of these edges. Note that $\E Z \geq p |E|$. Moreover, by Markov's inequality, we have
\[ \pr(Z \geq (1-\gamma)p |E|) |E| \ \geq \ \E Z - (1-\gamma)p |E| \ \geq \ p |E| - (1-\gamma)p |E| \ = \ \gamma p |E|. \]
Simplifying the above, and substituting our definition of splitting gives us
\[ \pr_{a \sim \D}(a \, (1-\gamma)p\text{-splits } E) \ \geq \  \gamma p. \qedhere \]
\end{proof}

Lemma~\ref{lem: coarse splitting bounds} and Theorem~\ref{thm: splitting implies average splitting} together imply the following corollary.

\begin{cor}
\label{cor: average splitting index lower bound}
If $\G$ is finite and Assumption~\ref{assump: identifiable distance} holds, then there exists a constant $p > 0$ such that $\G$ has average splitting index $\left(\frac{p}{8(\log_2(1/\epsilon) + 2)}, \epsilon, p/2  \right)$.
\end{cor}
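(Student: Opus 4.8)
The plan is to derive Corollary~\ref{cor: average splitting index lower bound} simply by chaining Lemma~\ref{lem: coarse splitting bounds} into Theorem~\ref{thm: splitting implies average splitting}, with parameter choices tuned so that the reported error threshold comes out to exactly $\epsilon$ and the constants match. Two small points of care dictate the choices: Theorem~\ref{thm: splitting implies average splitting} doubles the error threshold, so the lemma must be invoked at level $\epsilon/2$; and the splitting-rate denominator in that theorem involves a ceiling, which must be bounded above by something ceiling-free.

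First I would record a monotonicity fact: directly from~\eqref{eqn: average splitting definition}, if an atom $\rho$-average splits $\pi$ then it $\rho'$-average splits $\pi$ for every $0 < \rho' \le \rho$, so whenever $\G$ has average splitting index $(\rho,\epsilon,\tau)$ it also has average splitting index $(\rho',\epsilon,\tau)$ for any such $\rho'$. This gives me freedom to shrink the rate parameter at the final step.

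Next, I would apply Lemma~\ref{lem: coarse splitting bounds} with $\gamma = 1/2$ and error level $\epsilon/2$: since $\G$ is finite and Assumption~\ref{assump: identifiable distance} holds, there is a constant $p > 0$ (depending only on $\G$, $\D$, and $d(\cdot,\cdot)$) such that $\G$ has splitting index $(p/2,\ \epsilon/2,\ p/2)$. Feeding this into Theorem~\ref{thm: splitting implies average splitting} — which turns a $(\rho_0,\epsilon_0,\tau_0)$ splitting index into a $\bigl(\tfrac{\rho_0}{4\lceil \log_2 1/\epsilon_0\rceil},\, 2\epsilon_0,\, \tau_0\bigr)$ average splitting index — with $\rho_0 = \tau_0 = p/2$ and $\epsilon_0 = \epsilon/2$ shows that $\G$ has average splitting index $\bigl(\tfrac{p}{8\lceil \log_2(2/\epsilon)\rceil},\, \epsilon,\, p/2\bigr)$.

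Finally, I would bound the ceiling: $\log_2(2/\epsilon) = 1 + \log_2(1/\epsilon)$, so $\lceil \log_2(2/\epsilon)\rceil = 1 + \lceil \log_2(1/\epsilon)\rceil \le \log_2(1/\epsilon) + 2$, whence $\tfrac{p}{8\lceil \log_2(2/\epsilon)\rceil} \ge \tfrac{p}{8(\log_2(1/\epsilon)+2)}$. By the monotonicity fact, $\G$ therefore also has average splitting index $\bigl(\tfrac{p}{8(\log_2(1/\epsilon)+2)},\, \epsilon,\, p/2\bigr)$, which is exactly the claim. I do not expect any genuine obstacle: the substantive work lives inside Lemma~\ref{lem: coarse splitting bounds} (a Markov-inequality argument converting the uniform per-edge distinguishing probability $p$ supplied by Assumption~\ref{assump: identifiable distance} into a splitting index) and inside Theorem~\ref{thm: splitting implies average splitting}; the only thing to get right here is the constant bookkeeping described above.
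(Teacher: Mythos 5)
Your proposal is correct and follows exactly the route the paper intends: the paper's "proof" of this corollary is literally just the remark that Lemma~\ref{lem: coarse splitting bounds} and Theorem~\ref{thm: splitting implies average splitting} together imply it, and your choice $\gamma = 1/2$, error level $\epsilon/2$, the ceiling bound $\lceil \log_2(2/\epsilon)\rceil \le \log_2(1/\epsilon)+2$, and the monotonicity of the average-splitting rate are precisely the bookkeeping needed to make that implication yield the stated constants.
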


Given this result, we can now prove the following claim.

\ExpectedSplittingLowerBound*
\begin{proof}
By Corollary~\ref{cor: average splitting index lower bound}, there is some constant $p > 0$ such that every distribution $\pi_t$ is $(\rho, \tau)$-average splittable with
\[ \rho \ := \ \frac{p}{8\left(\log_2 \frac{1}{\avg(\pi_t)} + 2 \right)} \; \;  \text{ and } \; \; \tau \ := \ p/2. \]
Suppose that \textsc{ndbal} draws $m_t \geq 1$ candidate queries at round $t$. By the definition of average splittability, we have 
\[ \pr(\text{at least one of } m_t \text{ draws } \rho\text{-average splits } \pi_{t-1}) \ \geq \ 1 - (1- \tau)^{m_t} \ \geq \ \tau \ \geq \ p/2. \]
Conditioned on both of this happening, Lemma~\ref{lem: select lemma} tells us that {\sc select} will choose a point that $(1-\alpha)\rho$-average splits $\pi_t$ with probability $1-\delta$. Putting these together, along with the fact that $\rho_t \geq 0$ always, gives us the lemma.
\end{proof}

\subsection{Proof of Theorem~\ref{thm: DBAL consistency}}

\DBALConsistency*
\begin{proof}
Let $X_t = \avg(\pi_{t})$ and $Y_t = 1/\pi_t(g^*)^2$. Since $\beta \leq \lambda/10$, Lemmas~\ref{lem: general k 0-1 loss decrease} and~\ref{lem: expected splitting lower bound}, together with the inequality $x/(1+ \log(1/x)) \geq x^2$ for $x \in (0,1)$, imply
\begin{equation}
\label{eqn: supermartingale inequality}
\E[X_t Y_t \, | \, \F_{t-1}] \ \leq \ X_{t-1}Y_{t-1} - c X_{t-1}^2 Y_{t-1}
\end{equation}
for some constant $c > 0$. Since $X_t Y_t$ and $Y_t$ are positive supermartingales, we have that $X_t Y_t \rightarrow Z$ and $Y_t \rightarrow Y$ for some random variables $Z$, $Y$ almost surely. Moreover, since $Y_t, Y \geq 1$ almost surely, we have $X_t^2 Y_t \rightarrow W$ for some random variable $W$ almost surely.

Iterating expectations in equation~\eqref{eqn: supermartingale inequality} and using the fact that $X_t Y_t \geq 0$, we have
\[ 0 \ \leq \ \E[X_t Y_t] \ \leq \ \frac{\avg(\pi_o)}{\pi_o(g^*)^2} - c \sum_{i=1}^{t-1} \E[X_i^2 Y_i]. \]
In particular, we know $\lim_{t \rightarrow \infty} \E[X_t^2 Y_t] = 0$. By Fatou's lemma, this implies
\[ 0 \ \leq \ \E \left[\lim_{t \rightarrow \infty} X_t^2 Y_t \right]  \ \leq \  \lim_{t \rightarrow \infty} \E[X_t^2 Y_t] \ = \ 0. \]
Thus, we have
\[ \lim_{t\rightarrow \infty} \frac{\avg(\pi_t)^2}{\pi_t(g^*)^2} \ = \  \lim_{t\rightarrow \infty}  X_t^2 Y_t \ = \ 0 \]
almost surely. By the Continuous Mapping Theorem, this implies $\frac{\avg(\pi_{t})}{\pi_{t}(g^*)} \rightarrow 0$ almost surely. The inequality \[ 0 \leq \E_{g\sim\pi_t}[d(g,g^*)] \leq \frac{\avg(\pi_{t})}{\pi_{t}(g^*)} \]
finishes the proof.
\end{proof}

\subsection{Proof of Theorem~\ref{thm: 0-1 loss DBAL guarantees}}

\DBALGuarantees*
\begin{proof}
We will show that for some round $t$, \textsc{ndbal} must encounter a posterior distribution $\pi_t$ satisfying $\avg(\pi_t)/\pi(g^*)^2 \leq \epsilon$ while using the resources described in the theorem statement. By Lemma~\ref{lem: average diameter guarantee}, this will imply that $\E_{g \sim \pi_t}[d(g,g^*)] \leq \epsilon$ for the same round $t$.

Lemma~\ref{lem: pi-mass supermartingale} implies that $1/\pi_t(g^*)^2$ is a positive supermartingale for our choice of $\beta$. From standard martingale theory~\citep{R13}, we have $\pi_t(g^*)^2 \geq \delta \pi(g^*)^2/4$ for $t=1,\ldots, T$ with probability at least $1 - \delta/4$.

Conditioned on this event, we have by a union bound that if we sample $m_t = \frac{1}{\tau} \log \frac{4t(t+1)}{\delta}$ data points at every round $t$, then with probability $1- \delta/4$, one of those data points will $\rho$-average split $\pi_{t}$ for every round in which $\avg(\pi_t)/\pi_t(g^*)^2 > \epsilon$. Conditioned on drawing such points, Lemma~\ref{lem: select lemma} tells us that for all rounds $t$, {\sc select} terminates with a data point that $\rho/2$-average splits $\pi_t$ with probability $1-\delta/4$ after drawing $n_t$ hypotheses, for the value of $n_t$ given in the statement.

Let us condition on all of these events happening. For round $t$ define the random variable 
\[ \Delta_t = 1 - \frac{\avg(\pi_t)}{\pi_t(g^*)^2}\cdot\frac{\pi_{t-1}(g^*)^2}{\avg(\pi_{t-1})}. \] 
If $\pi_{t-1}$ satisfies $\avg(\pi_t)/\pi_t(g^*)^2 > \epsilon$, then the query $x_t$ $\rho/2$-average splits $\pi_{t-1}$. By Lemma~\ref{lem: general k 0-1 loss decrease},
\begin{align*}
\E[\Delta_t \, | \, \F_{t-1}]  \ \geq \ \frac{1}{2} \rho \lambda \beta (1-\beta).
\end{align*}
Now suppose by contradiction that $\avg(\pi_t)/\pi_t(g^*)^2 > \epsilon$ for $t=1,\ldots, T$. Then we have $\E[\Delta_1 + \ldots + \Delta_T] \geq \frac{T}{2}\rho \lambda \beta (1-\beta)$. To see that this sum is concentrated about its expectation, we notice that $\Delta_t \in [1 - e^{2\beta}, 1]$ since 
\[ e^{-\beta} \pi_{t-1}(g) \ \leq \ \pi_t(g) \ \leq \ e^\beta \pi_{t-1}(g) \]
for all $g \in \G$ which implies
\[ e^{-2\beta} \ \leq \ \frac{\avg(\pi_t)}{\pi_t(g^*)^2}\cdot\frac{\pi_{t-1}(g^*)^2}{\avg(\pi_{t-1})} \ \leq \ e^{2\beta} . \]
By the Azuma-Hoeffding inequality~\citep{A67, H63}, if $T$ achieves the value in the theorem statement, then with probability $1-\delta$,
\begin{align*}
\Delta_1 + \cdots + \Delta_T \ > \ \frac{1}{2} \E[\Delta_1 + \cdots + \Delta_T]  \ \geq \ \frac{T}{8}\rho \lambda \beta (1-\beta) \ \geq \ \ln \frac{1}{\epsilon \pi(g^*)^2}.
\end{align*}
However, this is a contradiction since
\begin{align*} 
\epsilon \ < \ \frac{\avg(\pi_T)}{\pi_T(g^*)^2}  \ = \ (1- \Delta_1)\cdots(1 - \Delta_T)\frac{\avg(\pi)}{\pi(g^*)^2} \ \leq \ \exp\left(-(\Delta_1 + \cdots + \Delta_T)\right) \frac{1}{\pi(g^*)^2}. 
\end{align*}
Thus, with probability $1-\delta$, we must have encountered a distribution $\pi_t$ in some round $t=1,\ldots, T$ satisfying $\avg(\pi_t)/\pi_t(g^*)^2 \leq \epsilon$.
\end{proof}

\subsection{Proof of Theorem~\ref{thm: noiseless Bayesian convergence}}

To begin, we will utilize the following result on our stopping criterion.

\begin{lemma}
\label{lem: stopping criterion}
Pick $\epsilon, \delta > 0$ and let $n_t = \frac{48}{\epsilon}\log \frac{t(t+1)}{\delta}$. If at the beginning of each round $t$, we draw $E = (\{g_1, g'_1\}, \ldots, \{g_{n_t}, g'_{n_t} \}) \sim \pi_t$, then with probability $1-\delta$
\begin{align*}
\frac{1}{n_t} \sum_{i=1}^{n_t} d(g_i, g'_i)  > \frac{3\epsilon}{4}  \ &\text{ if } \ \avg(\pi_t) > \epsilon \\
\frac{1}{n_t} \sum_{i=1}^{n_t} d(g_i, g'_i) \leq   \frac{3\epsilon}{4} \ &\text{ if } \ \avg(\pi_t) \leq \epsilon/2
\end{align*}
for all rounds $t \geq 1$.
\end{lemma}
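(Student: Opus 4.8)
The plan is a routine concentration-plus-union-bound argument; the one subtlety is that in the regime $\avg(\pi_t)\le\epsilon/2$ the relative deviation of the empirical average from its mean can be arbitrarily large, so the concentration step has to be phrased in terms of a fixed upper bound on the mean rather than the mean itself.

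First I would fix a round $t$ and condition on the (random) posterior $\pi_t$. Conditionally, $d(g_1,g_1'),\dots,d(g_{n_t},g_{n_t}')$ are i.i.d.\ in $[0,1]$ with mean $\avg(\pi_t)$, so writing $S=\sum_i d(g_i,g_i')$ we have $\E[S\mid\pi_t]=n_t\,\avg(\pi_t)$. In the regime $\avg(\pi_t)>\epsilon$ I would note that $\{S/n_t\le 3\epsilon/4\}\subseteq\{S\le(1-\tfrac14)\E[S\mid\pi_t]\}$ and invoke Lemma~\ref{lem: multiplicative Chernoff bound lemma}(i) with $\beta=\tfrac14$ to bound this probability by $\exp(-\tfrac{1}{32}\E[S\mid\pi_t])\le\exp(-\epsilon n_t/32)$. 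In the regime $\avg(\pi_t)\le\epsilon/2$ I would set $R=\tfrac{\epsilon}{2}n_t\ge\E[S\mid\pi_t]$, observe that $\{S/n_t>3\epsilon/4\}=\{S>(1+\tfrac12)R\}$, and apply the multiplicative Chernoff upper-tail bound in the form valid for any upper bound on the mean — obtainable from Lemma~\ref{lem: multiplicative Chernoff bound lemma}(ii) via the one-line coupling $\tilde d_i=d_i+(1-d_i)B_i$ with $B_i\sim\mathrm{Bernoulli}((\epsilon/2-\avg(\pi_t))/(1-\avg(\pi_t)))$, so that $d_i\le\tilde d_i\in[0,1]$ and $\E\tilde d_i=\epsilon/2$ — with $\beta=\tfrac12$, giving $\exp(-R/12)=\exp(-\epsilon n_t/24)$.

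Plugging in $n_t=\tfrac{48}{\epsilon}\log\tfrac{t(t+1)}{\delta}$ turns the two bounds into powers (respectively $3/2$ and $2$) of $\delta/(t(t+1))\le1$, hence each is at most $\delta/(t(t+1))$. Since in any given round at most one of the two inequalities is being asserted — and when $\avg(\pi_t)\in(\epsilon/2,\epsilon]$ there is nothing to check — the conditional probability that round $t$ fails is at most $\delta/(t(t+1))$; removing the conditioning and taking a union bound over all $t\ge1$ costs $\sum_{t\ge1}\tfrac{1}{t(t+1)}=1$, so the overall failure probability is at most $\delta$, which is the claim.

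I expect the only real decision point to be the second regime: one is tempted to apply the relative Chernoff bound to $S$ about its own mean $n_t\avg(\pi_t)$, but that mean can be arbitrarily small, which blows up the relative error and makes the bound useless. The fix — comparing $S$ instead to the fixed surrogate mean $R=\tfrac{\epsilon}{2}n_t$, which is precisely where the $1/\epsilon$ (rather than $1/\epsilon^2$) sample size comes from — is routine, but it is the step that must be gotten right.
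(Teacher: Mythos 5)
Your proof is correct and takes essentially the same route as the paper, whose own proof is just a union bound over rounds applied to the per-round concentration result (Lemma~7 of \citet{TD17}); you have simply spelled out that per-round multiplicative Chernoff argument explicitly. In particular, your surrogate-mean coupling in the $\avg(\pi_t)\leq\epsilon/2$ regime is a valid way to measure the relative deviation against $\epsilon/2$ rather than the possibly tiny true mean, and the resulting exponents match the stated $n_t=\frac{48}{\epsilon}\log\frac{t(t+1)}{\delta}$.
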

The proof of Lemma~\ref{lem: stopping criterion} follows from applying a union bound to Lemma~7 of~\citet{TD17}.

For a round $t$, let $V_t$ denote the version space, i.e. the set of structures consistent with the responses seen so far. Then we may write
\[ \pi_t(g) \ = \ \frac{\pi(g) \ind[g \in V_t]}{\pi(V_t)} \ \  \text{ and } \ \ \nu_t(g) \ = \ \frac{\nu(g) \ind[g \in V_t]}{\nu(V_t)} . \]
Assumption~\ref{assump: Bayesian} tells us that we have the following upper bound.
\[ D(\pi_t, \nu_t)  \ \leq \ \lambda^2 \avg(\pi_t) .\]
Thus, the average diameter of $\avg(\pi_t)$ is a meaningful surrogate for the objective $D(\pi_t, \nu_t)$ in this setting. 

Recalling the definition of average splitting, we know that if we always query points that $\rho$-average the current posterior, then after $t$ rounds we will have
\[ \pi(V_t)^2 \avg(\pi_t) \ \leq \ (1-\rho)^t \pi(V_0)^2 \avg(\pi) \ \leq \ e^{-\rho t}.  \]
While this demonstrates that the potential function $\pi(V_t)^2 \avg(\pi_t)$ is decreasing exponentially quickly, it does not by itself guarantee that $\avg(\pi_t)$ is itself decreasing. What is needed is a lower bound on the factor $\pi(V_t)$. The following lemma, which is a generalization of a result due to~\citet{FSST97}, provides us with just that, provided that $\G$ has bounded {graph dimension}.

\begin{restatable}{lemma}{BayesianLowerBound}
\label{lem: Bayesian posterior lower bound}
Suppose $g^* \sim \nu$ where $\nu$ is a prior distribution over a hypothesis class $\G$ with graph dimension $d_G$, and say $|\Y| \leq k$. Let $c >0$ and $a_1, \ldots, a_m$ be any atomic questions, and let $V^* = \{ g \in \G \, : \, g(a_i) = g^*(a_i) \text{ for all } i \}$, then
\[ \pr\left(\log\left( \frac{1}{\nu(V^*)} \right) \geq  c +  d_G \log \frac{em(k+1)}{d_G}\right) \ \leq \ e^{-c}. \]
\end{restatable}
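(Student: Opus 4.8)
The plan is to bound the probability that $\nu(V^*)$ is small by a union bound over the distinct "behaviors'' that structures in $\G$ can exhibit on the sample $a_1,\dots,a_m$, controlling the number of such behaviors via the graph dimension. First I would fix a threshold and note that the event $\{\nu(V^*) \le p\}$ can only occur if $g^*$ lands in the region of $\G$ whose $\nu$-mass (relative to its own agreement set) is small; more precisely, partition $\G$ according to the pattern $(g(a_1),\dots,g(a_m)) \in \Y^m$, and observe that $V^*$ is exactly the cell of this partition containing $g^*$. So $\nu(V^*) = \nu(\mathrm{cell}(g^*))$, and $\Pr_{g^*\sim\nu}(\nu(\mathrm{cell}(g^*)) \le p)$ is at most the number of nonempty cells times $p$, by a standard "small cells are rarely hit'' argument: $\Pr(\nu(\mathrm{cell}(g^*))\le p) = \sum_{C : \nu(C)\le p} \nu(C) \le (\#\text{cells})\cdot p$.

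The key quantitative step is to bound the number of realizable patterns $\{(g(a_1),\dots,g(a_m)) : g\in\G\} \subseteq \Y^m$. This is where the graph dimension $d_G$ enters: the analogue of the Sauer–Shelah lemma for graph dimension (with $|\Y|\le k$) gives that this number is at most $\bigl(\frac{em(k+1)}{d_G}\bigr)^{d_G}$, since graph-dimension shattering requires, for each subset $T$ of a shattered set, a structure matching a fixed $f$ exactly on $T$ and disagreeing with $f$ off $T$ — and the growth function under this notion is polynomial of degree $d_G$ in $m$ with the $(k{+}1)$ factor accounting for the $k$ "wrong'' values plus the "right'' value at each coordinate. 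I would either cite the appropriate multiclass Sauer–Shelah bound (e.g., from \citet{B89}) or sketch the shifting argument. Setting $p = e^{-c} / \bigl(\frac{em(k+1)}{d_G}\bigr)^{d_G}$, the union bound yields $\Pr(\nu(V^*)\le p)\le e^{-c}$, and taking logarithms converts $\nu(V^*)\le p$ into $\log(1/\nu(V^*)) \ge c + d_G\log\frac{em(k+1)}{d_G}$, which is exactly the claim.

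The main obstacle is the combinatorial bound on the growth function under the graph dimension: one must be careful that the relevant "dual shatter / growth'' quantity for the graph dimension is genuinely $O\bigl((mk)^{d_G}\bigr)$ with the constants matching the $\frac{em(k+1)}{d_G}$ appearing in the statement, rather than a looser bound like $k^{m}$ or $(mk)^{d_G}$ with a worse base. This requires invoking the correct generalization of Sauer–Shelah (the graph dimension controls the number of sign patterns of the "agreement'' indicators $\mathbbm{1}[g(a_i) = y_i]$, reducing to the binary Sauer–Shelah bound after fixing a reference labeling, and then summing over the at most $k+1$ effective values per coordinate). Everything else — the cell decomposition, the "rare small cells'' inequality, and the final logarithmic rearrangement — is routine.
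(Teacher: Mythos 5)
Your proposal is correct and matches the paper's proof essentially step for step: partition $\G$ into cells by the response pattern on $a_1,\dots,a_m$, note $\nu(V^*)$ is the mass of the cell containing $g^*$, apply the ``small cells are rarely hit'' union bound, and control the number of cells by the multiclass Sauer bound $\left(\frac{em(k+1)}{d_G}\right)^{d_G}$. The only cosmetic difference is that the paper simply cites this growth-function bound (Corollary~3 of \citet{HL95}) rather than re-deriving it, which is also a perfectly acceptable way to discharge the combinatorial step you flag as the main obstacle.
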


To prove this, we need the following generalization of Sauer's lemma.
\begin{lemma}[Corollary~3~\citep{HL95}]
\label{lem: generalization of Sauer's lemma}
Let $d,m,k$ be s.t. $d \leq m$. Let $F \subset \{ 1, \ldots, k \}^m$ s.t. $F$ has graph dimension less than $d$. Then,
\[ |F| \leq \sum_{i=0}^d {m \choose i} (k+1)^i \leq \left( \frac{em(k+1)}{d} \right)^d. \]
\end{lemma}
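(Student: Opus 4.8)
The plan is to prove the first inequality by induction on the number of atoms $m$, mirroring the classical inductive proof of the Sauer--Shelah lemma but with the ``fibre'' step adapted to multi-valued maps; this is in essence the argument behind Corollary~3 of \citet{HL95}, so I would present it for completeness rather than re-deriving it from scratch. The second inequality, $\sum_{i=0}^d \binom{m}{i}(k+1)^i \leq (em(k+1)/d)^d$, I would dispose of first in one line: using $\binom{m}{i} \leq m^i/i!$ and writing $x := m(k+1)/d \geq 1$ (which holds since $d \leq m$), one has $\sum_{i=0}^d \binom{m}{i}(k+1)^i \leq \sum_{i=0}^d (dx)^i/i! \leq x^d \sum_{i=0}^\infty d^i/i! = (ex)^d = (em(k+1)/d)^d$. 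So everything reduces to the first bound.

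For the first bound I would induct on $m$, with $d$ and $k$ arbitrary; the base case $m=0$ is trivial since then $|F|\le 1$. For the inductive step, single out coordinate $m$ and let $F^- \subseteq \{1,\dots,k\}^{m-1}$ be the image of $F$ under deleting coordinate $m$. Deleting a coordinate cannot raise the graph dimension: if $S \subseteq \{1,\dots,m-1\}$ is shattered by $F^-$ with witness $f$, then $S$ is shattered by $F$ with the same witness, so $F^-$ has graph dimension $< d$ and the inductive hypothesis bounds $|F^-|$. For each prefix $\sigma \in F^-$, let $F_\sigma \subseteq \{1,\dots,k\}$ be the set of values that coordinate $m$ takes over elements of $F$ extending $\sigma$; then $|F| = \sum_\sigma |F_\sigma|$, hence $|F| - |F^-| = \sum_\sigma(|F_\sigma|-1)$.

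The crux --- and the only place the multi-valued setting genuinely differs from the binary one --- is to bound $\sum_\sigma(|F_\sigma|-1)$ by sizes of families of strictly smaller graph dimension. I would introduce, for each value $v \in \{2,\dots,k\}$, the family $G_v := \{\sigma \in F^- : v \in F_\sigma \text{ and } v \neq \min F_\sigma\}$. Since $|F_\sigma|-1$ counts the non-minimum elements of $F_\sigma$, we get $\sum_{v=2}^k \ind[\sigma \in G_v] = |F_\sigma|-1$ for every $\sigma$, hence $|F| - |F^-| = \sum_{v=2}^k |G_v|$. The key claim is that each $G_v$ has graph dimension at least two smaller than $F$: if $G_v$ shatters $S \subseteq \{1,\dots,m-1\}$ via witness $f$, extend $f$ to $S \cup \{m\}$ by $f(m) = v$; then for any $T \subseteq S$, taking the $\sigma \in G_v$ that witnesses $T$ and appending either $v$ or the smallest value in $F_\sigma$ (which differs from $v$) produces elements of $F$ that agree with the extended witness exactly on $T \cup \{m\}$ and exactly on $T$, respectively. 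Thus $F$ shatters $S \cup \{m\}$, so the graph dimension of $G_v$ is at most that of $F$ minus one, i.e. $< d-1$, and the inductive hypothesis gives $|G_v| \leq \sum_{i=0}^{d-1}\binom{m-1}{i}(k+1)^i$.

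Finally I would assemble the pieces. From $|F^-| \leq \sum_{i=0}^d \binom{m-1}{i}(k+1)^i$ and the $k-1$ bounds on the $|G_v|$,
\begin{align*}
|F| \ &\leq \ \sum_{i=0}^d \binom{m-1}{i}(k+1)^i \ + \ (k-1)\sum_{i=0}^{d-1}\binom{m-1}{i}(k+1)^i \\
&\leq \ \sum_{i=0}^d \binom{m-1}{i}(k+1)^i \ + \ (k+1)\sum_{i=0}^{d-1}\binom{m-1}{i}(k+1)^i \ = \ \sum_{i=0}^d \binom{m}{i}(k+1)^i,
\end{align*}
where the last equality is Pascal's identity $\binom{m}{i} = \binom{m-1}{i} + \binom{m-1}{i-1}$ after reindexing the second sum. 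I expect the main obstacle to be exactly the choice of the sub-families $G_v$: they must simultaneously account for the fibre multiplicities $|F_\sigma|-1$ \emph{exactly} and admit a \emph{provable} one-step drop in graph dimension; once that decomposition is in hand, the projection monotonicity, the binomial identity, and the analytic estimate are routine bookkeeping. (An alternative route is a multi-valued down-shifting/compression argument in the style of the Frankl--Pach proof of Sauer--Shelah, but the inductive fibre argument above seems cleaner to write out.)
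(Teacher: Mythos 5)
Your proof is correct, but it is worth noting that the paper itself does not prove this lemma at all: it is imported verbatim as Corollary~3 of Haussler--Long (1995), so there is no in-paper argument to compare against. What you supply is a self-contained multiclass Sauer--Shelah induction, and it checks out: the fibre decomposition $|F| = |F^-| + \sum_{v=2}^{k}|G_v|$ is exact (value $1$ can never be a non-minimum of a fibre, so restricting to $v\in\{2,\dots,k\}$ loses nothing); the projection step correctly shows graph-dim$(F^-)\le$ graph-dim$(F)$; and the key claim works because for each $T\subseteq S$ the pair $(\sigma,v)$ and $(\sigma,\min F_\sigma)$ realize the agreement patterns $T\cup\{m\}$ and $T$ against the extended witness, so $F$ graph-shatters $S\cup\{m\}$ and hence graph-dim$(G_v)\le$ graph-dim$(F)-1<d-1$, which is exactly what the inductive hypothesis at $(m-1,d-1)$ needs. (Your phrase ``at least two smaller'' is a slip --- the dimension drops by at least one relative to that of $F$, equivalently is $<d-1$ --- but the displayed conclusion is the one you actually use, so nothing breaks.) The Pascal reassembly and the estimate $\sum_{i=0}^{d}\binom{m}{i}(k+1)^i\le x^d\sum_{i\ge 0}d^i/i!=(em(k+1)/d)^d$ with $x=m(k+1)/d\ge 1$ are both fine, and you are right to run the induction with $d$ unconstrained, since $d\le m$ is only needed for the analytic step. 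Two small remarks: your argument in fact yields the stronger bound with factor $k-1$ in place of $k+1$ at each step (you deliberately relax to match the stated, looser inequality), and Haussler--Long's own derivation goes through a more general shifting-style machinery rather than this direct fibre induction, so your route is the more elementary one for the specific bound quoted here.
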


\begin{proof}[Proof of Lemma~\ref{lem: Bayesian posterior lower bound}]
Let $V_1, \ldots, V_N \subset \G$ denote the partition of $\G$ induced by our atomic questions. Note that if $g^* \sim \nu$, then the probability $V^* = V_i$ is exactly $\nu(V_i)$. Let $S \subset \{1, \ldots N\}$ consist of all indices $i$ satisfying $\log \frac{1}{\nu(V_i)} \geq c + \log N$. Rearranging, we have
\[ \sum_{i \in S} \nu(V_i) \ \leq \ e^{-c} \cdot \frac{|S|}{N} \ \leq \ e^{-c}. \]
From Lemma~\ref{lem: generalization of Sauer's lemma}, we have $\log N \leq d_G \log \frac{em(k+1)}{d_G}$, which finishes the proof.
\end{proof}

Given the above, we are now ready to prove Theorem~\ref{thm: noiseless Bayesian convergence}.

\NoiselessBayesianConvergence*
\begin{proof}
If we use the stopping criterion from Lemma~\ref{lem: stopping criterion} with the threshold $3\epsilon/4\lambda^2$, then at the expense of drawing an extra $\frac{48\lambda^2}{\epsilon} \log \frac{t(t+1)}{\delta}$ hypotheses for each round $t$, we are guaranteed that with probability $1 - \delta$ if we ever encounter a round $t$ in which $\avg(\pi_t) \leq \epsilon/(2\lambda^2)$ then we terminate and we also never terminate whenever $\avg(\pi_K) > \epsilon$. Thus if we do ever terminate at some round $t$, then with high probability
\[ D(\pi_t, \nu_t) \ \leq \ \lambda^2 \avg(\pi_t) \ \leq \ \epsilon . \]

It remains to be shown that we will encounter such a posterior. Note that if we draw $m_t \geq \frac{1}{\tau} \log \frac{t(t+1)}{\delta}$ atoms per round, then with probability $1-\delta$ one of them will $\rho$-average split $\pi_t$ if $\avg(\pi_t) > \epsilon/(2\lambda^2)$. Conditioned on this happening, Lemma~\ref{lem: select lemma} guarantees that that with probability $1-\delta$ {\sc select} finds a point that $\rho/2$-average splits $\pi_t$ while drawing at most $O\left(\frac{\lambda^2}{\epsilon \rho} \log \frac{(m_t + |\Y|)t(t+1)}{\delta}\right)$. 

If after $T$ rounds we still have not terminated, then $\avg(\pi_T) > \epsilon/(2\lambda^2)$. However, we also know
\[ \pi(V_T)^2 \avg(\pi_T) \ \leq \ e^{-\rho T/2}. \]

Now suppose that in each round $t$, we have seen $m_t$ atoms $x^{(t)}_1, \ldots, x^{(t)}_{m_t}$, and define 
\[ V_{T^*} = \{ h \in \HH \, : \, h(x^{(t)}_{i}) =  h^*(x^{(t)}_{i}) \text{ for } t=1,\ldots, T, i = 1, \ldots, m_t \}. \]  
Clearly, $V_{T^*} \subset V_T$. By Lemma~\ref{lem: Bayesian posterior lower bound}, we have with probability $1-\delta$,
\[ \pi(V_T) \ \geq \ \pi(V_{T^*}) \ \geq \ \frac{1}{\lambda} \nu(V_{T^*}) \ \geq \ \frac{1}{\lambda} \cdot \frac{\delta}{T(T+1)} \left( \frac{d_G}{em^{(T)}(|\Y|+1)} \right)^{d_G} \]
for all rounds $T \geq 1$, where $m^{(T)} = \sum_{t=1}^T m_t$.

Plugging this in with the above, we have
\[ \avg(\pi_T) \ \leq \ \frac{e^{-\rho T/2}}{\pi(V_T)^2} \ \leq \ \lambda^2 \exp\left( 2d_G \log \frac{em^{(T)}(|\Y|+1)}{d_G} + 2 \log \frac{T(T+1)}{\delta} - \frac{\rho T}{2} \right). \]
Suppose $m_t = \frac{1}{\tau} \log \frac{t(t+1)}{\delta}$. Then we can upper bound $m^{(T)}$ as
\[ m^{(T)}
\ = \ \sum_{t=1}^T m_t 
\ \leq \ \frac{T}{\tau} \log \frac{T(T+1)}{\delta} .
\]
Putting everything together, we have 
\[
\frac{\epsilon}{2\lambda^2} \ \leq \ \avg(\pi_T)
 \ \leq \ \lambda^2 \exp\left(2 \log \frac{T(T+1)}{\delta} + 2d_G \log\left( \frac{e(|\Y|+1)}{d_G} \cdot \frac{T}{\tau}\log \frac{T(T+1)}{\delta}  \right)  - \frac{\rho T}{2}  \right). \]
Letting $C = 2d_G \log \frac{e(|\Y|+1)}{d_G \tau}$ and $b = \frac{1}{\delta}$, the right-hand side is less than $\epsilon/(2\lambda^2)$, whenever
\[ T \ \geq \ \frac{2}{\rho} \max \left\{ C + \log \frac{2\lambda^4}{\epsilon} + 6(d_G+1) \log T, C + \log \frac{2\lambda^4}{\epsilon} + \log b + 2d_G \log\left(3b \log(b) \right) \right\} .\]
Additionally, note that $T \geq \frac{2}{\rho}\left( C + \log \frac{1}{\epsilon} + 6(d_G+1) \log T \right)$, whenever
\[ T \ \geq \  \frac{4}{\rho} \max \left\{ C + \log \frac{2\lambda^4}{\epsilon},  24(d_G+1) \log^2\left( \frac{96(d_G + 1)}{\rho} \right)  \right\} . \]
The value of $T$ provided in the theorem statement, satisfies all of these inequalities. Thus, with probability $1-4\delta$, we must have encountered a round in which $\avg(\pi_t) < \epsilon/(2\lambda^2)$ and terminated.
\end{proof}

\subsection{Proof of Theorem~\ref{thm: average splitting lower bound}}

The following result is analogous to Theorem~2 of~\citet{D05}.
\begin{thm}
\label{thm: splitting lower bound}
Fix $\G$ and $\D$. Suppose that $\G$ does not have splitting index $(\rho, \epsilon, \tau)$ for some $\rho,\epsilon \in (0,1)$ and $\tau \in (0,1/2)$. Then any interactive learning strategy which with probability $> 3/4$ over the random sampling from $\D$ finds a structure $g \in \G$ within distance $\epsilon/2$ of any target in $\G$ must draw at least $1/\tau$ atoms from $\D$ or must make at least $1/\rho$ queries.
\end{thm}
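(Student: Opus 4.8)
The plan is to adapt Dasgupta's active-learning lower bound (Theorem~2 of~\citet{D05}) to the present distance-based setting. Suppose, for contradiction, that some interactive strategy draws fewer than $1/\tau$ atoms from $\D$, makes fewer than $1/\rho$ queries, and nonetheless for every target $g^*\in\G$ returns, with probability greater than $3/4$, a structure within distance $\epsilon/2$ of $g^*$; by fixing its internal coins it suffices to refute deterministic strategies against the hard target distribution constructed below. Since $\G$ fails to have splitting index $(\rho,\epsilon,\tau)$, fix a witnessing edge sequence $E=((g_1,g_1'),\ldots,(g_n,g_n'))$ with $d(g_i,g_i')>\epsilon$ for every $i$ and $\pr_{a\sim\D}(a\ \rho\text{-splits } E)<\tau$. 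The hard instance has its target supported on the endpoints appearing in $E$, and an adversary answers queries so as to leave some edge of $E$ unresolved.

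I would proceed in three steps. \emph{(1) Few sampled atoms are useful.} As the strategy draws $m<1/\tau$ atoms i.i.d.\ from $\D$, the expected number that $\rho$-split $E$ is below $m\tau<1$, so by Markov's inequality the event $B$ that \emph{no} sampled atom $\rho$-splits $E$ has probability at least $1-m\tau>0$. \emph{(2) The adversary preserves an edge.} Condition on $B$; then every atom the strategy can query (being one of the drawn atoms) fails to $\rho$-split $E$, so for each such atom $a$ there is a response $y$ with $|E_a^y|>(1-\rho)|E|$. Let the adversary answer each of the $q<1/\rho$ queries by such a response; intersecting the surviving sets over the $q$ queries leaves more than $(1-q\rho)|E|>0$ edges of $E$ with \emph{both} endpoints consistent with every answer returned, so some edge $(g_i,g_i')$ is entirely consistent with the transcript. \emph{(3) Indistinguishable endpoints force error.} Since the strategy is deterministic, its output depends only on the transcript and so is identical whether the true target is $g_i$ or $g_i'$; as $d(g_i,g_i')>\epsilon$, that output cannot lie within distance $\epsilon/2$ of both. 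Drawing the target uniformly from $\{g_i,g_i'\}$, the strategy errs with probability at least $1/2$ conditioned on $B$, and combining with the lower bound on $\pr(B)$ yields overall error probability exceeding $1/4$ (after tuning the thresholds), contradicting the success assumption. Theorem~\ref{thm: average splitting lower bound} then follows by contraposition from Theorem~\ref{thm: splitting implies average splitting}.

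The main obstacle is Step~3 together with the constant bookkeeping. In Dasgupta's setting the fact that no hypothesis can be $(\epsilon/2)$-close to two $\epsilon$-separated hypotheses is immediate from the triangle inequality for the disagreement metric; here $d$ is not assumed to satisfy it, so one must instead argue directly that the two endpoints of the surviving edge are genuinely indistinguishable targets whose $(\epsilon/2)$-neighborhoods the learner cannot simultaneously hit, or else extract from $E$ a sub-collection of edges with the required separation property. Making this airtight while keeping the failure probability above $1/4$ with the stated thresholds $1/\tau$ and $1/\rho$ (rather than, e.g., $1/(2\tau)$ and $1/(2\rho)$), and lifting the conclusion from deterministic to randomized strategies, is the delicate part of the argument.
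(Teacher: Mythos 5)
Your overall route is the same as the paper's: both adapt Dasgupta's Theorem~2, fixing a witness edge sequence $E$ with $d(g,g')>\epsilon$ on every edge and $\pr_{a \sim \D}(a \; \rho\text{-splits } E) < \tau$, arguing that with few sampled atoms none of them $\rho$-splits $E$, and then letting an adversary answer each query with a response that eliminates fewer than $\rho|E|$ edges, so that after $q<1/\rho$ queries more than $(1-q\rho)|E|>0$ edges survive and some edge's endpoints are indistinguishable from the transcript. The genuine gap is in your probability accounting, and it is one you partly concede. In Step~1 you bound $\pr(B)\geq 1-m\tau$ by Markov/union bound; since $m$ can be as large as $\lceil 1/\tau\rceil-1$, this only yields $\pr(B)>0$ (for instance $\tau=0.49$, $m=2$ gives $\pr(B)\geq 0.02$), which cannot contradict a success probability exceeding $3/4$. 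The paper instead exploits the independence of the i.i.d.\ draws from $\D$: each of the $m<1/\tau$ atoms fails to $\rho$-split $E$ with probability more than $1-\tau$, so $\pr(B)\geq (1-\tau)^{m}\geq (1-\tau)^{1/\tau}\geq 1/4$ for $\tau\in(0,1/2)$, which is exactly the constant the theorem needs. Moreover, even after that fix, your Step~3 pays an additional factor of $1/2$ by drawing the target uniformly from the surviving pair, leaving your failure probability at roughly $1/8$ with the stated thresholds; the paper's accounting instead regards the adversarially chosen target as failing outright on the event $B$ (with fewer than $1/\rho$ queries the learner cannot distinguish some $g^*$ from the remaining vertices of $E$), so no further halving is incurred. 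You acknowledge needing to ``tune the thresholds,'' but the theorem fixes them at $1/\tau$ and $1/\rho$, so as written your contradiction does not close.

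On the triangle-inequality worry: your observation that $d$ is not assumed to satisfy it (so a single output could a priori be within $\epsilon/2$ of both endpoints of an $\epsilon$-separated edge) is legitimate, but the paper does no more than you here -- it simply asserts that Dasgupta's argument carries over whenever $d$ is symmetric, treating success as requiring the learner to distinguish the surviving endpoints. So this is not where you diverge from the paper; the part you must repair is the quantitative step above, namely replacing the union bound with the independence bound $(1-\tau)^{1/\tau}\geq 1/4$ and avoiding the extra factor of $1/2$ in the final comparison against the $>3/4$ success guarantee.
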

From the proof of Theorem~2 of~\citet{D05}, it is easy to see that so long as $d(\cdot,\cdot)$ is symmetric, the same arguments imply Theorem~\ref{thm: splitting lower bound}. For completeness, we include its proof here.
\begin{proof}
Since $\G$ does not have splitting index $(\rho, \epsilon, \tau)$, there is some set of edges $E \subset {\G \choose 2}$ such that $d(g, g') > \epsilon$ for all $(g,g') \in E$ and 
\[ \pr_{a \sim \D}(a \; \rho \text{-splits } E) < \tau. \]
Let $V$ denote the vertices of $E$. Then distinguishing between structures in $V$ requires at least $1/\rho$ queries or at least $1/\tau$ atoms.

To see this, suppose we draw less than $1/\tau$ atoms. Then with probability at least $(1-\tau)^{1/\tau} \geq 1/4$ none of these atoms $\rho$-splits $E$, i.e. for each of these atoms there is some response $y \in \Y$ such that less than $\rho |E|$ edges are eliminated. Thus, there is some $g^* \in V$ such that requires us to query at least $1/\rho$ atoms to distinguish it from the rest of the structures in $V$.
\end{proof}

Combining the above with Theorem~\ref{thm: splitting implies average splitting}, we have the following corollary.

\AverageSplittingLowerBound*

\section{Proofs from Section~\ref{section: examples}}
\subsection{Proof of Theorem~\ref{thm: ranking index}}

We will utilize the following result from~\citet{D05}.

\begin{lemma}[Lemma~11~\citep{D05}]
\label{lem: subspace lemma}
For any $d \geq 2$, let $x, y$ be vectors in $\R^d$ separated by an angle of $\theta \in [0, \pi]$. Let $\tilde{x}, \tilde{y}$ be their
projections into a randomly chosen two-dimensional subspace. There is an absolute constant $c_o >0$ (which
does not depend on $d$) such that with probability at least 3/4 over the choice of subspace, the angle between
$\tilde{x}$ and $\tilde{y}$ is at least $c_o \theta$.
\end{lemma}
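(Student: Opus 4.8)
The plan is to factor the statement into a deterministic geometric fact and a probabilistic one. Since angles are invariant under scaling and rotation, I would assume $\|x\|=\|y\|=1$, and (the cases $\theta\in\{0,\pi\}$ being immediate, as the projections stay parallel or antiparallel and we will end up with $c_o\le 1$) that $x,y$ span a genuine plane $W$; by rotational invariance of Haar measure I may instead fix $W$ and take the $2$-dimensional subspace $S$ to be Haar-random. Writing $T:=P_S|_W$, a linear map $W\cong\R^2\to\R^d$ with $\tilde x=Tx$ and $\tilde y=Ty$, the plan is: (i) show that any linear map distorts angles by at most its condition number, and (ii) show that $T$ is well-conditioned with probability at least $3/4$.

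For (i): if $T$ has singular values $s_1\ge s_2>0$, then $\angle(Tu,Tv)\ge (s_2/s_1)\,\angle(u,v)$ for all nonzero $u,v\in W$. Using the SVD $T=P\Sigma Q^\top$, where $P$ is an isometry onto its image and $Q$ is orthogonal — both preserving angles — this reduces to the diagonal map $\Sigma=\mathrm{diag}(s_1,s_2)$ on $\R^2$. In angular coordinates $\Sigma$ sends a point at angle $\psi$ to one at angle $\psi'$ with $\tan\psi'=(s_2/s_1)\tan\psi$, so $\tfrac{d\psi'}{d\psi}=\tfrac{s_1 s_2}{s_1^2\cos^2\psi+s_2^2\sin^2\psi}\ge s_2/s_1$; integrating over an interval of length $\theta\le\pi$ (over which $\psi'$ increases by at most $\pi$, hence still measures the true angle) gives the bound.

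The substantive step is (ii): with probability at least $3/4$ over Haar-random $S$, the condition number $\kappa(T)=s_1/s_2$ is at most an absolute constant $K$ (in particular $s_2>0$). In an orthonormal basis of $W$ the Gram matrix $T^\top T$ equals the top-left $2\times 2$ block $B$ of the projection matrix $P_S$, so $s_1^2,s_2^2$ are the eigenvalues of $B$. Writing $P_S=UU^\top$ with $U$ Haar on the Stiefel manifold, and generating $U=G(G^\top G)^{-1/2}$ from a $d\times 2$ standard Gaussian $G$, one gets $B=MM^\top$ with $M:=G_{[2]}(G^\top G)^{-1/2}$ ($G_{[2]}$ the top $2\times 2$ block of $G$), so $\kappa(T)=\kappa(M)\le \kappa(G_{[2]})\,\kappa(G)$. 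Now $\kappa(G_{[2]})$ is the condition number of a fixed $2\times 2$ standard Gaussian matrix, which is at most an absolute constant with probability $\ge 7/8$; and $\kappa(G)$, for a $d\times 2$ standard Gaussian, is at most an absolute constant with probability $\ge 7/8$ uniformly in $d\ge 2$, by standard concentration for the extreme singular values of Gaussian matrices (and for $d=2$ one sees directly that $B=I_2$, so $\kappa(T)=1$). A union bound gives $\kappa(T)\le K$ with probability $\ge 3/4$, and on that event $\angle(\tilde x,\tilde y)=\angle(Tx,Ty)\ge \theta/\kappa(T)\ge \theta/K$, so the lemma holds with $c_o=1/K$.

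The main obstacle is making (ii) uniform in the ambient dimension $d$. The point that makes this work is that the relevant $2\times 2$ block becomes \emph{better} conditioned as $d$ grows — it converges to a rescaled $2\times 2$ Gaussian, whose condition number is $O(1)$ with constant probability — so the worst case sits at small $d$, where it is merely a fixed low-dimensional distribution with no mass at condition number $\infty$.
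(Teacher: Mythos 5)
Your argument is sound, but note that the paper itself contains no proof of this lemma to compare against: it is imported verbatim as Lemma~11 of Dasgupta (2005) and used as a black box in the proof of Theorem~\ref{thm: ranking index}. Judged on its own terms, your two-step decomposition works. Step (i) — that a full-rank linear map $T$ with singular values $s_1 \ge s_2 > 0$ contracts angles by at most $s_1/s_2$ — is correct, and you handle the one genuine subtlety (that the image angular difference stays in $[0,\pi]$, by antipodal equivariance of the monotone angle map, so it really is the angle between $Tx$ and $Ty$). Step (ii) is also correct: after fixing $W=\mathrm{span}(x,y)$ by rotational invariance, $T^\top T$ is the $2\times 2$ block $U_{[2]}U_{[2]}^\top$ of the Haar projector, the representation $U=G(G^\top G)^{-1/2}$ with $G$ a $d\times 2$ Gaussian is valid (a.s.\ full column rank), and $\kappa(T)\le \kappa(G_{[2]})\,\kappa(G)$ by submultiplicativity, with the union bound not requiring independence of the two events. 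The only place demanding care is the claimed uniformity in $d$ of $\Pr[\kappa(G)\le K]\ge 7/8$: the standard Davidson--Szarek/Vershynin-type bounds on $\sigma_{\min}$ of a $d\times 2$ Gaussian are vacuous for small $d$, so you do need the separate observation (which you make) that for the finitely many small $d$ the condition number is an a.s.-finite random variable with a fixed distribution, plus the trivial $d=2$ case where $T$ is the identity; with that spelled out the union bound gives probability $\ge 3/4$ and $c_o = 1/K$. Since the lemma only asserts existence of an absolute constant $c_o$, the non-explicit constant is not a defect, and your route has the virtue of being self-contained, whereas the paper (and its use in Theorem~\ref{thm: ranking index}) simply defers to Dasgupta's original derivation.
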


Given the above, we prove Theorem~\ref{thm: ranking index}.

\RankingIndex*

The proof of Theorem~\ref{thm: ranking index} closely mirrors that of Theorem~10~\citep{D05}. For completeness, we produce its proof here.

\begin{proof}
We make two key observations here.
\begin{itemize}
	\item A weight vector $w \in \G$ ranks $x$ over $y$ if and only if $\langle w, x-y \rangle > 0$.  
	\item If $x,y$ are drawn from a spherically symmetric distribution, then $z = x-y$ also follows a spherically symmetric distribution.
\end{itemize}
From these two observations, we know that if $w,w' \in \G$, then $d(w,w') = \theta/\pi$ where $\theta$ is the angle lying between $w$ and $w'$. 

Suppose $w_1,w'_1, \ldots, w_n, w'_n$ are a sequence of edges such that $d(w_i,w'_i) \geq \epsilon$, which implies their corresponding angles satisfy $\theta_i \geq \epsilon \pi$. Suppose we project the pairs onto a randomly drawn 2-d subspace, to get $\tilde{w}_1,\tilde{w}'_1, \ldots, \tilde{w}_n, \tilde{w}'_n$. Let $c_o$ be the absolute constant from Lemma~\ref{lem: subspace lemma}. Call an edge $\tilde{w}_i,\tilde{w}'_i$ good if the resulting angle satisfies $\tilde{\theta}_i \geq c_o \epsilon \pi$. 

By Lemma~\ref{lem: subspace lemma}, the expected number of good edges for a randomly chosen 2-d subspace is $n/2$. By Markov's inequality, with probability $1/2$, at least $n/2$ edges are good.

Let us suppose that we have chosen a 2-d subspace/plane that results in at least $n/2$ good edges. Call these projected edges $\tilde{w}_1,\tilde{w}'_1, \ldots, \tilde{w}_m, \tilde{w}'_m$. Without loss of generality, assume that the clockwise angle $\tilde{\theta}_i$ from $\tilde{w}_i$ to $\tilde{w}'_i$ satisfies $c_o \epsilon \pi \geq \tilde{\theta}_i \leq \pi$. Notice that if $z_o$ is in our plane and satisfies $\langle \tilde{w}_i, z_o \rangle \geq 0$ for at least $n/2$ edges and $\langle \tilde{w}'_i, z_o \rangle \leq 0$ for at least $n/2$ edges, then querying any points $x_o, y_o$ such that $x_o - y_o = z_o$ will eliminate at least half of the $\tilde{w}_i$. Moreover, it is enough to query any pair $x,y$ such that $x-y = z$ satisfies that $x$'s counterclockwise angle is in the range $[0, c_o \epsilon \pi]$ or $[\pi, \pi + c_o \epsilon \pi]$, since such a pair will eliminate either $\tilde{w}_i$ or $\tilde{w}'_i$. Thus, querying such an $x,y$ pair will result in eliminating at least $1/2$ of the good edges, which is at least $1/4$ of all the edges.
 
Since $z = x-y$ follows a spherically symmetric distribution, the probability of drawing such a pair is at least $c_o \epsilon \pi/2.$ Thus, the splitting index here is $(1/4, \epsilon, c_o \epsilon \pi/2)$, and Theorem~\ref{thm: ranking index} follows by applying Theorem~\ref{thm: splitting implies average splitting}.
\end{proof}

\subsection{Proof of Lemma~\ref{lemma: cluster identification index}}
\ClusterIdentificationIndex*
\begin{proof}
We will first bound the splitting index and then invoke Theorem~\ref{thm: splitting implies average splitting}. Suppose that $g_1, g'_1, \ldots, g_n, g'_n \in \G_{k, \alpha}$ are a sequence of edges satisfying $d_\I(g_i, g'_i) \geq \epsilon$ for all $i=1,\ldots, n$. Note that for each $g_i, g'_i$ there are associated reals $\ell_i < u_i$ and $\ell'_i < u'_i$ such that \[ \ell_i, \ell'_i \ \leq \I \ \leq \ u_i, u'_i. \]
From the definition of $d_\I(g_i, g'_i)$, we have
\[ \epsilon \ \leq \ d_\I(g_i, g'_i) \ = \ \mu(\ell_i, \ell'_i ) + \mu(u_i, u'_i )    \]
where $\mu(a,b)$ is the probability mass of the interval bounded by $a$ and $b$. Call an edge \emph{left-leaning} if $\mu(\ell_i, \ell'_i ) \geq \epsilon/2$ and \emph{right-leaning} if $\mu(u_i, u'_i ) \geq \epsilon/2$. 

Suppose without loss of generality that at least half of the edges are right-leaning (the case where half are left-leaning can be handled symmetrically), and order them as $g_1, g'_1, \ldots, g_m, g'_m$ such that $u_1 \leq u_2 \leq \cdots \leq u_m$. Moreover, let us also assume without loss of generality that $u_i < u'_i$. Let $r$ denote the point $u_i < r \leq u'_i$ such that $\mu(u_i, r) = \epsilon/2$. Suppose we query a pair $x,y$ where $x \in \I$ and $y \in (u_{m/2}, r)$, notice that such a pair satisfies.
\[ x < u_1 \leq \cdots \leq u_{m/2} < y < u'_{m/2} \leq \cdots \leq u'_m. \]
If we query this pair and the result is that they should belong to the same cluster, then we may eliminate at least one endpoint of edges $g_1, g'_1, \ldots, g_{m/2}, g'_{m/2}$. On the other hand, if the result is that they should belong to different clusters, then we may eliminate at least one endpoint of edges $g_{m/2}, g'_{m/2}, \ldots, g_{m}, g'_{m}$. In either case, we eliminate at least half of these $m$ edges. Since this is only the right-leaning edges, at least one quarter of the original edges are eliminated. Finally, the probability of drawing such a pair $x,y$ is $\alpha \cdot \epsilon$. 

Thus, $\G_{k,\I}$ has splitting index $(1/4, \epsilon, \alpha \epsilon)$. Theorem~\ref{thm: splitting implies average splitting} finishes the proof.
\end{proof}

\subsection{Proof of Theorem~\ref{thm: clustering separation}}

\begin{figure}
	\begin{center}
		\includegraphics[width=.3\textwidth]{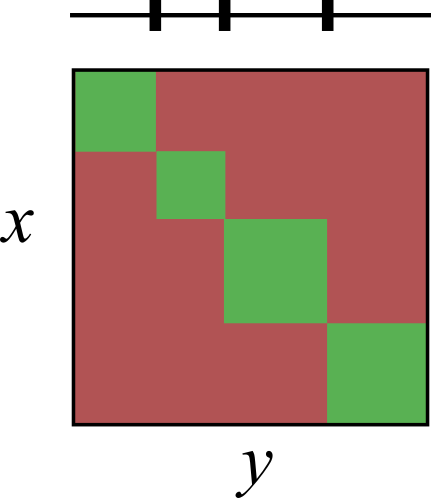}
	\end{center}
	\caption{Viewing an interval-based clustering as a classifier over $\R^2$. The green regions correspond to `must-link' constraints, and the red regions correspond to `cannot-link' constraints.}
	\label{fig: cluster unrolling}
\end{figure}

We will make use of the following result from~\citet{D05}.

\begin{lemma}[Corollary~3~\citep{D05}]
\label{lem: star-shaped bound}
Suppose there are structures $g_o, g_1, \ldots, g_N \in \G$ such that
\begin{enumerate}
	\item $d(g_o, g_i) > \epsilon$ for all $i=1, \ldots, N$ and
	\item the sets $\{a \, : \, g_o(a) \neq g_i(a) \}$ are disjoint for all $i=1, \ldots, N$.
\end{enumerate}
Then for any $\tau > 0$ and any $\rho > 1/N$, $\G$ is not $(\rho, \epsilon, \tau)$-splittable. Thus, any active learning scheme that finds $g \in \G$ satisfying $d(g,g^*) < \epsilon/2$ for any $g^* \in \G$ must use at least $N$ labels in the worst case.
\end{lemma}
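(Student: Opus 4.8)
The plan is to apply the definition of the splitting index directly to a single, carefully chosen edge sequence: the \emph{star} $E = ((g_o,g_1),(g_o,g_2),\dots,(g_o,g_N))$. Hypothesis~(1) guarantees $d(g_o,g_i) > \epsilon$ for every $i$, so $E$ is an admissible witness for the $(\rho,\epsilon,\tau)$ splitting-index test, and it suffices to show that $\pr_{a\sim\D}(a\ \rho\text{-splits }E)$ is not at least $\tau$ — in fact I will show it is exactly $0$ whenever $\rho > 1/N$.

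The heart is a short counting argument. Write $D_i = \{a : g_o(a)\neq g_i(a)\}$; hypothesis~(2) says the $D_i$ are pairwise disjoint, so every atom $a$ lies in at most one $D_i$. Fix an atom $a$ and consider the response $y = g_o(a)$. By the definition of $E_a^y$, the edge $(g_o,g_i)$ lies in $E_a^{y}$ precisely when $g_o(a)=g_i(a)=y$, i.e. precisely when $a\notin D_i$. Hence at most one edge fails to survive the response $y=g_o(a)$ — the edge $(g_o,g_{i_0})$ if $a\in D_{i_0}$, and no edge otherwise — so $|E_a^{g_o(a)}| \ge N-1$. Consequently $\max_{y}|E_a^{y}| \ge N-1 > (1-\rho)N$ as soon as $\rho > 1/N$, which by definition means $a$ does \emph{not} $\rho$-split $E$. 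Since $a$ was arbitrary, $\pr_{a\sim\D}(a\ \rho\text{-splits }E)=0<\tau$ for every $\tau>0$, so $\G$ fails to be $(\rho,\epsilon,\tau)$-splittable for all such $\rho,\tau$.

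For the label-complexity consequence I would run the adversary argument underlying Theorem~\ref{thm: splitting lower bound} (Dasgupta's Theorem~2), which in this special configuration yields the clean bound $N$ rather than the $\approx 1/\rho$ one would get by a black-box invocation: the adversary answers every queried atom $a$ with $g_o(a)$. Each query lies in at most one $D_i$ (disjointness again), so it removes at most one of $g_1,\dots,g_N$ from the candidate set $\{g_o,g_1,\dots,g_N\}$ and never removes $g_o$. Thus after fewer than $N$ queries there is a surviving $g_j$ consistent with every answer given so far (no queried atom lies in $D_j$), and the learner's output $g$ — a fixed function of answers that are identical whether the target is $g_o$ or $g_j$ — would have to satisfy $d(g,g_o)<\epsilon/2$ and $d(g,g_j)<\epsilon/2$ simultaneously to honor its guarantee against both targets. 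Since $d(g_o,g_j)>\epsilon$, this is impossible, so the learner needs at least $N$ queries (and, in the randomized-$3/4$-success formulation, fails with probability at least $1/2$ if it stops earlier).

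The only step that requires care beyond bookkeeping is this last one: ruling out a common $\epsilon/2$-approximation of $g_o$ and $g_j$ uses the triangle inequality, which the paper does not assume in general but which holds automatically for the disagreement-probability distances ($d_c$, $d_\I$, and the Kendall-type $d_r$) in the settings where this lemma is invoked, matching Dasgupta's original pseudometric setup. The remaining ideas — choosing the star as the witness and observing that disjointness makes each query hit at most one disagreement set — are simple, but they are exactly where hypotheses~(1) and~(2) get used, so I would make sure both are stated cleanly before the counting.
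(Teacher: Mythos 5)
Your proof is correct. The paper itself gives no proof of this statement---it is imported verbatim as Corollary~3 of \citet{D05}---and your argument reconstructs exactly the standard one from that source: the star edge sequence $(g_o,g_1),\dots,(g_o,g_N)$ as the splitting witness, the disjointness counting showing $|E_a^{g_o(a)}|\geq N-1 > (1-\rho)N$ for $\rho>1/N$, and the adversary who always answers with $g_o(a)$ to force $N$ queries. Your caveat about the triangle inequality is also well placed: the paper's generic structure distance does not assume it, but the lemma is only invoked for the disagreement-probability distance $d_c$, where it holds, matching Dasgupta's pseudometric setting.
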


Given this, we have the following lemma lower bounding the query complexity of a particular subset of $\G_{k,\I}$.

\begin{restatable}{lemma}{ClusteringLowerBounds}
\label{lem: clustering lower bounds}
Say $\mu(\I) \leq 1/2$. There is a subset $\G_o \subset \G_{k+2,\I}$ of $N = \min \{ k, \frac{1}{\sqrt{8\epsilon}} \} + 1$ clusterings such that learning $\G_o$ under distance $d_c(\cdot,\cdot)$ requires at least $N-1$ queries, no matter how many unlabeled data points are drawn.
\end{restatable}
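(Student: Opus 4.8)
The plan is to apply Lemma~\ref{lem: star-shaped bound}, the ``star-shaped'' lower bound. It suffices to exhibit a center clustering $g_o \in \G_{k+2,\I}$ together with $N-1$ spoke clusterings $g_1,\dots,g_{N-1} \in \G_{k+2,\I}$ for which (i)~$d_c(g_o,g_i) > \epsilon$ for every $i$, and (ii)~the disagreement sets $\{(x,y) : g_o(x,y) \neq g_i(x,y)\}$ are pairwise disjoint. Lemma~\ref{lem: star-shaped bound}, applied with $\G_o = \{g_o, g_1, \dots, g_{N-1}\}$, then shows that pinning down a clustering within $d_c$-distance $\epsilon/2$ of an unknown target in $\G_o$ forces at least $N-1$ queries, regardless of how many points are drawn from $\mu$.

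\emph{Construction.} Since $\mu(\I) \le 1/2$ and $N-1 \le 1/\sqrt{8\epsilon}$, we have $\mu(\R \setminus \I) \ge 1/2 \ge (N-1)\sqrt{2\epsilon}$, so I can place $N-1$ pairwise disjoint intervals $J_1 < J_2 < \cdots < J_{N-1}$ inside $\R \setminus \I$, each of $\mu$-mass $\sqrt{2\epsilon}$ (spreading them across the two rays of $\R \setminus \I$ if one ray carries too little mass), arranged so that $\I$ lies strictly inside one of the gaps of the configuration. Let $g_o$ be the clustering whose cut points are the right endpoints $r_1 < \cdots < r_{N-1}$ of the $J_i$; its cells are $C_0 = (-\infty, r_1]$, $C_j = (r_j, r_{j+1}]$ for $1 \le j \le N-2$, and $C_{N-1} = (r_{N-1}, \infty)$, with $J_i \subseteq C_{i-1}$ and $\I$ inside a single cell. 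Let $g_i$ be $g_o$ with one extra cut at the $\mu$-median $m_i$ of $J_i$: it subdivides $C_{i-1}$ into $(r_{i-1}, m_i]$ and $(m_i, r_i]$ and agrees with $g_o$ on every other cell, and $\I$ still lies within one cell of $g_i$ because $m_i \in J_i$ and $\I \cap J_i = \emptyset$. As $N-1 \le k$, the clustering $g_o$ uses at most $k$ cuts and $g_i$ at most $k+1$, so both lie in $\G_{k+2,\I}$.

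\emph{Verification.} The clusterings $g_o$ and $g_i$ disagree on a pair $(x,y)$ exactly when $x$ and $y$ both fall in $C_{i-1}$ but on opposite sides of $m_i$, so the disagreement set of $(g_o,g_i)$ sits inside $C_{i-1} \times C_{i-1}$; as $C_0, \dots, C_{N-1}$ partition $\R$ these sets are pairwise disjoint, giving~(ii). For~(i),
\[ d_c(g_o, g_i) \ = \ 2\,\mu\bigl((r_{i-1}, m_i]\bigr)\, \mu\bigl((m_i, r_i]\bigr) \ \ge \ 2 \cdot \tfrac{1}{2}\mu(J_i) \cdot \tfrac{1}{2}\mu(J_i) \ = \ \epsilon, \]
using $\mu((m_i, r_i]) = \tfrac{1}{2}\mu(J_i)$ and $\mu((r_{i-1}, m_i]) \ge \mu((\ell_i, m_i]) = \tfrac{1}{2}\mu(J_i)$, where $\ell_i$ denotes the left endpoint of $J_i$; this inequality is made strict by inflating each $\mu(J_i)$ by an arbitrarily small amount (there is slack whenever $(N-1)\sqrt{2\epsilon} < \mu(\R\setminus\I)$) or by leaving a sliver of $\mu$-mass in each gap $(r_{i-1}, \ell_i]$. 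With~(i) and~(ii) established, Lemma~\ref{lem: star-shaped bound} yields the desired $N-1$ query lower bound, and $|\G_o| = N$ because $g_o$ and the $g_i$ are distinct.

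The step I expect to be the crux is balancing the two budgets at play: each $g_i$ may use only $k+2$ cells, whereas $\I$ together with the $N-1$ well-separated intervals $J_i$ must fit within total $\mu$-mass $1$. The observation that makes the constant $1/\sqrt{8\epsilon}$ work is to refrain from spending a cut to isolate $\I$ --- we simply let $\I$ ride inside whichever cell of $g_o$ contains it, which is legitimate since the lone extra cut of $g_i$ lies inside $J_i$ and thus never separates $\I$ --- and to distribute the $J_i$ over both components of $\R \setminus \I$ when one component is too light. The remaining wrinkles (sharpening $d_c(g_o,g_i) \ge \epsilon$ to a strict inequality, and, when $\mu$ has atoms, realizing intervals of the prescribed masses and medians) are minor and absorbed by the slack in the packing.
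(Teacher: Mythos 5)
Your proof is correct and follows essentially the same route as the paper: a star-shaped family in which each spoke $g_i$ adds a single mid-cell cut to a common center $g_o$, giving disjoint disagreement regions and $d_c(g_o,g_i)=2(\tfrac12\mu(J_i))^2$-type mass $\geq \epsilon$, then invoking Lemma~\ref{lem: star-shaped bound}. The only difference is presentational — the paper fixes $\mu$ uniform on $[0,1]$ with equally spaced cuts "for ease of exposition," while you pack intervals of mass $\sqrt{2\epsilon}$ under a general $\mu$ and note the same minor strictness/atom caveats the paper also glosses over.
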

\begin{proof}
For ease of exposition, say that $\mu$ is uniform over the interval $[0,1]$ and that $\I = [0, \alpha]$ for some $\alpha \leq 1/2$. We will consider the case where $k \leq \frac{1}{\sqrt{8\epsilon}}$, the other case can be proven symmetrically.

Define $g_o$ as the clustering with dividing points \[a_1 = \alpha, \, a_2 = \alpha + \frac{1-\alpha}{k}, \, a_3 = \alpha + \frac{2(1-\alpha)}{k}, \, \ldots, \, a_{k} = \alpha + \frac{(k-1)(1-\alpha)}{k} . \]
We also define $g_i$ as the clustering with the same dividing points except it has an additional dividing point at $b_i = \frac{a_i + a_{i+1}}{2} = \alpha + \frac{(2i - 1)(1-\alpha)}{2k}$ for $i = 1, \ldots k$, where we take $a_{k+1} = 1$. Then it can be seen that
\[ d(g_o, g_i) \ = \ 2 \cdot \pr_{x \sim \mu}( x \in (a_i, b_i)) \cdot \pr_{y \sim \mu}( y \in (b_i, a_{i+1})) \ = \ \frac{1}{2}\left( \frac{1- \alpha}{k} \right)^2 \ \geq \ \epsilon .  \]
Moreover, we also have that the sets $\{(x,y) \, : \, g_o(x,y) \neq g_i(x,y) \}$ are disjoint for all $i=1, \ldots, N$. This is readily observed after making the transformation from an interval-based clustering to binary classifier over $[0,1]^2$. Applying Lemma~\ref{lem: star-shaped bound} finishes the proof.
\end{proof}

Given Lemmas~\ref{lemma: cluster identification index} and~\ref{lem: clustering lower bounds}, we can now prove Theorem~\ref{thm: clustering separation}.

\begin{mythm}{\ref{thm: clustering separation}}{ \textit{(Formal statement) }}
\textit{Let $\epsilon >0$. There is a setting of $k = \Theta(1/\sqrt{\epsilon})$ and a subset $\G \subseteq \G_{k+2,\I}$ that is polynomially-sized in $k$ such that any active learning algorithm that is guaranteed to find any target in $\G$ up to distance $\epsilon$ in distance $d_c(\cdot, \cdot)$ must make at least $\Omega(k)$ queries, but \textsc{ndbal} with distance $d_\I(\cdot, \cdot)$ and prior $\pi$ uniform over $\G$ requires $O(\log^2 (k/\epsilon \delta))$ queries.}
\end{mythm}
\begin{proof}
Take $k = \Theta(1/\sqrt{\epsilon})$ and let $\G_o \subset \G_{k+2,\I}$ be the subset from Lemma~\ref{lem: clustering lower bounds}. Take $\G$ to be any subset of $\G_{k+2, \I}$ such that (a) $\G$ has size polynomial in $k$ and (b) $\G_o \subseteq \G$. By Lemma~\ref{lem: clustering lower bounds}, we know that learning under distance $d_c(\cdot, \cdot)$ requires at least $|\G_o| = \Theta(k)$ queries. 

On the other hand, consider running \textsc{ndbal} with distance $d_\I(\cdot, \cdot)$ and prior $\pi$ uniform over $\G$. The results in Theorem~\ref{thm: 0-1 loss DBAL guarantees} and Lemma~\ref{lemma: cluster identification index} tell us that \textsc{ndbal} requires $O(\log^2	(k/\epsilon))$ queries to find a posterior $\pi_t$ over $\G$ such that $\E_{g \sim \pi_t}[d_\I(g,g^*)] \leq \epsilon$. To turn this into a high probability result, simply apply Markov's inequality to get that \textsc{ndbal} requires $O(\log^2 (k/\epsilon \delta))$ queries in order to find a posterior $\pi_t$ such that with probability $1-\delta$ if $g \sim \pi_t$ then $d_\I(g,g^*) \leq \epsilon$.
\end{proof}

\section{Noisy fast convergence}

In this section, we give rates of convergence in the Bayesian setting under noise. We start by defining the quantity
\[ Z_t = \sum_{g \in \G} \pi(g) \exp\left(-\beta	 \sum_{i=1}^t \ind[g(x_i) \neq y_i] \right) .\]
The following lemma is analogous to Lemma~\ref{lem: general k 0-1 loss decrease}.

\begin{lemma}
\label{lem: Bayesian potential decrease}
Pick $\beta, \rho > 0$. If at step $t$, our query $\rho$-average splits $\pi_{t-1}$, then
\[ Z_t^2 \Phi(\pi_t) \ \leq \ \left[1- \rho(1 - e^{-\beta})  \right] Z_{t-1}^2 \Phi(\pi_{t-1}).    \]
\end{lemma}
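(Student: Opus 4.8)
The plan is to mirror the computation in the proof of Lemma~\ref{lem: general k 0-1 loss decrease}, except that here there is no averaging over the response: the claimed bound will hold for \emph{every} realization of $y_t$, the point being that $\rho$-average splitting controls the relevant diagonal term uniformly over $\Y$. Write $\pi = \pi_{t-1}$ and $\Phi(\pi) = \avg(\pi)$, let $a = x_t$ be the queried atom, and suppose the response received is $y_t = y_\ell$ for some enumeration $\Y = \{y_1,\dots,y_m\}$. Since $\pi_t(g) \propto \pi(g)\exp(-\beta\ind[g(a)\neq y_\ell])$, the normalizing constant is exactly $W := \pi(\G_a^{y_\ell}) + e^{-\beta}\bigl(1 - \pi(\G_a^{y_\ell})\bigr)$, and directly from the definition $Z_t = Z_{t-1} W$. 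This identity is what will make the $W$ factors cancel at the end. (If $\avg(\pi) = 0$ both sides are $0$ and there is nothing to prove, so assume $\avg(\pi) > 0$.)

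Next I would reintroduce the symmetric matrix $R \in [0,1]^{m\times m}$ from the proof of Lemma~\ref{lem: general k 0-1 loss decrease}, defined by $R_{ij}\,\avg(\pi) = \sum_{g\in\G_a^{y_i},\,g'\in\G_a^{y_j}} \pi(g)\pi(g') d(g,g')$, so that $\sum_{ij} R_{ij} = 1$, $R \ge 0$, and — crucially — $\rho$-average splitting gives $R_{\ell\ell} \le 1-\rho$, since $R_{\ell\ell}\avg(\pi) = \pi(\G_a^{y_\ell})^2\avg(\pi|_{\G_a^{y_\ell}}) \le \max_y \pi(\G_a^{y})^2\avg(\pi|_{\G_a^{y}}) \le (1-\rho)\avg(\pi)$. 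Grouping the double sum defining $\avg(\pi_t)$ by the cells $\G_a^{y_i}$ containing $g$ and $g'$, each pair $(i,j)$ picks up weight $e^{-\beta(\ind[i\neq\ell] + \ind[j\neq\ell])}$, and after using $\sum_{i\neq\ell,\,j\neq\ell} R_{ij} = 1 - R_{\ell\ell} - 2\sum_{j\neq\ell}R_{\ell j}$ one obtains
\[ W^2\,\avg(\pi_t) = \avg(\pi)\Bigl[ e^{-2\beta} + (1-e^{-2\beta}) R_{\ell\ell} + 2(e^{-\beta} - e^{-2\beta})\textstyle\sum_{j\neq\ell} R_{\ell j} \Bigr]. \]
Multiplying through by $Z_{t-1}^2$ and using $Z_t^2 = Z_{t-1}^2 W^2$ cancels $W^2$ and reduces the lemma to showing that the bracketed quantity is at most $1 - \rho(1-e^{-\beta})$.

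Finally I would optimize that bracket over its feasible region. Writing $s = \sum_{j\neq\ell} R_{\ell j}$, the bracket is affine and nondecreasing in both $s$ and $R_{\ell\ell}$ (both coefficients are nonnegative for $\beta>0$); the constraints $R\ge 0$ and $\sum_{ij}R_{ij}=1$ give $R_{\ell\ell} + 2s \le 1$, while $\rho$-average splitting gives $R_{\ell\ell} \le 1-\rho$. Substituting $s = (1-R_{\ell\ell})/2$ collapses the bracket to $e^{-\beta} + R_{\ell\ell}(1-e^{-\beta})$, and then $R_{\ell\ell} = 1-\rho$ gives exactly $1 - \rho(1-e^{-\beta})$, as desired. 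There is no real obstacle here; the only points requiring care are (i) invoking the diagonal bound $R_{\ell\ell} \le 1-\rho$ for the \emph{actual} received label $y_\ell$ — which is legitimate precisely because the average-splitting definition in~\eqref{eqn: average splitting definition} takes a maximum over all $y\in\Y$ — and (ii) keeping track of which $e^{-\beta}$ weights attach to which blocks of $R$ when expanding $\avg(\pi_t)$.
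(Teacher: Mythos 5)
Your proposal is correct and follows essentially the same route as the paper's proof: cancel the normalizer via $Z_t = Z_{t-1}Q_a^{\ell}$, expand $\avg(\pi_t)$ blockwise through the matrix $R$, and use $R_{\ell\ell}\le 1-\rho$ to bound the resulting expression by $1-\rho(1-e^{-\beta})$, deterministically for whichever response is received. The only cosmetic difference is that the paper collapses the bracket using $e^{-2\beta}\le e^{-\beta}$ directly, while you maximize the affine expression over the feasible polytope; both yield the same bound.
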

\begin{proof}
Suppose that we query atom $a_t$ and receive label $y_t$. Enumerate the potential responses as $\Y = \{ y_1, y_2, \ldots, y_m \}$. The definition of average splitting implies that there exists a symmetric matrix $R \in [0,1]^{m \times m}$ satisfying 
\begin{itemize}
	\item $R_{ii} \leq 1 - \rho$ for all $i$,
	\item $\sum_{i,j} R_{ij} = 1$, and
	\item $R_{ij} \, \avg(\pi) =  \sum_{g \in \G_a^{y_i}, g' \in \G_a^{y_j}} \pi(g) \pi(g') d(g,g')$.
\end{itemize}
Define the quantity
\[ Q_a^{i} \ := \ \pi(G_a^{y_i}) + e^{-\beta} \sum_{j \neq i} \pi(G_a^{y_j}) \ = \  \pi(G_a^{y_i}) + e^{-\beta} (1 - \pi(G_a^{y_i})) \ \leq \ 1. \]
Note that if $y_t = y_i$, we have
\begin{align*}
Q_a^i 
\ = \ \sum_{g} \pi_{t-1}(g) \exp\left(-\beta \ind[g(a_t) \neq y_t] \right) 
\ = \  \sum_{g} \frac{1}{Z_{t-1}} \pi(g) \exp\left(-\beta	 \sum_{j=1}^t \ind[g(a_j) \neq y_j] \right) 
\ = \ \frac{Z_t}{Z_{t-1}}
\end{align*}
Thus, if we observe $y_t = y_i$, then
\begin{align*}
Z_t^2 \avg(\pi_{t}) &= (Q_a^i Z_{t-1})^2 \sum_{g,g'} \frac{1}{(Q_a^i)^2} \pi_{t-1}(g) \pi_{t-1}(g') d(g,g') \exp\left(-\beta (\ind[g(a_t) \neq y_i] + \ind[g(a_t) \neq y_t]) \right)\\
&= \left( R_{ii} + e^{-2\beta} \sum_{j,k \neq i}R_{jk} + e^{-\beta} \cdot 2 \sum_{j \neq i} R_{ij} \right) Z_{t-1}^2 \avg(\pi_{t-1}) \\
&\leq  \left((1-\rho) + e^{-\beta} \rho  \right) Z_{t-1}^2 \avg(\pi_{t-1}) \ = \ \left( 1 - \rho(1-e^{-\beta}) \right) Z_{t-1}^2 \avg(\pi_{t-1}). \qedhere
\end{align*}
\end{proof}

Suppose we receive query/label pairs $(a_1, y_1), \ldots, (a_t, y_t)$ where the noise level at $a_i$ is $q_i$, then the true posterior distribution under Assumption~\ref{assump: Bayesian} is
\[ \nu_t(g) \ = \  \frac{1}{\widehat{Z}_t}\nu(g) \exp\left( - \sum_{i=1}^t\ind[g(a_i) \neq y_i] \ln \frac{1-q_i}{q_i} \right) \]
where $\widehat{Z}_t$ is the normalizing constant
\[ \widehat{Z}_t \ = \ \sum_{g} \nu(g) \exp\left( - \sum_{i=1}^t \ind[g(a_i) \neq y_i)] \ln \frac{1-q_i}{q_i} \right). \]
The following lemma will be useful in bounding this quantity.
\begin{lemma}
\label{lem: concentration log-odds}
Suppose $Y_1, \ldots, Y_t$ are independent random variables such that
\[ Y_i \ = \ 
\begin{cases}
\ln \frac{1-q_i}{q_i} & \text{ with probability } q_i \\
0 & \text{ with probability } 1-q_i
\end{cases} \]
With probability $1-\delta$, we have
\[ \sum_{i=1}^t Y_i \ \leq \ \sum_{i=1}^t q_i \ln \frac{1-q_i}{q_i} + \sqrt{t \ln \frac{2}{\delta}} \left(\ln \frac{2t}{\delta}\right). \qedhere   \]
\end{lemma}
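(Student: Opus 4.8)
The plan is a truncation argument that isolates the single source of trouble, namely that $\ln\frac{1-q_i}{q_i}$ blows up as $q_i\to 0$. First observe that $\E[Y_i]=q_i\ln\frac{1-q_i}{q_i}$, so the claimed inequality is equivalent to the one-sided deviation bound $\sum_{i=1}^t (Y_i-\E[Y_i])\le \sqrt{t\ln(2/\delta)}\,\ln(2t/\delta)$ holding with probability $1-\delta$. If all the $\ln\frac{1-q_i}{q_i}$ were bounded this would be an immediate consequence of Hoeffding's inequality; the point is that the indices for which this quantity is large are exactly those for which $Y_i=0$ with overwhelming probability.

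Concretely, I would split the index set into $S=\{i:q_i\le \delta/(2t)\}$ and its complement. For $i\in S$, a union bound shows that with probability at least $1-\delta/2$ we have $Y_i=0$ for all $i\in S$ simultaneously; on that event $\sum_{i\in S}Y_i=0\le\sum_{i\in S}\E[Y_i]$, where the last step uses that each summand $q_i\ln\frac{1-q_i}{q_i}$ is nonnegative (here we use that the noise levels satisfy $q_i\le 1/2$; any index with $q_i>1/2$, should one exist, has $Y_i\le 0$ deterministically and can just be discarded). For $i\notin S$ we have $q_i>\delta/(2t)$, hence $0\le Y_i\le \ln\frac{1-q_i}{q_i}\le\ln(1/q_i)<\ln(2t/\delta)$, so each such $Y_i$ lies in an interval of width at most $\ln(2t/\delta)$.

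Now apply Hoeffding's inequality to the at most $t$ independent bounded variables $\{Y_i-\E[Y_i]\}_{i\notin S}$: $\pr\bigl(\sum_{i\notin S}(Y_i-\E[Y_i])\ge\lambda\bigr)\le\exp\bigl(-2\lambda^2/(t\ln^2(2t/\delta))\bigr)$, and choosing $\lambda=\ln(2t/\delta)\sqrt{t\ln(2/\delta)/2}$ makes this at most $\delta/2$. A final union bound combines the two good events: with probability at least $1-\delta$, $\sum_{i=1}^t Y_i=\sum_{i\in S}Y_i+\sum_{i\notin S}Y_i\le\sum_{i\in S}\E[Y_i]+\sum_{i\notin S}\E[Y_i]+\lambda=\sum_{i=1}^t\E[Y_i]+\lambda$, and since $\lambda\le\sqrt{t\ln(2/\delta)}\,\ln(2t/\delta)$ the claim follows.

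I do not expect a genuine obstacle here; the plan is routine once the truncation threshold $\delta/(2t)$ is chosen so that it simultaneously (i) makes the dropped events rare after a union bound over at most $t$ indices and (ii) bounds the surviving $Y_i$ by $\ln(2t/\delta)$. The only point requiring a little care is the sign condition used to discard $\sum_{i\in S}\E[Y_i]$, which is exactly where the assumption $q_i\le 1/2$ enters.
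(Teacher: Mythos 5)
Your proof is correct and follows essentially the same route as the paper's: the same truncation threshold $\delta/(2t)$, the same union bound to kill the small-$q_i$ indices, and the same Hoeffding argument on the surviving bounded terms, followed by a final union bound. The only difference is cosmetic — you make explicit the implicit assumption $q_i \le 1/2$ needed for $\ln\frac{1-q_i}{q_i}\ge 0$, which the paper glosses over but which is consistent with its later use under Assumption~\ref{assump: classification noise}.
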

\begin{proof}
We begin by partitioning the random variables $Y_i$ into two groups. We say $Y_i$ is `small' if $q_i \leq \frac{\delta}{2t}$ and 'big' otherwise. Then with probability at least $1-\delta/2$, all small $Y_i$ satisfy $Y_i = 0$. Let us condition on this happening. 

\medskip

Now each big $Y_i$ takes values in $[0, \ln \frac{2t}{\delta}]$. By Hoeffding's inequality, we have that with probability at least $1-\delta/2$
\[ \sum_{i=1}^t Y_i \ \leq \ \sum_{i=1}^t \E[Y_i] + \sqrt{t \ln \frac{2}{\delta}} \left(\ln \frac{2t}{\delta}\right)\ \leq \  \sum_{i=1}^t q_i \ln \frac{1-q_i}{q_i} + \sqrt{t \ln \frac{2}{\delta}} \left(\ln \frac{2t}{\delta}\right). \qedhere \]
\end{proof}

Given the above, we can lower bound $\widehat{Z}_t$ under Assumption~\ref{assump: Bayesian}.

\begin{lemma}
\label{lem: normalizing constant lower bounds}
Let $\delta \in (0,1)$ and let $\G$ have graph dimension $d_G$. Suppose Assumption~\ref{assump: Bayesian} holds. If in the course of running \textsc{ndbal} we observe $m$ atoms, of which we query $a_1,\ldots, a_t$ where the noise level at $a_i$ is $q_i$, then with probability $1-\delta$ over the randomness of the responses we observe,
\begin{align*}
\log \frac{1}{\widehat{Z}_t} \ &\leq \  \log \frac{2}{\delta} + d_G \log \frac{em(|\Y| + 1)}{d_G} + \sum_{i=1}^t q_i \ln \frac{1-q_i}{q_t} + \sqrt{t \log \frac{3}{\delta}}\left(\log \frac{3t}{\delta} \right)
\end{align*}
\end{lemma}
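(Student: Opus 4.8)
The plan is to lower-bound $\widehat Z_t$ by restricting the defining sum to the set $V^*$ of structures that agree with $g^*$ on \emph{all} $m$ observed atoms, and then to control the two resulting terms separately using Lemma~\ref{lem: Bayesian posterior lower bound} and Lemma~\ref{lem: concentration log-odds}. Since the $t$ queried atoms $a_1,\dots,a_t$ are among the $m$ observed atoms, every $g\in V^*$ satisfies $g(a_i)=g^*(a_i)$, hence $\ind[g(a_i)\neq y_i]=\ind[g^*(a_i)\neq y_i]$ for $i=1,\dots,t$. Dropping the (nonnegative) terms with $g\notin V^*$ and factoring out the now-common exponent,
\[ \widehat Z_t \ \geq \ \sum_{g\in V^*}\nu(g)\exp\!\Big(-\sum_{i=1}^t \ind[g^*(a_i)\neq y_i]\ln\tfrac{1-q_i}{q_i}\Big) \ = \ \nu(V^*)\exp\!\Big(-\sum_{i=1}^t \ind[g^*(a_i)\neq y_i]\ln\tfrac{1-q_i}{q_i}\Big), \]
so that, taking logarithms,
\[ \log\frac{1}{\widehat Z_t} \ \leq \ \log\frac{1}{\nu(V^*)} \ + \ \sum_{i=1}^t \ind[g^*(a_i)\neq y_i]\,\ln\frac{1-q_i}{q_i}. \]

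Next I bound the two terms on the right. For the first, I apply Lemma~\ref{lem: Bayesian posterior lower bound} with the $m$ observed atoms in the role of $a_1,\dots,a_m$ and with $k=|\Y|$; choosing its free parameter $c=\ln(2/\delta)$ gives, with probability at least $1-\delta/2$,
\[ \log\frac{1}{\nu(V^*)} \ \leq \ \log\frac{2}{\delta} + d_G\log\frac{em(|\Y|+1)}{d_G}. \]
For the second term, note that $\ind[g^*(a_i)\neq y_i]$ is a Bernoulli$(q_i)$ random variable by the definition of the noise level $q_i$, so $Y_i:=\ind[g^*(a_i)\neq y_i]\ln\frac{1-q_i}{q_i}$ has exactly the form assumed in Lemma~\ref{lem: concentration log-odds}. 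Applying that lemma with failure probability of order $\delta$ yields, with probability $1-\delta/2$,
\[ \sum_{i=1}^t Y_i \ \leq \ \sum_{i=1}^t q_i\ln\frac{1-q_i}{q_i} + \sqrt{t\log\tfrac{3}{\delta}}\,\Big(\log\tfrac{3t}{\delta}\Big), \]
after absorbing the harmless numerical constants. A union bound over these two events, combined with the displayed inequality for $\log(1/\widehat Z_t)$, gives the claimed bound.

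The two points requiring a little care are the following. First, $a_1,\dots,a_t$ are chosen adaptively by \textsc{select} based on the earlier responses, so the $y_i$ (and even the $q_i$) are not a priori independent; the clean way to invoke Lemma~\ref{lem: concentration log-odds} is to condition on the realized sequence of queried atoms, after which each $y_i$ depends only on $a_i$, or equivalently to replace the plain Hoeffding step inside that lemma with its Azuma--Hoeffding analogue applied to the martingale $\sum_i (Y_i-\E[Y_i\mid \F_{i-1}])$. Second, we implicitly use $q_i<1/2$ so that $\ln\frac{1-q_i}{q_i}>0$, which is what keeps the inequalities pointing the right way and places $Y_i$ in $[0,\ln\frac{3t}{\delta}]$; this is guaranteed by the bounded-noise assumption governing this noisy Bayesian setting. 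Neither issue is a genuine obstacle, but the adaptivity of the queried atoms is the step that most deserves explicit justification.
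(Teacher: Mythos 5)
Your proof matches the paper's own argument essentially step for step: lower-bound $\widehat{Z}_t$ by restricting the sum to the agreement set $V^*$ over all $m$ observed atoms, control $\log(1/\nu(V^*))$ via Lemma~\ref{lem: Bayesian posterior lower bound} with probability $1-\delta/2$, control the log-odds sum via Lemma~\ref{lem: concentration log-odds} with probability $1-\delta/2$, and combine by a union bound. Your explicit factoring of the common exponent and your remark about the adaptivity of the queried atoms (handled by conditioning or an Azuma--Hoeffding variant) are points the paper leaves implicit, but they do not change the route.
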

\begin{proof}
By Assumption~\ref{assump: Bayesian}, we know $g^* \sim \nu$. Let $U$ be the set of $m$ atoms observed in running \textsc{ndbal} and let $V^* = \{ g \in \G \, : \, g(a) = g^*(a) \text{ for } a \in U \}$. By Lemma~\ref{lem: Bayesian posterior lower bound}, we have with probability $1-\delta/2$
\[ \log \frac{1}{\nu(V^*)} \ \leq \ \log \frac{2}{\delta} + d_G \log \frac{em(|\Y| + 1)}{d_G} . \]
Now let $g \in V^*$ and say the responses on atoms $a_1,\ldots, a_t$ are $y_1,\ldots, y_t$, respectively. By Lemma~\ref{lem: concentration log-odds}, we have with probability $1-\delta/2$
\[ \sum_{i=1}^t \ind[g(a_i) \neq y_i] \ln \frac{1-q_i}{q_i} \ \leq \ \sum_{i=1}^t q_i \ln \frac{1-q_i}{q_t} + \sqrt{t \log \frac{6}{\delta}}\left(\log \frac{6t}{\delta}  \right). \]
Combining the above concentration results with the inequality
\begin{align*}
\widehat{Z}_t \ &\geq \ \sum_{g \in V^*} \nu(g) \exp \left(- \sum_{i=1}^t \ind[g(a_i) \neq y_i] \ln \frac{1-q_i}{q_i} \right)
\end{align*}
gives us the lemma.
\end{proof}

We will assume that the noise distribution is restricted to classification noise.

\begin{assump}
\label{assump: classification noise}
There exists a $q \in (0,1)$ and $g^* \in \G$ such that $\eta(g^*(a) \, | \, a) = 1 - q$.
\end{assump}

If we know the noise level, then the appropriate setting of $\beta$ is $\ln \frac{1-q}{q}$, in which case we recover the bound
\begin{equation}
\label{eqn: diameter bounds posterior distance}
\D(\pi_t, \nu_t) \ \leq \ \lambda^2 \avg(\pi_t).
\end{equation}

Given the above, we can now prove the following theorem.

\begin{thm}
Suppose $\G$ has average splitting index $(\rho, \epsilon/(2\lambda^2), \tau)$ and graph dimension $d_G$. If Assumptions~\ref{assump: Bayesian} and~\ref{assump: classification noise} hold, $\gamma = \frac{\rho}{2}\cdot\frac{1-2q}{1-q} - q \ln \frac{1-q}{q} > 0$, and $\beta = \ln \frac{1-q}{q}$, then with probability $1-\delta$ modified {\sc ndbal} terminates with a distribution $\pi_t$ satisfying $D(\pi_t, \nu_t) \leq \epsilon$ while using the following resources:
\begin{itemize}
	\item[(a)] less than $T = {O}\left(\frac{1}{\gamma} \log^3 \frac{1}{\gamma \delta} + \frac{d_G}{\gamma} \log \left( \frac{ d_G \lambda |\Y|}{\epsilon \tau \delta} \log \left(  \frac{ d_G \lambda |\Y|}{\epsilon \tau \delta} \right) \right)  \right)$ rounds with one query per round,
	\item[(b)] $m_t \leq O \left(\frac{1}{\tau} \log \frac{t}{\delta} \right)$ atoms drawn per round, and
	\item[(c)] $n_t \leq O \left(\left(\frac{\lambda^2}{\epsilon \rho} \right) \log \frac{(m_t + |\Y|)t}{\delta} \right)$ structures sampled per round.
\end{itemize}
\end{thm}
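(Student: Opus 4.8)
The plan is to run the same argument as for Theorem~\ref{thm: noiseless Bayesian convergence}, substituting each noiseless ingredient with its noisy analogue proved above. First I would fix the stopping rule: run modified \textsc{ndbal} with the estimator of Lemma~\ref{lem: stopping criterion} at scale $\epsilon/\lambda^2$, i.e.\ at the start of each round draw $\tilde{O}(\lambda^2/\epsilon)$ pairs and halt once the empirical average diameter falls below $3\epsilon/(4\lambda^2)$. By Lemma~\ref{lem: stopping criterion} and a union bound over rounds, with probability $1-\delta/4$ we never halt while $\avg(\pi_t) > \epsilon/\lambda^2$ and are forced to halt once $\avg(\pi_t) \le \epsilon/(2\lambda^2)$; on halting we have $\avg(\pi_t)\le \epsilon/\lambda^2$, so equation~\eqref{eqn: diameter bounds posterior distance} gives $D(\pi_t,\nu_t)\le \lambda^2\avg(\pi_t)\le \epsilon$, the desired guarantee. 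Next, exactly as in Theorem~\ref{thm: noiseless Bayesian convergence}, while $\avg(\pi_t) > \epsilon/(2\lambda^2)$, drawing $m_t = O(\tau^{-1}\log(t/\delta))$ atoms ensures (with probability $1-\delta/4$ over all rounds, by average splittability plus a union bound) that one of them $\rho$-average splits $\pi_t$, and then Lemma~\ref{lem: select lemma} with $\alpha = 1/2$ and $\avg(\pi_t)\ge \epsilon/(2\lambda^2)$ shows \textsc{select} returns a $(\rho/2)$-average-splitting atom after sampling $n_t = O\!\left((\lambda^2/(\epsilon\rho))\log((m_t+|\Y|)t/\delta)\right)$ pairs; this yields items (b) and (c).

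The heart of the argument is the potential $Z_t^2\avg(\pi_t)$. By Lemma~\ref{lem: Bayesian potential decrease}, each round's $(\rho/2)$-average split multiplies this quantity by at most $1-(\rho/2)(1-e^{-\beta})$, so since $Z_0=1$ and $\avg(\pi_0)\le 1$, after $T$ non-terminating rounds $Z_T^2\avg(\pi_T) \le \exp(-(\rho/2)(1-e^{-\beta})T)$, and with $\beta = \ln\frac{1-q}{q}$ we have $1-e^{-\beta} = \frac{1-2q}{1-q}$. To convert this into a bound on $\avg(\pi_T)$ alone I would lower-bound $Z_T$: because the exponential weight $\beta$ equals the true log-odds weight $\ln\frac{1-q}{q}$ and Assumption~\ref{assump: Bayesian} gives $\pi(g)\ge \nu(g)/\lambda$ pointwise, we get $Z_T \ge \widehat{Z}_T/\lambda$, where $\widehat{Z}_T$ is the normalizer of the true posterior $\nu_T$. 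Lemma~\ref{lem: normalizing constant lower bounds} then bounds, with probability $1-\delta/4$, $\log(1/\widehat{Z}_T)$ by $O(\log(1/\delta)) + d_G\log\!\big(em^{(T)}(|\Y|+1)/d_G\big) + Tq\ln\frac{1-q}{q} + \sqrt{T\log(1/\delta)}\,\log(T/\delta)$, where $m^{(T)} = \sum_{t\le T} m_t \le (T/\tau)\log(T(T+1)/\delta)$.

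Finally I would combine: $\log\avg(\pi_T) \le -(\rho/2)(1-e^{-\beta})T + 2\log(1/Z_T)$, substitute the bounds, and collect the terms linear in $T$, which produces a net negative coefficient proportional to $\gamma = \frac{\rho}{2}\cdot\frac{1-2q}{1-q} - q\ln\frac{1-q}{q} > 0$, pitted against lower-order terms $O\!\big(d_G\log(m^{(T)}|\Y|/d_G)\big)$, $O(\log(1/\delta))$, and the noise-fluctuation term $O(\sqrt{T\log(1/\delta)}\,\log(T/\delta))$. Requiring $\log\avg(\pi_T) \le \log(\epsilon/(2\lambda^2))$ and solving the resulting transcendental inequality in $T$ — where the $\sqrt{T}\log T$ term, once inverted, forces an extra $\log^2$ factor and yields the $\frac{1}{\gamma}\log^3\frac{1}{\gamma\delta}$ summand, while the graph-dimension term yields the $\frac{d_G}{\gamma}\log(\cdots)$ summand — shows the value of $T$ in the statement makes $\avg(\pi_T) < \epsilon/(2\lambda^2)$, contradicting non-termination; hence we terminate by round $T$, and a union bound over the four bad events ($\delta/4$ each) finishes the proof. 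The main obstacle is exactly this last step: bookkeeping the transcendental inequality with the extra $\sqrt{T}\log T$ noise term present (which is where $\gamma>0$ is essential and where the cubic-log factor originates); everything else is a mechanical transcription of the noiseless proof given the lemmas.
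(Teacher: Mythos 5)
Your proposal is correct and follows essentially the same route as the paper's proof: the same stopping rule via Lemma~\ref{lem: stopping criterion}, the same splittability-plus-\textsc{select} argument for items (b) and (c), the same contraction of the potential $Z_t^2\,\avg(\pi_t)$ via Lemma~\ref{lem: Bayesian potential decrease}, the same lower bound on the normalizer via Lemma~\ref{lem: normalizing constant lower bounds}, and the same final transcendental inequality in $T$ whose solvability is exactly where $\gamma>0$ enters. Your explicit intermediate step $Z_T \geq \widehat{Z}_T/\lambda$ (linking the algorithm's normalizer under prior $\pi$ to the true posterior's normalizer under $\nu$) is a small point of extra care that the paper glosses over, but otherwise the two arguments coincide.
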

\begin{proof}
If we use the stopping criterion from Lemma~\ref{lem: stopping criterion} with the threshold $3\epsilon/4\lambda^2$, then at the expense of drawing an extra $\frac{48\lambda^2}{\epsilon} \log \frac{t(t+1)}{\delta}$ hypotheses for each round $t$, we are guaranteed that with probability $1 - \delta$ if we ever encounter a round $t$ in which $\avg(\pi_t) \leq \epsilon/(2\lambda^2)$ then we terminate and we also never terminate whenever $\avg(\pi_K) > \epsilon$. Thus if we do ever terminate at some round $t$, equation~\eqref{eqn: diameter bounds posterior distance} guarantees
\[ D(\pi_t, \nu_t) \ \leq \ \epsilon. \]

Note that if we draw $m_t \geq \frac{1}{\tau} \log \frac{t(t+1)}{\delta}$ atoms per round, then with probability $1-\delta$ one of them will $\rho$-average split $\pi_t$ if $\avg(\pi_t) > \epsilon/(2\lambda^2)$. Conditioned on this happening, Lemma~\ref{lem: select lemma} guarantees that that with probability $1-\delta$ {\sc select} finds a point that $\rho/2$-average splits $\pi_t$ while drawing at most $O\left(\frac{\lambda^2}{\epsilon \rho} \log \frac{(m_t + |\Y|)t(t+1)}{\delta}\right)$. 

If after $T$ rounds we still have not terminated, then $\avg(\pi_T) > \epsilon/(2\lambda^2)$. By Lemma~\ref{lem: Bayesian potential decrease} we also know
\[ Z_T^2 \, \avg(\pi_T) \ \leq \ \exp\left(-\rho(1-e^{-\beta})T/2 \right) \ = \  \exp\left(-\frac{\rho T}{2} \cdot\frac{1-2q}{1-q} \right). \]
By Lemma~\ref{lem: normalizing constant lower bounds}, we have that for all rounds $t \geq 1$, with probability $1-\delta$,
\[ \log\frac{1}{Z_t} \ \leq \ \log \frac{2t(t+1)}{\delta} + d_G \log \frac{em^{(t)}(|\Y| + 1)}{d_G} + t q \ln \frac{1-q}{q} + \sqrt{t \log \frac{4t(t+1)}{\delta}}\left(\log \frac{4t^2(t+1)}{\delta} \right) .\]
Where $m^{(t)}$ is the number of atoms sampled up to time $t$, which can be bounded as 
\[ m^{(t)} \ \leq \ \frac{t}{\tau} \log \frac{t(t+1)}{\delta} .
\]
Putting this together, we can conclude that $\avg(\pi_T) \leq \epsilon/(2\lambda^2)$ whenever
\begin{align*}
T \ \geq \ \max \frac{2}{\gamma} & \left\{  \sqrt{T \log \frac{4T(T+1)}{\delta}}\left(\log \frac{4T^2(T+1)}{\delta} \right), \right.\\
& \; \; \; \left. \log \frac{2T(T+1)}{\delta} + d_G \log\left( \frac{e(|\Y| + 1)}{d_G} \cdot \frac{T}{\tau} \log \frac{T(T+1)}{\delta}  \right) + \log \frac{2\lambda^2}{\epsilon} \right\} .
\end{align*}

Note that $T \geq \frac{2}{\gamma} \sqrt{T \log \frac{4T(T+1)}{\delta}} \left( \log \frac{4T^2(T+1)}{\delta} \right)$ whenever $T \geq \frac{4}{\gamma^2} \log^3\left( \frac{4 T^2 (T+1)}{\delta} \right)$ and this is satisfied for 
\[ T \geq \frac{4c_1}{\gamma^2} \left(\log^3 \frac{4}{\gamma^2} + \log^3 \frac{4}{\delta} \right) \] 
where $c_1 = 2^{22}$ suffices. 

Further, we have $T \geq \frac{2}{\gamma}\left( \log \frac{2T(T+1)}{\delta} +d_G \log\left( \frac{e(|\Y| + 1)}{d_G} \cdot \frac{T}{\tau} \log \frac{T(T+1)}{\delta}  \right) + \log \frac{2\lambda^2}{\epsilon} \right)$ is satisfied whenever we have $T \geq  \frac{2}{\gamma} \left( (1+d_G) \log \frac{2T(T+1)}{\delta} +d_G \log\left( \frac{e(|\Y| + 1)}{ \tau d_G}  \right) + \log \frac{2\lambda^2}{\epsilon} \right)$. We can achieve this with 
\[ T \geq \frac{2 c_2}{\gamma} \left(d_G \log \frac{e(|\Y| + 1)}{\tau d_G} + \log \frac{2\lambda^2}{\epsilon} + c_2 (1 + d_G) \log \left(\frac{4(1 + d_G)}{\gamma \delta} \left( d_G \log \frac{e(|\Y| + 1)}{\tau d_G} + \log \frac{2\lambda^2}{\epsilon}  \right)    \right)    \right) \]
where $c_2 = 50$ suffices.
\end{proof}

\end{document}